\theoremstyle{plain}
\newtheorem{theorem}{Theorem}[section]
\newtheorem{lemma}[theorem]{Lemma}
\newtheorem{corollary}[theorem]{Corollary}
\theoremstyle{definition}
\newtheorem{definition}[theorem]{Definition}
\theoremstyle{remark}
\newtheorem{remark}[theorem]{Remark}
\newacronym{API}{API}{Application Programming Interface}
\newacronym{CPWL}{CPWL}{Continuous Piece-Wise Linear}
\newacronym{DiCE}{DiCE}{Diverse Counterfactual Explanations}
\newacronym{MLaaS}{MLaaS}{Machine Learning as a Service}
\newacronym{ReLU}{ReLU}{Rectified Linear Units}
\def\sA{{\mathbb{A}}}
\def\sB{{\mathbb{B}}}
\def\sD{{\mathbb{D}}}
\def\sG{{\mathbb{G}}}
\def\sH{{\mathbb{H}}}
\def\sM{{\mathbb{M}}}
\def\sP{{\mathbb{P}}}
\def\sR{{\mathbb{R}}}
\def\sU{{\mathbb{U}}}
\newcommand{\ex}[1]{\mathbb{E}\left[{#1}\right]} 
\newcommand{\threshold}[1]{\left\lfloor{#1}\right\rceil} 
\DeclareMathOperator*{\argmin}{\arg\;\min} 
\title{Model Reconstruction Using Counterfactual Explanations: A Perspective From Polytope Theory}
\author{%
  Pasan Dissanayake\\
  University of Maryland\\
  College Park, MD \\
  \texttt{pasand@umd.edu}
  \And
  Sanghamitra Dutta \\
  University of Maryland\\
  College Park, MD \\
  \texttt{sanghamd@umd.edu}
}
\begin{document}

\maketitle

\begin{abstract}
Counterfactual explanations provide ways of achieving a favorable model outcome with minimum input perturbation. However, counterfactual explanations can also be leveraged to reconstruct the model by strategically training a surrogate model to give similar predictions as the original (target) model. In this work, we analyze how model reconstruction using counterfactuals can be improved by
further leveraging the fact that the counterfactuals also lie quite close to the decision boundary. Our main contribution is to derive novel theoretical relationships between the error in model reconstruction and the number of counterfactual queries required using polytope theory. Our theoretical analysis leads us to propose a strategy for model reconstruction that we call Counterfactual Clamping Attack (CCA) which trains a surrogate model using a unique loss function that treats counterfactuals differently than ordinary instances. Our approach also alleviates the related problem of \emph{decision boundary shift} that arises in existing model reconstruction approaches when counterfactuals are treated as ordinary instances. Experimental results demonstrate that our strategy improves fidelity between the target and surrogate model predictions on several datasets.
\end{abstract}

\section{Introduction}
%
Counterfactual explanations (also called \emph{counterfactuals}) have emerged as a burgeoning area of research~\citep{watcherCounterfactuals,guidottiCFReview,barocas2020hidden,dickersonCFReview,karimiCFSurvey} for providing guidance on how to obtain a more favorable outcome from a machine learning model, e.g.,  increase your income by 10K to qualify for the loan. Interestingly, counterfactuals can also reveal information about the underlying model, posing a nuanced interplay between model privacy and explainability~\citep{ulrichModelExtract,wangDualCF}. Our work provides the first theoretical analysis between the error in model reconstruction using counterfactuals and the number of counterfactuals queried for, through the lens of polytope theory. 

Model reconstruction using counterfactuals can have serious implications in \acrfull{MLaaS} platforms that allow users to query a model for a specified cost~\citep{modelExtractSurveyGong}. An adversary may be able to ``steal'' the model by querying for counterfactuals and training a surrogate model to provide similar predictions as the target model, a practice also referred to as \emph{model extraction}. On the other hand, model reconstruction could also be beneficial for \emph{preserving applicant privacy}, e.g., if an applicant wishes to evaluate their chances of acceptance from crowdsourced information before formally sharing their own application information with an institution, often due to resource constraints or having a limited number of attempts to apply (e.g., applying for credit cards reduces the credit score~\citep{capitalOneCreditScore}). Our goal is to formalize \emph{how faithfully can one reconstruct an underlying model given a set of counterfactual queries.}

\begin{wrapfigure}[10]{r}{0.33\textwidth}
    \includegraphics[width=0.33\textwidth]{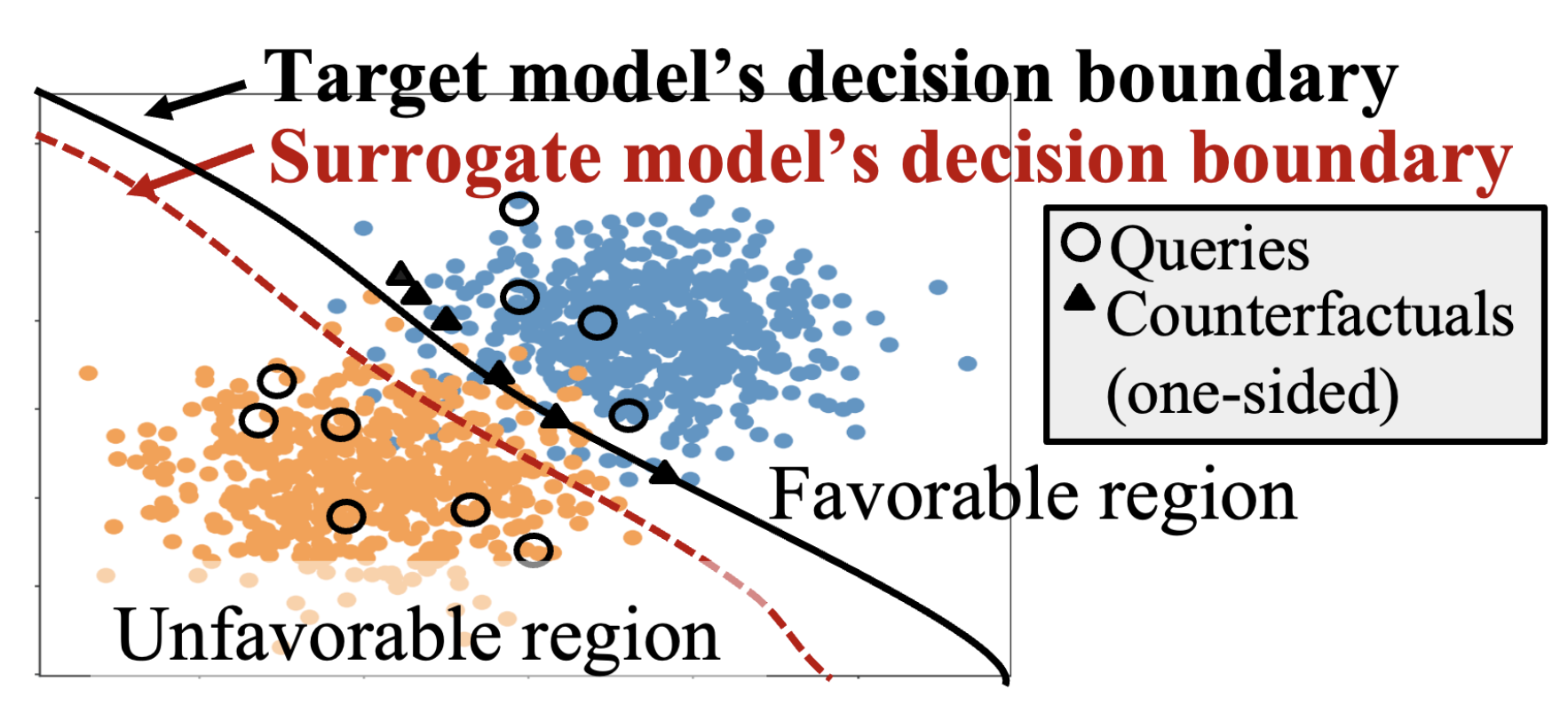}
  \caption{Decision boundary shift when counterfactuals are treated as ordinary labeled points.}
  \label{fig_boundaryShift}
\end{wrapfigure} 
An existing approach for model reconstruction  is to treat counterfactuals as ordinary labeled points and use them for training a surrogate model~\citep{ulrichModelExtract}. While this may work for a well-balanced counterfactual queries from the two classes lying roughly equidistant to the decision boundary, it is not the same for unbalanced datasets. The surrogate decision boundary might not always overlap with that of the target model (see Fig. \ref{fig_boundaryShift}), a problem also referred to as \emph{a decision boundary shift}. This is due to the learning process where the boundary is kept far from the training examples (margin) for better generalization \citep{shokriPrivacyRiskExplanations}. 

The decision boundary shift is aggravated when the system provides only \emph{one-sided counterfactuals}, i.e., counterfactuals only for queries with unfavorable predictions. If one were allowed to query for two-sided counterfactuals, the decision boundary shift may be tackled by querying for the counterfactual of the counterfactual~\citep{wangDualCF}. However, such strategies cannot be applied when only one-sided counterfactuals are available, which is more common and also a more challenging use case for model reconstruction, e.g., counterfactuals are only available for the rejected applicants to get accepted for a loan but not the other way. 

In this work, we analyze how model reconstruction using counterfactuals can be improved by leveraging the fact that the counterfactuals are quite close to the decision boundary. We provide novel theoretical analysis for model reconstruction using polytope theory, addressing an important knowledge gap in the existing literature. We demonstrate reconstruction strategies that alleviate the decision-boundary-shift issue for one-sided counterfactuals. In contrast to existing strategies \citet{ulrichModelExtract} and \citet{wangDualCF} which require the system to provide counterfactuals for queries from both sides of the decision boundary, we are able to reconstruct using only one-sided counterfactuals, a problem that we also demonstrate to be theoretically more challenging than the two-sided case (see Corollary~\ref{cor_reluLinearModels}). In summary, our contributions can be listed as follows:

\begin{itemize}[itemsep=0cm, leftmargin=0cm, topsep=0cm]
\item[] \textbf{Fundamental guarantees on model reconstruction using counterfactuals:} 
We derive novel theoretical relationships between the error in model reconstruction and the number of counterfactual queries (query complexity) under three settings: (i) Convex decision boundaries and closest counterfactuals (Theorem \ref{thm_randomPolytopeComplexity}): We rely on convex polytope approximations to derive an \emph{exact} relationship between expected model reconstruction error and query complexity; (ii)  \acrshort{ReLU} networks and closest counterfactuals (Theorem \ref{thm_cpwlComplexity}): Relaxing the convexity assumption, we provide \emph{probabilistic} guarantees on the success of model reconstruction as a function of number of counterfactual queries; and (iii) Beyond closest counterfactuals: We provide approximate guarantees for a broader class of models, including \acrshort{ReLU} networks and locally-Lipschitz continuous models (Theorem \ref{thm_reluLipschitzClamp}). 

 \item[] \textbf{Model reconstruction strategy with unique loss function:}
    We devise a reconstruction strategy -- that we call Counterfactual Clamping Attack (CCA) -- that exploits only the fact that the counterfactuals \emph{lie reasonably close to the decision boundary, but need not be exactly the closest.}

    \item[] \textbf{Empirical validation:}
    We conduct experiments on both synthetic datasets, as well as, four real-world datasets, namely, Adult Income \citep{adultIncome}, COMPAS \citep{compas}, DCCC \citep{defaultCredit}, and HELOC \citep{heloc}. Our strategy outperforms the existing baseline \citep{ulrichModelExtract} over all these datasets (Section \ref{sec_experiments}) using one-sided counterfactuals, i.e., counterfactuals only for queries from the unfavorable side of the decision boundary. We also include additional experiments to observe the effects of model architecture, Lipschitzness, and other types of counterfactual generation methods as well as  ablation studies with other loss functions. A python-based implementation is available at: \url{https://github.com/pasandissanayake/model-reconstruction-using-counterfactuals}.
\end{itemize}

\textbf{\textit{Related Works:}}
A plethora of counterfactual-generating mechanisms has been suggested in  existing literature \citep{guidottiCFReview,barocas2020hidden,dickersonCFReview,karimiCFSurvey,karimiPlausibleCFs,DiCE,dhurandharSparseCFs,deutchConstrainedCFs,Mishra_arXiv_2021}. Related works that focus on leaking information about the dataset from counterfactual explanations include membership inference attacks \citep{himabinduMemInf} and explanation-linkage attacks \citep{goethals2023privacy}. \citet{shokriPrivacyRiskExplanations} examine membership inference from other types of explanations, e.g., feature-based. 
Model reconstruction (without counterfactuals) has been the topic of a wide array of studies (see surveys \citet{modelExtractSurveyGong} and \citet{oliynyk2023survey}). Various mechanisms such as equation solving \citep{stealingMLModels} and active learning have been considered \citep{activeThief}.

Model inversion  \citep{inverseNet, struppekPlugAndPlayModelInversion, zhaoExploitingModelInversion} is another form of extracting information about a black box model, under limited access to the model aspects. In contrast to model extraction where the goal is to replicate the model itself, in model inversion an adversary tries to extract the representative attributes of a certain class with respect to the target model. In this regard, \citet{zhaoExploitingModelInversion} focuses on exploiting explanations for image classifiers such as saliency maps to improve model inversion attacks. \citet{struppekPlugAndPlayModelInversion} proposes various methods based on Generative Adversarial Networks to make model inversion attacks robust (for instance, to distributional shifts) in the domain of image classification.

\citet{milli2019model} looks into model reconstruction using other types of explanations, e.g., gradients. \citet{kamalikaAuditing} explore algorithmic auditing using counterfactual explanations, focusing on linear classifiers and decision trees. Using counterfactual explanations for model reconstruction has received limited attention, with the notable exceptions of \citet{ulrichModelExtract} and \citet{wangDualCF}. \citet{ulrichModelExtract} suggest using counterfactuals as ordinary labeled examples while training the surrogate model, leading to decision boundary shifts, particularly for unbalanced query datasets (one-sided counterfactuals). \citet{wangDualCF} introduces a strategy of mitigating this issue by further querying for the counterfactual of the counterfactual. However, both these methods require the system to provide counterfactuals for queries from both sides of the decision boundary. Nevertheless, a user with a favorable decision may not usually require a counterfactual explanation, and hence a system providing one-sided counterfactuals might be more common, wherein lies our significance. While model reconstruction (without counterfactuals) has received interest from a theoretical perspective \citep{stealingMLModels,practicalBlackBoxAttacks,milli2019model}, model reconstruction involving counterfactual explanations lack such a theoretical understanding. Our work theoretically analyzes model reconstruction using polytope theory and proposes novel strategies thereof, also addressing the decision-boundary shift issue.



\section{Preliminaries}
\textbf{Notations:} We consider binary classification models $m$ that take an input value $ \boldsymbol{x} \in  \mathbb{R}^d$ and output a probability between $0$ and $1$. The final predicted class is denoted by $\threshold{m(\boldsymbol{x})} \in \{0,1\}$ which is obtained by thresholding the output probability at $0.5$ as follows:
$\threshold{m(\boldsymbol{x})}=\mathds{1}[m(\boldsymbol{x})\geq 0.5]$ where $\mathds{1}[\cdot]$ denotes the indicator function. Throughout the paper, we denote the output probability by $m(\boldsymbol{x})$ and the corresponding thresholded output by $\threshold{m(\boldsymbol{x})}$. Consequently, the decision boundary of the model $m$ is the $(d-1)$-dimensional hypersurface (generalization of surface in higher dimensions; see Definition \ref{def_hypersurface}) in the input space, given by $\partial\sM = \{\boldsymbol{x}:m(\boldsymbol{x})=0.5\}$. We call the region where $\threshold{m(\boldsymbol{x})}=1$ as the \emph{favorable region} and the region where $\threshold{m(\boldsymbol{x})}=0$ as the \emph{unfavorable region}. We always state the convexity/concavity of the decision boundary with respect to the favorable region (i.e., the decision boundary is convex if the set $\sM = \{\boldsymbol{x}\in\sR^d:\threshold{m(\boldsymbol{x})}=1\}$ is convex). We assume that upon knowing the range of values for each feature, the $d-$dimensional input space can be normalized so that the inputs lie within the set $[0,1]^d$ (the $d-$dimensional unit hypercube), as is common in  literature \citep{liuMonotonicity,stealingMLModels,nnRobustCFs,blackSNS}. We let $g_m$ denote the counterfactual generating mechanism corresponding to the model $m$. 
\begin{definition}[Counterfactual Generating Mechanism]
\label{gm_definition}
    Given a cost function $c:[0,1]^d\times[0,1]^d \rightarrow \sR^+_0$ for measuring the quality of a counterfactual, and a model $m$, the corresponding counterfactual generating mechanism is the mapping $g_m:[0,1]^d\rightarrow [0,1]^d$ specified as follows: $g_m(\boldsymbol{x})= \argmin_{\boldsymbol{w}\in[0,1]^d } c(\boldsymbol{x},\boldsymbol{w}),$ such that $\threshold{m(\boldsymbol{x})} \neq \threshold{m(\boldsymbol{w})}$.
\end{definition}
The cost $c(\boldsymbol{x},\boldsymbol{w})$ is selected based on specific desirable criteria, e.g., $c(\boldsymbol{x},\boldsymbol{w})=||\boldsymbol{x}-\boldsymbol{w}||_p$, with $||\cdot||_p$ denoting the $L_p$-norm. Specifically, $p=2$ leads to the following definition of the \textit{closest counterfactual}  \citep{watcherCounterfactuals, growingSpheres, DiCE}.
\begin{definition}[Closest Counterfactual]
    When $c(\boldsymbol{x},\boldsymbol{w}) \equiv ||\boldsymbol{x}-\boldsymbol{w}||_2$, the resulting counterfactual generated using $g_m$ as per Definition~\ref{gm_definition} is called the closest counterfactual.
\end{definition}
Given a model $m$ and a counterfactual generating method $g_m$, we define the inverse counterfactual region $\sG$ for a subset $\sH \subseteq [0,1]^d$ to be the region whose counterfactuals under $g_m$ fall in $\sH$.
\begin{definition}[Inverse Counterfactual Region]
    The inverse counterfactual region $\sG_{m,g_m}$ of $\sH\subseteq [0,1]^d$ is the the region defined as: $\sG_{m,g_m}(\sH) = \{\boldsymbol{x} \in [0,1]^d : g_m(\boldsymbol{x}) \in \sH \}.$
\end{definition}%
\begin{wrapfigure}[8]{r}{0.4\textwidth}
    \vspace{-0.2cm}
\includegraphics[width=0.4\textwidth]{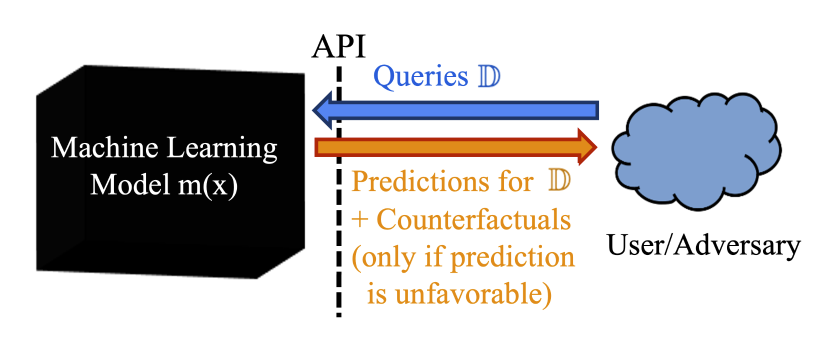}
    \caption{Problem setting 
    \label{fig_mlaas}}
\end{wrapfigure}
\textbf{Problem Setting:} Our problem setting involves a target model $m$ which is pre-trained and assumed to be hosted on a \acrshort{MLaaS} platform (see Fig.~\ref{fig_mlaas}). Any user can query it with a set of input instances $\sD\subseteq [0,1]^d$ (also called counterfactual queries) and will be provided with the set of predictions, i.e., $\{\threshold{m(\boldsymbol{x})}: \boldsymbol{x}\in\sD\}$, and a set of \emph{one-sided} counterfactuals for the instances whose predicted class is $0$, i.e., $\{g_m(\boldsymbol{x}):\boldsymbol{x}\in\sD, \threshold{m(\boldsymbol{x})}=0\}$. Note that, by the definition of a counterfactual, $\threshold{m(g_m(\boldsymbol{x}))}=1$ for all $\boldsymbol{x}$ with $\threshold{m(\boldsymbol{x})}=0$.
The \textbf{goal} of the user is to train a surrogate model to achieve a certain level of performance with as few queries as possible. In this work, we use \emph{fidelity} as our performance metric for model reconstruction\footnote{Performance can be evaluated using accuracy or fidelity \citep{usenixAccFid}. Accuracy is a measure of how well the surrogate model can predict the true labels over the data manifold of interest. While attacks based on both measures have been proposed in literature, fidelity-based attacks have been deemed more useful as a first step in designing future attacks \citep{usenixAccFid, practicalBlackBoxAttacks}.}. 
\begin{definition}[Fidelity \citep{ulrichModelExtract}]
    \label{def_fidelity}
    With respect to a given target model $m$ and a reference dataset $\sD_\text{ref}\subseteq[0,1]^d$, the fidelity of a surrogate model $\Tilde{m}$ is given by
    \begin{equation*}
    {\rm Fid}_{m,\sD_\text{ref}}(\Tilde{m}) = \frac{1}{\left|\sD_\text{ref}\right|} \sum_{\boldsymbol{x}\in \sD_\text{ref}} \mathds{1}\left[\threshold{m(\boldsymbol{x})} = \threshold{\Tilde{m}(\boldsymbol{x})} \right] .
    \end{equation*}
\end{definition}
\textbf{Background on Geometry of Decision Boundaries:} Our theoretical analysis employs arguments based on the geometry of the involved models' decision boundaries. We assume the decision boundaries are hypersurfaces. A hypersurface is a generalization of a surface into higher dimensions, e.g., a line or a curve in a 2-dimensional space, a surface in a 3-dimensional space, etc.
\begin{definition}[Hypersurface, \citet{manifoldsLee}]
\label{def_hypersurface}
    A hypersurface is a $(d-1)$-dimensional sub-manifold embedded in $\sR^d$, which can also be denoted by a single implicit equation $\mathcal{S}(\boldsymbol{x})=0$ where $\boldsymbol{x} \in \mathbb{R}^d$.
\end{definition}
We focus on the properties of hypersurfaces which are ``touching'' each other, as defined next.
\begin{definition}[Touching Hypersurfaces]
    \label{def_touching}
    Let $\mathcal{S}(\boldsymbol{x})=0$ and $\mathcal{T}(\boldsymbol{x})=0$ denote two differentiable hypersurfaces in $\mathbb{R}^d$. $\mathcal{S}(\boldsymbol{x})=0$ and $\mathcal{T}(\boldsymbol{x})=0$ are said to be touching each other at the point $\boldsymbol{w}$ if and only if $\mathcal{S}(\boldsymbol{w}) = \mathcal{T}(\boldsymbol{w}) = 0$, and there exists a non-empty neighborhood $\mathcal{B}_{\boldsymbol{w}}$ around $\boldsymbol{w}$, such that $\forall \boldsymbol{x} \in \mathcal{B}_{\boldsymbol{w}}$ with $ \mathcal{S}(\boldsymbol{x})=0$ and $\boldsymbol{x}\neq\boldsymbol{w}$,  only one of $\mathcal{T}(\boldsymbol{x})>0$ or $\mathcal{T}(\boldsymbol{x})<0$ holds. (i.e., within $\mathcal{B}_{\boldsymbol{w}}, \mathcal{S}(\boldsymbol{x})=0$ and $\mathcal{T}(\boldsymbol{x})=0$ lie on the same side of each other).
\end{definition}

Next, we show that touching hypersurfaces share a common tangent hyperplane at their point of contact. This result is instrumental in exploiting the closest counterfactuals in model reconstruction (proof in Appendix \ref{sec_proofClosestCF}).
\begin{restatable}[]{lemma}{lemtangent}
    \label{lem_tangent}
    Let $\mathcal{S}(\boldsymbol{x})=0$ and $\mathcal{T}(\boldsymbol{x})=0$ denote two differentiable hypersurfaces in $\mathbb{R}^d$, touching each other at point $\boldsymbol{w}$. Then, $\mathcal{S}(\boldsymbol{x})=0$ and $\mathcal{T}(\boldsymbol{x})=0$ have a common tangent hyperplane at $\boldsymbol{w}$.
\end{restatable}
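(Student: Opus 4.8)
The plan is to reduce the statement to a claim about gradients. Recall that for a differentiable hypersurface $\mathcal{S}(\boldsymbol{x})=0$, the tangent hyperplane at $\boldsymbol{w}$ is the affine translate through $\boldsymbol{w}$ of the $(d-1)$-dimensional tangent space $T_{\boldsymbol{w}}\mathcal{S}$, i.e., the hyperplane normal to $\nabla\mathcal{S}(\boldsymbol{w})$. Hence it suffices to show that $\nabla\mathcal{T}(\boldsymbol{w})$ and $\nabla\mathcal{S}(\boldsymbol{w})$ are parallel; equivalently, that $\nabla\mathcal{T}(\boldsymbol{w})\perp T_{\boldsymbol{w}}\mathcal{S}$. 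Since $T_{\boldsymbol{w}}\mathcal{S}$ has codimension one, its orthogonal complement is the line spanned by $\nabla\mathcal{S}(\boldsymbol{w})$, so this orthogonality forces $\nabla\mathcal{T}(\boldsymbol{w})$ onto that line and makes the two tangent hyperplanes coincide.

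To establish $\nabla\mathcal{T}(\boldsymbol{w})\perp T_{\boldsymbol{w}}\mathcal{S}$, I would fix an arbitrary $\boldsymbol{v}\in T_{\boldsymbol{w}}\mathcal{S}$ and pick a differentiable curve $\gamma:(-\varepsilon,\varepsilon)\to\mathcal{S}$ with $\gamma(0)=\boldsymbol{w}$ and $\gamma'(0)=\boldsymbol{v}$, which exists because $\mathcal{S}$ is an embedded submanifold and this is exactly the defining property of its tangent space. Shrinking $\varepsilon$ so that $\gamma$ stays inside the neighborhood $\mathcal{B}_{\boldsymbol{w}}$ of Definition~\ref{def_touching}, set $\varphi(t)=\mathcal{T}(\gamma(t))$. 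Then $\varphi$ is differentiable, $\varphi(0)=\mathcal{T}(\boldsymbol{w})=0$, and by the chain rule $\varphi'(0)=\nabla\mathcal{T}(\boldsymbol{w})^\top\boldsymbol{v}$. If $\varphi'(0)\neq 0$, then the first-order expansion $\varphi(t)=t\,\varphi'(0)+o(t)$ shows $\varphi$ takes strictly positive values for $t$ of one sign and strictly negative values for $t$ of the other, arbitrarily close to $0$; the corresponding points $\gamma(t)\in\mathcal{S}\cap\mathcal{B}_{\boldsymbol{w}}$, $\gamma(t)\neq\boldsymbol{w}$, then witness both $\mathcal{T}>0$ and $\mathcal{T}<0$ on $\mathcal{S}$ near $\boldsymbol{w}$, contradicting the touching hypothesis that the sign of $\mathcal{T}$ is constant on $\mathcal{S}\cap\mathcal{B}_{\boldsymbol{w}}\setminus\{\boldsymbol{w}\}$. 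Therefore $\varphi'(0)=\nabla\mathcal{T}(\boldsymbol{w})^\top\boldsymbol{v}=0$, and since $\boldsymbol{v}$ was arbitrary the claim follows.

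The only real subtlety — and the point I would be careful about — is compatibility with Definition~\ref{def_hypersurface}, where a hypersurface is an embedded submanifold that need not come with a canonical implicit equation whose gradient is nonvanishing. I would resolve this by taking ``tangent hyperplane'' to mean the affine translate of $T_{\boldsymbol{w}}\mathcal{S}$, which is a well-defined $(d-1)$-dimensional object independent of any choice of defining equation; the curve argument above only ever pairs $\nabla\mathcal{T}(\boldsymbol{w})$ against vectors $\boldsymbol{v}$ tangent to $\mathcal{S}$, so it needs merely the differentiability of $\mathcal{T}$ at $\boldsymbol{w}$ and goes through verbatim. The remaining ingredients — existence of a curve realizing a prescribed tangent vector, and the linear-algebra fact that a vector orthogonal to a codimension-one subspace determines its normal line — are standard and I would invoke them without elaboration.
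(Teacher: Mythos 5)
Your proof is correct, and it takes a different route from the paper's. The paper re-parameterizes both hypersurfaces near $\boldsymbol{w}$ as graphs $x_p=S(\boldsymbol{x}_{[p]})$ and $x_p=T(\boldsymbol{x}_{[p]})$ over the same coordinate hyperplane, argues from the touching condition that $F=T-S\geq 0$ with equality at $\boldsymbol{w}_{[p]}$, and extracts $\nabla T(\boldsymbol{w}_{[p]})=\nabla S(\boldsymbol{w}_{[p]})$ from the first-order condition at an interior minimum. You instead work intrinsically: for each $\boldsymbol{v}\in T_{\boldsymbol{w}}\mathcal{S}$ you run a curve in $\mathcal{S}$ with velocity $\boldsymbol{v}$, and the no-sign-change condition on $\mathcal{T}\circ\gamma$ forces $\nabla\mathcal{T}(\boldsymbol{w})^\top\boldsymbol{v}=0$. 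Both arguments rest on the same underlying calculus fact (a differentiable function that does not change sign near a zero has vanishing derivative there), but your version buys two things: it never needs to parameterize $\mathcal{T}$, and hence avoids the paper's implicit assumption that both surfaces are simultaneously graphable over the same coordinate hyperplane; and it uses the one-sidedness exactly as Definition~\ref{def_touching} states it (the sign of $\mathcal{T}$ on $\mathcal{S}\cap\mathcal{B}_{\boldsymbol{w}}$), whereas the paper's step from that condition to the ordering $S<T$ of the two graphs silently identifies the sign of $\mathcal{T}$ at a point with the side of the graph of $T$ on which that point lies. The one caveat common to both proofs is the regularity of $\mathcal{T}$ at $\boldsymbol{w}$: to pass from $\nabla\mathcal{T}(\boldsymbol{w})\perp T_{\boldsymbol{w}}\mathcal{S}$ to coincidence of the two tangent hyperplanes you need $\nabla\mathcal{T}(\boldsymbol{w})\neq 0$ so that $\ker\nabla\mathcal{T}(\boldsymbol{w})$ is the tangent space of $\mathcal{T}$; the paper's graph construction makes the analogous assumption implicitly, so this is a shared (and standard) hypothesis rather than a gap in your argument.
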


\section{Main Results}
\label{sec_mainResults}
We provide a set of theoretical guarantees on query size of model reconstruction that progressively build on top of each other, starting from stronger guarantees in a somewhat restricted setting towards more approximate guarantees in broadly applicable settings. We discuss important intuitions that can be inferred from the results. Finally, we propose a model reconstruction strategy that is executable in practice, which is empirically evaluated in Section \ref{sec_experiments}. 

\subsection{Convex decision boundaries and closest counterfactuals}
\label{sec_closestCFAttack}
We start out with demonstrating how the closest counterfactuals provide a linear approximation of \emph{any} decision boundary, around the corresponding counterfactual. Prior work~\citep{kamalikaAuditing} shows that for linear models, the line joining a query instance $\boldsymbol{x}$ and the closest counterfactual $\boldsymbol{w}(=g_m(\boldsymbol{x}))$ is perpendicular to the linear decision boundary. We generalize this observation to any differentiable decision boundary, not necessarily linear.
\begin{restatable}[]{lemma}{lemclosestPointNormal}
    \label{lem_closestPointNormal}
    Let $\mathcal{S}$ denote the  decision boundary of a classifier and $\boldsymbol{x}\in [0,1]^d$ be any point that is not on $\mathcal{S}$. Then, the line joining $\boldsymbol{x}$ and its closest counterfactual $\boldsymbol{w}$ is perpendicular to $\mathcal{S}$ at $\boldsymbol{w}$.
\end{restatable}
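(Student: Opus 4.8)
I would realize the closest counterfactual $\boldsymbol{w}=g_m(\boldsymbol{x})$ as the point where the Euclidean sphere centered at $\boldsymbol{x}$ of radius $r=\|\boldsymbol{x}-\boldsymbol{w}\|_2$ first makes contact with the decision boundary $\mathcal{S}$, and then invoke Lemma~\ref{lem_tangent}: the sphere and $\mathcal{S}$ are touching at $\boldsymbol{w}$, hence share a tangent hyperplane there, and the tangent hyperplane of a sphere at a point is orthogonal to the radius through that point. Since that radius is the segment $\boldsymbol{x}\boldsymbol{w}$, this gives exactly the claimed perpendicularity.

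\textbf{Steps 1--2.} First I would show $\boldsymbol{w}$ lies on $\mathcal{S}$, i.e. $m(\boldsymbol{w})=0.5$. Say $\threshold{m(\boldsymbol{x})}=0$, so feasibility in Definition~\ref{gm_definition} forces $m(\boldsymbol{w})\geq 0.5$. If this were strict, continuity of $m$ (implied by $\mathcal{S}$ being a hypersurface) gives a neighborhood of $\boldsymbol{w}$ on which $m>0.5$; taking a small step from $\boldsymbol{w}$ towards $\boldsymbol{x}$ stays feasible and strictly decreases $\|\cdot-\boldsymbol{x}\|_2$, contradicting optimality of $\boldsymbol{w}$. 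Hence $\boldsymbol{w}\in\mathcal{S}$ (the case $\threshold{m(\boldsymbol{x})}=1$ is symmetric). Next, let $\mathcal{T}(\boldsymbol{z})=\|\boldsymbol{z}-\boldsymbol{x}\|_2^2-r^2=0$ be the sphere of radius $r=\|\boldsymbol{x}-\boldsymbol{w}\|_2>0$ about $\boldsymbol{x}$, a differentiable hypersurface passing through $\boldsymbol{w}$. Every point $\boldsymbol{z}$ of $\mathcal{S}$ in a small enough neighborhood of $\boldsymbol{w}$ satisfies $\threshold{m(\boldsymbol{z})}=1\neq\threshold{m(\boldsymbol{x})}$, so it is itself a valid counterfactual of $\boldsymbol{x}$; minimality of $\boldsymbol{w}$ then yields $\|\boldsymbol{z}-\boldsymbol{x}\|_2\geq r$, i.e. $\mathcal{T}(\boldsymbol{z})\geq 0$ near $\boldsymbol{w}$. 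This is what is needed for $\mathcal{S}$ and $\mathcal{T}$ to be touching at $\boldsymbol{w}$ in the sense of Definition~\ref{def_touching}.

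\textbf{Step 3 (conclusion).} By Lemma~\ref{lem_tangent}, $\mathcal{S}$ and $\mathcal{T}$ share a common tangent hyperplane $H$ at $\boldsymbol{w}$. The tangent hyperplane to the sphere $\mathcal{T}$ at $\boldsymbol{w}$ is $\{\boldsymbol{z}:\langle \boldsymbol{z}-\boldsymbol{w},\,\boldsymbol{x}-\boldsymbol{w}\rangle=0\}$, i.e. it is orthogonal to the radius vector $\boldsymbol{x}-\boldsymbol{w}$. Therefore the tangent hyperplane of $\mathcal{S}$ at $\boldsymbol{w}$ is orthogonal to $\boldsymbol{x}-\boldsymbol{w}$, which is precisely the statement that the line through $\boldsymbol{x}$ and $\boldsymbol{w}$ is perpendicular to $\mathcal{S}$ at $\boldsymbol{w}$.

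\textbf{Main obstacle.} The delicate point is the strict-side requirement in Definition~\ref{def_touching}: minimality only gives $\mathcal{T}(\boldsymbol{z})\geq 0$, while there could in principle be other closest counterfactuals $\boldsymbol{z}\neq\boldsymbol{w}$ on $\mathcal{S}$ with $\|\boldsymbol{z}-\boldsymbol{x}\|_2=r$, i.e. $\mathcal{T}(\boldsymbol{z})=0$, violating ``only one of $\mathcal{T}>0$, $\mathcal{T}<0$ holds''. I would resolve this either by observing that for generic $\boldsymbol{x}$ the closest counterfactual is unique, so one may pass to a small enough $\mathcal{B}_{\boldsymbol{w}}$, or by replacing Steps 2--3 with a direct first-order optimality argument: for any smooth curve $\gamma(t)\subset\mathcal{S}$ with $\gamma(0)=\boldsymbol{w}$, the map $t\mapsto\|\gamma(t)-\boldsymbol{x}\|_2^2$ has a local minimum at $0$ (again since nearby points of $\mathcal{S}$ are feasible counterfactuals), so $\langle \gamma'(0),\,\boldsymbol{w}-\boldsymbol{x}\rangle=0$; ranging over all tangent directions $\gamma'(0)\in T_{\boldsymbol{w}}\mathcal{S}$ shows $\boldsymbol{x}-\boldsymbol{w}$ is normal to $\mathcal{S}$ at $\boldsymbol{w}$. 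A minor caveat is when $\boldsymbol{w}$ lies on the boundary of $[0,1]^d$, where the feasible-set boundary can obstruct the perturbation arguments; one typically assumes the relevant counterfactuals lie in the interior, or restricts the statement accordingly.
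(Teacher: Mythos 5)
Your proof takes essentially the same route as the paper: realize $\boldsymbol{w}$ as the contact point of the ball of radius $\|\boldsymbol{x}-\boldsymbol{w}\|_2$ centered at $\boldsymbol{x}$ with $\mathcal{S}$, invoke Lemma~\ref{lem_tangent} to get a shared tangent hyperplane, and note that the radius is normal to the sphere. Your treatment is in fact slightly more careful than the paper's — you verify $\boldsymbol{w}\in\mathcal{S}$, flag that minimality only yields $\mathcal{T}\geq 0$ rather than the strict one-sidedness demanded by Definition~\ref{def_touching} (a gap the paper silently ignores, though the proof of Lemma~\ref{lem_tangent} survives with the non-strict inequality), and note the boundary-of-$[0,1]^d$ caveat.
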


\begin{wrapfigure}[10]{r}{0.4\textwidth}
    \centering   \includegraphics[width=0.4\textwidth]{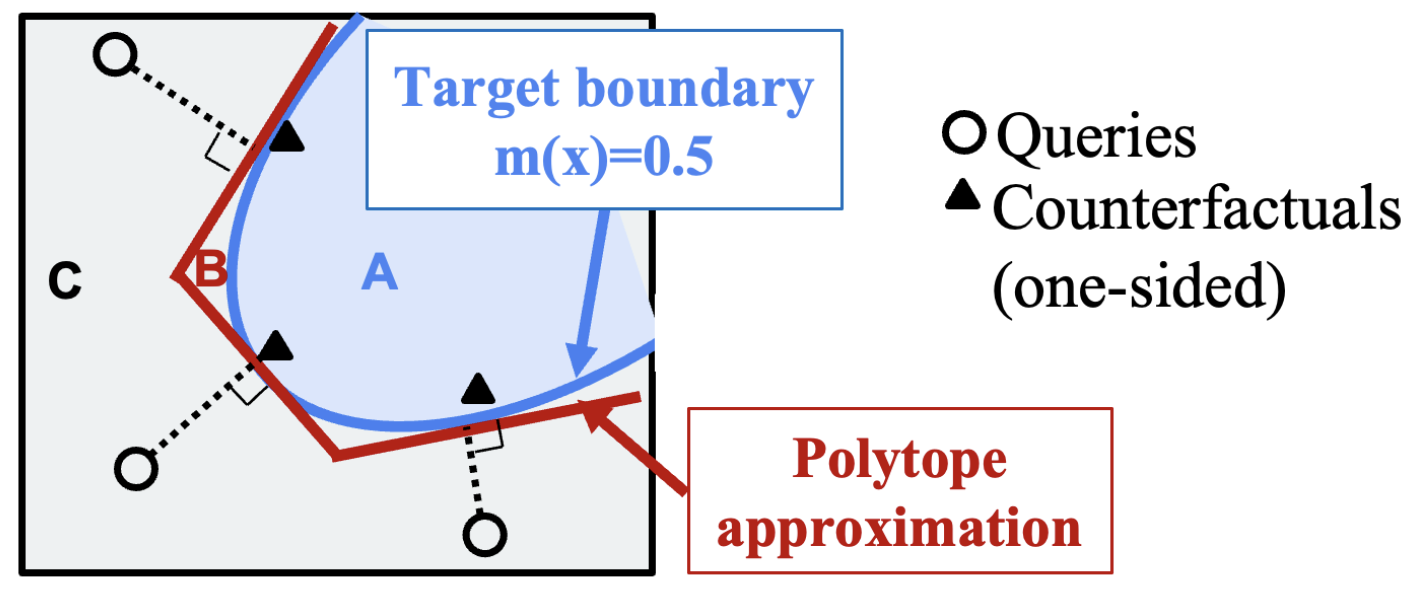}
    \caption{Polytope approximation of a convex decision boundary using the closest counterfactuals.}
    \label{fig_polytopeApprox}
\end{wrapfigure}
The proof follows by showing that the $d$-dimensional ball with radius $||\boldsymbol{x}-\boldsymbol{w}||_2$ touches 
(as in Definition\ref{def_touching}) $\mathcal{S}$ at $\boldsymbol{w}$, and invoking Lemma \ref{lem_tangent}. For details see Appendix \ref{sec_proofClosestCF}. As a direct consequence of Lemma \ref{lem_closestPointNormal}, a user may query the system and calculate tangent hyperplanes of the decision boundary drawn at the closest counterfactuals. This leads to a linear approximation of the decision boundary at the closest counterfactuals (see Fig.~\ref{fig_polytopeApprox}).

If the decision boundary is convex, such an approximation will provide a set of supporting hyperplanes. \emph{The intersection of these supporting hyperplanes will provide a circumscribing convex polytope approximation of the decision boundary.} We show that the average fidelity of such an approximation, evaluated over uniformly distributed input instances, tends to 1 when the number of queries is large.
\begin{restatable}[]{theorem}{thmrandomPolytopeComplexity}
\label{thm_randomPolytopeComplexity}
    Let $m$ be the target binary classifier whose decision boundary is convex (i.e., the set $\{\boldsymbol{x}\in[0,1]^d:\threshold{m(\boldsymbol{x})}=1\}$ is convex) and has a continuous second derivative. Denote by $\Tilde{M}_n$, the convex polytope approximation of $m$ constructed with $n$ supporting hyperplanes obtained through i.i.d. counterfactual queries. Assume that the fidelity is evaluated with respect to $\sD_\text{ref}$ which is uniformly distributed over $[0,1]^d$. Then, when $n\rightarrow\infty$ the expected fidelity of $\Tilde{M}_n$ with respect to $m$ is given by
    \begin{equation}
        \ex{{\rm Fid}_{m,\sD_\text{ref}}(\Tilde{M}_n)} = 1 - \epsilon
    \end{equation}
    where $\epsilon \sim \mathcal{O}\left(n^{-\frac{2}{d-1}}\right)$ and the expectation is over both $\Tilde{M}_n$ and $\sD_\text{ref}$.
\end{restatable}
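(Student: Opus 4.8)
\emph{Proof plan.} The plan is to convert the fidelity statement into a statement about the volume of the region where the polytope approximation overshoots the target, and then to invoke the asymptotic theory of random polytopal approximation of smooth convex bodies. Write $\sM = \{\boldsymbol{x}\in[0,1]^d : \threshold{m(\boldsymbol{x})}=1\}$; by hypothesis this is a convex body whose boundary $\partial\sM$ is a $C^2$ hypersurface. Each counterfactual query $\boldsymbol{x}_i$ lies in the unfavorable region, so its closest counterfactual $\boldsymbol{w}_i = g_m(\boldsymbol{x}_i)$ is the metric projection of $\boldsymbol{x}_i$ onto $\sM$ and hence lies on $\partial\sM$; by Lemma~\ref{lem_closestPointNormal} the tangent hyperplane of $\partial\sM$ at $\boldsymbol{w}_i$ is orthogonal to $\boldsymbol{x}_i-\boldsymbol{w}_i$, and by convexity of $\sM$ it is a supporting hyperplane. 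Letting $H_i^+$ denote the closed halfspace it bounds that contains $\sM$, the constructed approximation has favorable region $\tilde{M}_n = \bigcap_{i=1}^n H_i^+ \supseteq \sM$. Consequently the predictions of $m$ and $\tilde{M}_n$ differ exactly on $\tilde{M}_n\setminus\sM$ up to a Lebesgue-null set, so for $\sD_\text{ref}$ uniform on the unit cube,
\begin{equation*}
\mathbb{E}_{\sD_\text{ref}}\!\left[{\rm Fid}_{m,\sD_\text{ref}}(\tilde{M}_n)\right] = 1-\lambda\!\left((\tilde{M}_n\setminus\sM)\cap[0,1]^d\right),
\end{equation*}
where $\lambda$ is Lebesgue measure; averaging over the i.i.d.\ queries gives $\mathbb{E}[{\rm Fid}] = 1-\epsilon$ with $\epsilon = \mathbb{E}\big[\lambda\big((\tilde{M}_n\setminus\sM)\cap[0,1]^d\big)\big]$. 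It then remains to bound $\epsilon = \mathcal{O}(n^{-2/(d-1)})$.

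For the volume bound, first observe that $\boldsymbol{x}\mapsto g_m(\boldsymbol{x})$ pushes the query distribution forward to a distribution on $\partial\sM$ that is absolutely continuous with respect to $(d-1)$-dimensional surface measure with density bounded above and below away from zero (using the $C^2$, hence curvature-bounded, boundary). Thus $\tilde{M}_n$ is exactly a \emph{random circumscribed polytope}: the intersection of $n$ i.i.d.\ tangent halfspaces of the convex body $\sM$. The heart of the proof is the estimate $\mathbb{E}[\lambda(\tilde{M}_n\setminus\sM)] = \mathcal{O}(n^{-2/(d-1)})$, which is the content of the asymptotic theory of random polytopal approximation of smooth convex bodies in the circumscribed regime (cf.\ the works of Gruber, Sch\"utt, B\"or\"oczky, Schneider, and Reitzner). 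The exponent is explained by a simple scaling heuristic: among $n$ tangency points spread over $\partial\sM$, neighbouring ones sit at boundary-distance $\Theta(n^{-1/(d-1)})$, so a second-order Taylor expansion of $\partial\sM$ forces the boundary of $\tilde{M}_n$ to remain within normal distance $\Theta(n^{-2/(d-1)})$ of $\partial\sM$; integrating this width over $\partial\sM$, which has bounded surface area, yields excess volume $\Theta(n^{-2/(d-1)})$. Finally, intersecting with $[0,1]^d$ only decreases $\lambda(\tilde{M}_n\setminus\sM)$, and the event that $\tilde{M}_n$ protrudes macroscopically into the cube (because a boundary patch of surface measure $s$ received no tangency point) has probability at most $e^{-cns}$ and therefore contributes only $o(n^{-2/(d-1)})$ to $\epsilon$; assembling these pieces gives $\epsilon = \mathcal{O}(n^{-2/(d-1)})$.

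The main obstacle is the middle step: making the random circumscribed-polytope asymptotics rigorous under exactly the present hypotheses. The classical statements are usually phrased for bodies of strictly positive curvature ($C^2_+$), for a prescribed tangent-hyperplane distribution, and without any clipping, so one must (i) allow the curvature to vanish somewhere — this can only sharpen the approximation, preserving the $\mathcal{O}$-bound, but it calls for a localized cap-covering argument rather than a verbatim citation; (ii) propagate the particular boundary density induced by the counterfactual mechanism $g_m$ through the constants; and (iii) control the clipping against the cube and the exponentially-unlikely unbounded directions of $\tilde{M}_n$, as sketched above. In contrast, Step~1 — the reduction of expected fidelity to an expected excess volume, via $\tilde{M}_n\supseteq\sM$ — is elementary given Lemma~\ref{lem_closestPointNormal}. (The same asymptotics yield a matching $\epsilon = \Omega(n^{-2/(d-1)})$ whenever $\partial\sM$ has a curved piece in the interior of the cube, so the rate is in fact tight.)
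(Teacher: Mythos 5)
Your proposal follows essentially the same route as the paper: reduce the expected fidelity to the expected excess volume $\ex{V(\Tilde{\sM}_n)-V(\sM)}$ using that $\sM\subseteq\Tilde{\sM}_n$ and $\sD_\text{ref}$ is uniform, then invoke the asymptotics for random circumscribed polytopes (the paper cites \citet{boroczkyRandomPolytope}, whose statement already clips against a large cube $C\supseteq\sM$). If anything you are more explicit than the paper about the technical hypotheses behind that citation — the non-degeneracy of the boundary density induced by $g_m$, the handling of vanishing curvature, and the clipping — which the paper's proof leaves implicit by assuming the cited theorem applies verbatim.
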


Theorem~\ref{thm_randomPolytopeComplexity} provides an exact relationship between the expected fidelity and number of queries. The proof utilizes a result from random polytope theory  \citep{boroczkyRandomPolytope} which provides a complexity bound on volume-approximating smooth convex sets by convex polytopes. The proof involves observing that the volume of the overlapping decision regions of $m$ and $\Tilde{M}_n$ (for example, regions A and C in Fig. \ref{fig_polytopeApprox}) translates to the expected fidelity when evaluated under a uniformly distributed $\sD_\text{ref}$. Appendix \ref{sec_proofRandomPolytopeComplexity} provides the detailed steps. 
\begin{wrapfigure}[13]{r}{0.3\textwidth}
    \centering
    \includegraphics[width=0.3\textwidth]{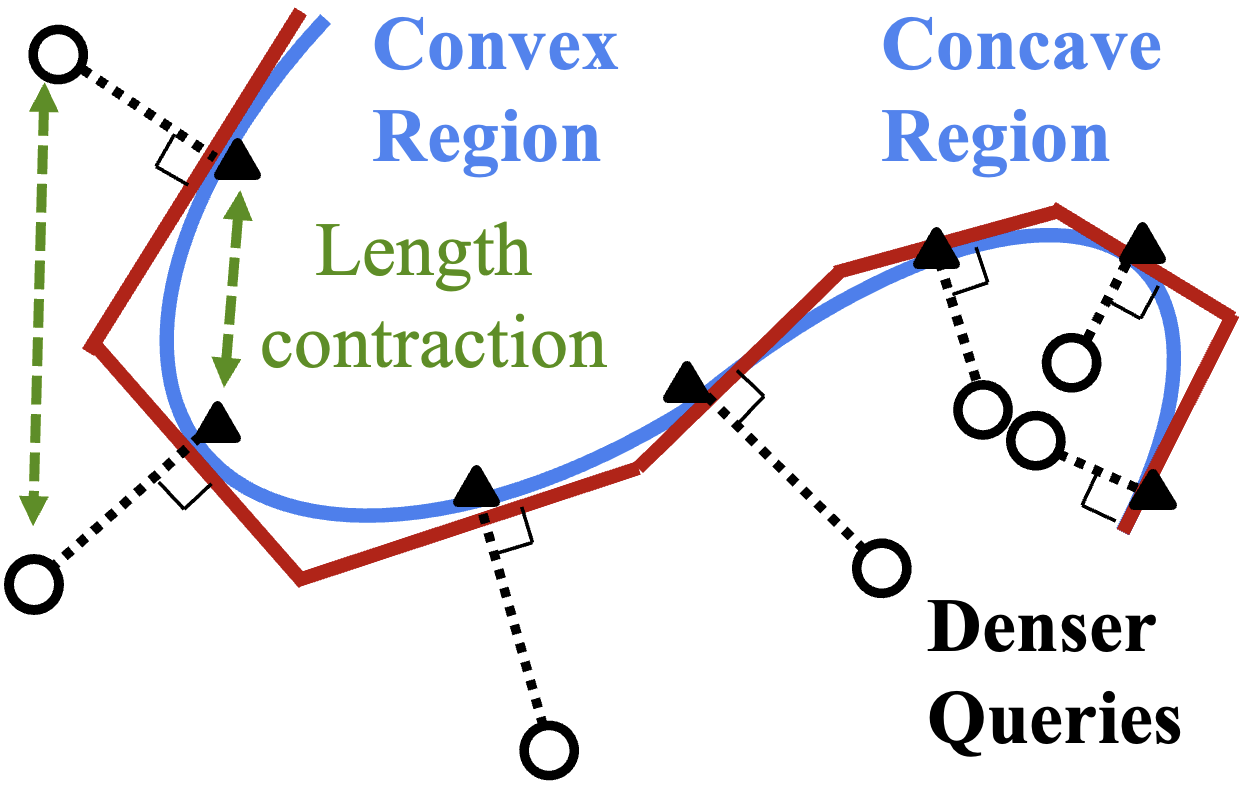}
        \caption{Approximating a concave region needs denser queries w.r.t. a convex region.}
    \label{fig_polytopeConvexConcave}
\end{wrapfigure}

\begin{remark}[Relaxing the Convexity Assumption]
This strategy of linear approximations can also be extended to a concave decision boundary since the closest counterfactual will always lead to a tangent hyperplane irrespective of convexity. Now the rejected region can be seen as the intersection of these half-spaces (Lemma \ref{lem_closestPointNormal} does not assume convexity). However, it is worth noting that approximating a concave decision boundary is, in general, more difficult than approximating a convex region. To obtain equally-spaced out tangent hyperplanes on the decision boundary, a concave region will require a much denser set of query points (see Fig. \ref{fig_polytopeConvexConcave}) due to the inverse effect of length contraction discussed in \citet[Chapter III Lemma 2]{intrinsicGeometry}. \emph{Deriving similar theoretical guarantees for a decision boundary which is neither convex nor concave is much more challenging as the decision regions can no longer be approximated as intersections of half-spaces.} The assumption of convex decision boundaries may only be satisfied under limited scenarios such as input-convex neural networks \citep{amosInputConvexNN}. However, we observe that this limitation can be circumvented in case of \acrshort{ReLU} networks to arrive at a probabilistic guarantee as discussed next.
\end{remark}

\subsection{\acrshort{ReLU} networks and closest counterfactuals}
\label{sec_reluAttack}
\acrfull{ReLU} are one of the most used activation functions in neural networks \citep{reluSpeechProc, reluAcousticModels, reluMedicalImaging, reluVision}. A deep neural network that uses \acrshort{ReLU} activations can be represented as a \acrfull{CPWL} function \citep{chenReLUBounds, haninReLUComplexity}. A \acrshort{CPWL} function comprises of a union of linear functions over a partition of the domain. Definition \ref{def_cpwl} below provides a precise characterization.
\begin{definition}[\acrfull{CPWL} Function \citep{chenReLUBounds}]
\label{def_cpwl}
    A function $\ell: \mathbb{R}^d \rightarrow \mathbb{R}$ is said to be continuous piece-wise linear if and only if
    \begin{enumerate}[leftmargin=0.6cm, itemsep=-0.1cm, topsep=-0.1cm]
        \item There exists a finite set of closed subsets of $\mathbb{R}^d$, denoted as $\{\sU_i\}_{i=1,2,\dots,q}$ such that $\cup_{i=1}^q \sU_i = \sR^n$
        \item $\ell(\boldsymbol{x})$ is affine over each $\sU_i$ i.e., over each $\sU_i,  \ell(\boldsymbol{x}) = \ell_i(x)= \boldsymbol{a}_i^T \boldsymbol{x} + b_i$ with $\boldsymbol{a}_i\in \sR^d$, $b_i\in \sR$.
    \end{enumerate}
\end{definition}
This definition can be readily applied to the models of our interest, of which the domain is the unit hypercube $[0,1]^d$. A neural network with \acrshort{ReLU} activations can be used as a classifier by appending a Sigmoid activation $\sigma(z) = \frac{1}{1+e^{-z}}$ to the final output. We denote such a classifier by $m(\boldsymbol{x}) = \sigma(\ell(\boldsymbol{x}))$ where $\ell(\boldsymbol{x})$ is \acrshort{CPWL}. It has been observed that the number of linear pieces $q$ of a trained \acrshort{ReLU} network is generally way below the theoretically allowed maximum \citep{haninReLUComplexity}. 

We first show that the decision boundaries of such CPWL functions are collections of polytopes (not necessarily convex). The proof of Lemma \ref{lem_cpwlDecisionBoundary} is deferred to Appendix \ref{sec_proofCpwlDecisionBoundary}.
\begin{restatable}[]{lemma}{lemCpwlDecisionBoundary}
    \label{lem_cpwlDecisionBoundary}
    Let $m(\boldsymbol{x})=\sigma(\ell(\boldsymbol{x}))$ be a \acrshort{ReLU} classifier, where $\ell(\boldsymbol{x})$ is \acrshort{CPWL} and $\sigma(.)$ is the Sigmoid function. Then, the decision boundary $\partial \sM=\{\boldsymbol{x}\in[0,1]^d : m(\boldsymbol{x})=0.5\}$ is a collection of (possibly non-convex) polytopes in $[0,1]^d$, when considered along with the boundaries of the unit hypercube.
\end{restatable}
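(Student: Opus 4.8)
The plan is to strip off the Sigmoid, exploit the polyhedral structure of the linear regions of a \acrshort{ReLU} network, and then use the elementary fact that intersecting a convex polytope with an affine hyperplane (or half-space) again yields a polytope. Concretely, since $\sigma$ is strictly increasing with $\sigma(0)=\tfrac12$, we have $m(\boldsymbol{x})=\sigma(\ell(\boldsymbol{x}))=\tfrac12$ if and only if $\ell(\boldsymbol{x})=0$. Hence $\partial\sM=\{\boldsymbol{x}\in[0,1]^d:\ell(\boldsymbol{x})=0\}$, and it suffices to describe the zero set of the \acrshort{CPWL} function $\ell$ inside the unit hypercube, together with how the regions $\{\ell\ge 0\}$ and $\{\ell\le 0\}$ sit inside $[0,1]^d$.

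The second step is to upgrade the pieces $\sU_i$ of Definition~\ref{def_cpwl} from merely ``closed sets'' to convex polyhedra. This is the standard activation-pattern description of \acrshort{ReLU} networks \citep{haninReLUComplexity,chenReLUBounds}: each $\sU_i$ is the set of inputs inducing a fixed on/off pattern of the hidden units, hence the solution set of finitely many affine inequalities; equivalently one argues inductively over layers, each layer's pre-activation being affine on the current polyhedral cell and the \acrshort{ReLU} splitting that cell along a hyperplane, so all cells remain polyhedral. Intersecting with the unit hypercube, the sets $\sP_i:=\sU_i\cap[0,1]^d$ are bounded, hence convex polytopes, with $\bigcup_{i=1}^q\sP_i=[0,1]^d$.

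The third step is local-to-global. On each cell $\ell$ is affine, $\ell(\boldsymbol{x})=\boldsymbol{a}_i^{\top}\boldsymbol{x}+b_i$, so
\[
\partial\sM\cap\sP_i=\{\boldsymbol{x}\in\sP_i:\boldsymbol{a}_i^{\top}\boldsymbol{x}+b_i=0\},
\]
which is the intersection of the convex polytope $\sP_i$ with an affine hyperplane — a convex polytope of dimension at most $d-1$ — except in the degenerate case $\boldsymbol{a}_i=\boldsymbol{0}$, where it is either empty or all of $\sP_i$. Likewise $\sP_i\cap\{\boldsymbol{a}_i^{\top}\boldsymbol{x}+b_i\ge 0\}$ is a convex polytope. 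Taking finite unions, $\partial\sM=\bigcup_i(\partial\sM\cap\sP_i)$ is a finite union of convex polytopes, i.e., a collection of (possibly non-convex) polytopes; equivalently, the favorable region $\sM=\bigcup_i\big(\sP_i\cap\{\boldsymbol{a}_i^{\top}\boldsymbol{x}+b_i\ge 0\}\big)$ and its complement are (possibly non-convex) polytopes whose common boundary is $\partial\sM$. The facets of these polytopes lie either on \acrshort{ReLU} activation hyperplanes (boundaries between adjacent cells), on the hyperplane $\ell=0$, or on the faces $x_k=0,\,x_k=1$ of the unit hypercube; this last family is precisely the ``boundaries of the unit hypercube'' in the statement, which is why they must be adjoined to close up the polytopes.

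I expect the main obstacle to be the second step: Definition~\ref{def_cpwl} only provides closed pieces, so the conclusion genuinely relies on $\ell$ arising from a \acrshort{ReLU} network rather than being an arbitrary \acrshort{CPWL} map — I would either cite the activation-pattern characterization or write out the short layerwise induction, plus a line noting that intersecting with $[0,1]^d$ makes every cell bounded. The only remaining care is bookkeeping for degenerate or lower-dimensional cells ($\boldsymbol{a}_i=\boldsymbol{0}$, or $\sP_i$ of dimension $<d$), which do not affect the conclusion since a point, a segment, or a full-dimensional slab all still count as polytopes.
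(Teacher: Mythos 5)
Your proof is correct, but it takes a genuinely different route from the paper's. After the common first step (peeling off the Sigmoid so that $\partial\sM$ becomes the zero set of $\ell$, a restricted hyperplane $\boldsymbol{a}_i^{\top}\boldsymbol{x}+b_i=0$ on each piece $\sU_i$), the paper's argument is a continuity/propagation argument: it classifies each $\sU_i$ as all-positive, all-negative, or containing a zero of $\ell_i$, and uses continuity of $\ell$ at a point where the boundary piece in $\sU_s$ meets the shared face with an adjacent $\sU_t$ to conclude that the zero set continues into $\sU_t$; iterating over all adjacent pairs shows every boundary segment either closes on itself or terminates on a face of $[0,1]^d$, which is how the paper earns the phrase ``when considered along with the boundaries of the unit hypercube.'' You instead make the cells themselves polyhedral via the activation-pattern description and assemble $\partial\sM$ (and the favorable region $\sM$) as a finite union of convex polytopes. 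Your version makes explicit something the paper leaves implicit: Definition~\ref{def_cpwl} only supplies closed pieces, and ``a hyperplane restricted to $\sU_i$'' is a polytope only if $\sU_i$ is polyhedral, so the \acrshort{ReLU}-specific structure (or a layerwise induction) is genuinely needed here and the paper never states it. Conversely, the paper's adjacency argument is what explicitly glues the affine patches into closed polytopal surfaces, a fact your finite-union formulation delivers only implicitly through $\sM$ being a polyhedral region whose facets lie on $\ell=0$ or on the hypercube faces. Your bookkeeping of the degenerate cases ($\boldsymbol{a}_i=\boldsymbol{0}$, lower-dimensional cells) is also more careful than the paper's. Both arguments establish the lemma; yours is more self-contained on the polyhedrality side, the paper's more explicit on the connectivity side.
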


Next we will analyze the probability of successful model reconstruction using counterfactuals. Consider a uniform grid $\mathcal{N}_\epsilon$ over the unit hypercube $[0,1]^d$, where each cell is a smaller hypercube with side length $\epsilon$ (see Fig.~\ref{fig_reluRegions}). For this analysis, we make an assumption: \emph{If a cell contains a part of the decision boundary, then that part is completely linear (affine) within that small cell} \footnote{This is violated only for the cells containing parts of the edges of the decision boundary. However, we may assume that $\epsilon$ is small enough so that the total number of such cells is negligible compared to the total cells containing the decision boundary. Generally, $\epsilon$ can be made sufficiently small such that for most of the cells, if a cell contains a part of the decision boundary then that part is affine within the cell. }.

\begin{wrapfigure}[23]{r}{0.3\textwidth}
\vspace{-0.5cm}
  \begin{center}
    \includegraphics[width=0.3\textwidth]{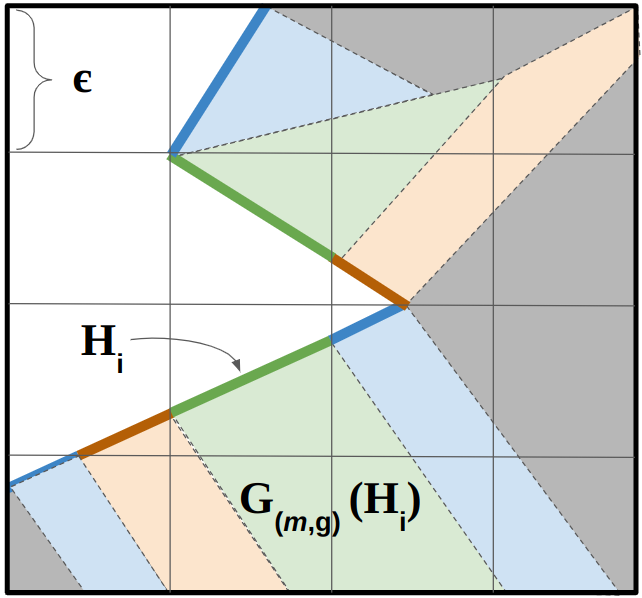}
  \end{center}
  \caption{$\mathcal{N}_\epsilon$ grid and inverse counterfactual regions. Thick solid lines indicate the decision boundary pieces ($\sH_i$'s). White color depicts the accepted region. Pale-colored are the inverse counterfactual regions of the $\sH_i$'s with the matching color. In this case $k(\epsilon)=7$ and $v^*(\epsilon)$ is the area of lower amber region.}
  \label{fig_reluRegions}
\end{wrapfigure}

Now, as the decision boundary becomes linear for each small cell that it passes through, having just one closest counterfactual in each such cell is sufficient to get the decision boundary in that cell (recall Lemma~\ref{lem_closestPointNormal}). We formalize this intuition in Theorem \ref{thm_cpwlComplexity} to obtain a probabilistic guarantee on the success of model reconstruction.
A proof is presented in Appendix \ref{sec_proofCpwlComplexity}.
\begin{restatable}[]{theorem}{thmCpwlComplexity}
\label{thm_cpwlComplexity}
    Let $m$ be a target binary classifier with \acrshort{ReLU} activations. Let $k(\epsilon)$ be the number of cells through which the decision boundary passes. Define $\{\sH_i\}_{i=1,\dots,k(\epsilon)}$ to be the collection of affine pieces of the decision boundary within each decision boundary cell where each $\sH_i$ is an open set. Let $v_i(\epsilon)=V(\sG_{m,g_m}(\sH_i))$ where $V(.)$ is the $d-$dimensional volume (i.e., the Lebesgue measure) and $\sG_{m,g_m}(.)$ is the inverse counterfactual region w.r.t. $m$ and the closest counterfactual generator $g_m$. Then the probability of successful reconstruction with counterfactual queries distributed uniformly over $[0,1]^d$ is lower-bounded as
    \begin{equation}
\sP\left[\text{Reconstruction}\right] \geq 1-k(\epsilon)(1-v^*(\epsilon))^n
    \end{equation}
    where $v^*(\epsilon)=\min_{i=1,\dots,k(\epsilon)} v_i(\epsilon)$ and $n$ is the number of queries.
\end{restatable}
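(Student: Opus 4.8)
The plan is to read ``successful reconstruction'' as the event that the user recovers every affine piece $\sH_1,\dots,\sH_{k(\epsilon)}$ of $\partial\sM$, and then to bound this event from below by a coupon–collector-style union bound over the $k(\epsilon)$ pieces.

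First I would make precise why one closest counterfactual per piece suffices. Fix a query $\boldsymbol{x}$ with $g_m(\boldsymbol{x})=\boldsymbol{w}\in\sH_i$. Since $\sH_i\subseteq\partial\sM=\{m=0.5\}$, we have $\threshold{m(\boldsymbol{w})}=1$, hence $\threshold{m(\boldsymbol{x})}=0$; in particular any point of $\sG_{m,g_m}(\sH_i)$ lies on the unfavorable side, so the one-sided nature of the returned counterfactuals is immaterial here. By Lemma~\ref{lem_closestPointNormal}, the line segment joining $\boldsymbol{x}$ and $\boldsymbol{w}$ is normal to $\partial\sM$ at $\boldsymbol{w}$, and by the standing assumption the portion of $\partial\sM$ inside the grid cell containing $\boldsymbol{w}$ is a single affine piece; a point on a hyperplane together with a normal direction determines that hyperplane, and the label of $\boldsymbol{x}$ fixes its orientation. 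Combined with Lemma~\ref{lem_cpwlDecisionBoundary} (so that $\partial\sM$, together with the hypercube boundary, is a union of polytope facets), this shows that obtaining at least one closest counterfactual inside each $\sH_i$ lets the user recover $\partial\sM$ exactly, modulo the negligible collection of cells straddling edges of the boundary polytopes (which the affineness assumption excludes). Hence it suffices to lower-bound $\sP[E]$, where $E\coloneqq\bigcap_{i=1}^{k(\epsilon)}\{\text{at least one of the }n\text{ queries lies in }\sG_{m,g_m}(\sH_i)\}$.

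Next I would compute the relevant probabilities. Since the $n$ queries are i.i.d.\ uniform on $[0,1]^d$, which has unit Lebesgue measure, a single query $\boldsymbol{X}$ satisfies $\sP[g_m(\boldsymbol{X})\in\sH_i]=\sP[\boldsymbol{X}\in\sG_{m,g_m}(\sH_i)]=V(\sG_{m,g_m}(\sH_i))=v_i(\epsilon)$ by the definition of the inverse counterfactual region. Independence across queries gives $\sP[\text{no query in }\sG_{m,g_m}(\sH_i)]=(1-v_i(\epsilon))^n$, so a union bound over the $k(\epsilon)$ pieces yields $\sP[E^c]\le\sum_{i=1}^{k(\epsilon)}(1-v_i(\epsilon))^n\le k(\epsilon)(1-v^*(\epsilon))^n$ with $v^*(\epsilon)=\min_{i}v_i(\epsilon)$. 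Therefore $\sP[\text{Reconstruction}]\ge\sP[E]=1-\sP[E^c]\ge 1-k(\epsilon)(1-v^*(\epsilon))^n$, which is the claim.

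The probabilistic counting is routine; the main obstacle is making the first step airtight. I would need to check that $g_m$ is measurable enough for each $\sG_{m,g_m}(\sH_i)$ (a preimage of an open set) to be Lebesgue measurable, so that $v_i(\epsilon)$ is well defined, and to dispose cleanly of the measure-zero or otherwise negligible exceptional sets: queries lying on $\partial\sM$, queries whose closest counterfactual falls on a shared face of two cells or on an edge of a boundary polytope (where the normal direction is ambiguous), and the cells excluded by the affineness assumption. None of these affects the uniform-query probabilities, but spelling out carefully why recovering all the $\sH_i$'s yields $\partial\sM$ — and hence $m$ — is where the argument needs the most care.
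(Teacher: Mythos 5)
Your proof is correct and follows essentially the same route as the paper's: define reconstruction as the event that every decision-boundary cell receives at least one closest counterfactual, compute the per-cell miss probability as $(1-v_i(\epsilon))^n$ from the i.i.d.\ uniform queries, and bound the union of misses by $k(\epsilon)(1-v^*(\epsilon))^n$. If anything you are slightly more careful than the paper, which writes the probability of the union of miss events as an exact sum over cells (an equality that holds only for disjoint events) where your union-bound inequality is the correct statement; your preliminary justification that one counterfactual per affine piece determines that piece mirrors the discussion the paper places just before the theorem rather than inside its proof.
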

\begin{remark}
\label{rem:length}
Here $k(\epsilon)$ and $v^*(\epsilon)$ depend only on the nature of the model being reconstructed and are independent of the number of queries $n$. The value of $k(\epsilon)$ roughly grows with the surface area of the decision boundary (e.g., length when input is 2D), showing that models with more convoluted decision boundaries might need more queries for reconstruction. Generally, $k(\epsilon)$
lies within the interval $\frac{A(\partial \sM)}{\sqrt{2}\epsilon^{d-1}}\leq k(\epsilon) \leq \frac{1}{\epsilon^d}$ where $A(.)$ denotes the surface area in $d-$dimensional space. The lower bound is due to the fact that the area of any slice of the unit hypercube being at-most $\sqrt{2}$ \citep{ballCubeSlicing}. Upper bound is reached when the decision boundary traverses through all the cells in the grid which is less likely in practice. When the model complexity increases, we get a larger $k(\epsilon)$ as well as a smaller $v^*(\epsilon)$, requiring a higher number of queries to achieve similar probabilities of success.
\end{remark}
\begin{corollary}[Linear Models]
\label{cor_reluLinearModels}
For linear models with one-sided counterfactuals, $\sP\left[\text{Reconstruction}\right] = 1-(1-v)^n$ where $v$ is the volume of the unfavorable region. However, with two-sided counterfactuals, $\sP\left[\text{Reconstruction}\right] = 1$ with just one single query.
\end{corollary}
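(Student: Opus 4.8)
The plan is to derive Corollary~\ref{cor_reluLinearModels} as a direct specialization of Theorem~\ref{thm_cpwlComplexity}, since a linear model is the simplest CPWL classifier. First I would observe that a linear model $m(\boldsymbol{x}) = \sigma(\boldsymbol{a}^T\boldsymbol{x} + b)$ has a single affine decision boundary piece, namely the hyperplane $\{\boldsymbol{x} : \boldsymbol{a}^T\boldsymbol{x}+b = 0\}$ intersected with $[0,1]^d$. Hence in the notation of Theorem~\ref{thm_cpwlComplexity} there is effectively one decision boundary region, so $k(\epsilon)$ collapses to $1$ (the grid refinement is no longer needed because the decision boundary is globally affine, not just locally). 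Next, by Lemma~\ref{lem_closestPointNormal}, for \emph{any} query point $\boldsymbol{x}$ in the unfavorable region, the line through $\boldsymbol{x}$ and its closest counterfactual $\boldsymbol{w}$ is perpendicular to the hyperplane at $\boldsymbol{w}$; since the boundary is globally a single hyperplane, that perpendicular direction is the same everywhere, so one such query determines the entire decision boundary up to the side assignment — which is fixed because the queried point is known to be unfavorable. Therefore the inverse counterfactual region of the (unique) boundary piece is exactly the whole unfavorable region, and its volume is $v$. Plugging $k(\epsilon)=1$ and $v^*(\epsilon)=v$ into the bound of Theorem~\ref{thm_cpwlComplexity} gives $\sP[\text{Reconstruction}] \geq 1-(1-v)^n$; equality holds because reconstruction succeeds precisely when at least one of the $n$ i.i.d.\ uniform queries lands in the unfavorable region, an event with probability $1-(1-v)^n$ exactly.

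For the two-sided case, I would argue directly rather than through the theorem. With two-sided counterfactuals, a single query point $\boldsymbol{x}$ (from either side) together with its counterfactual $\boldsymbol{w} = g_m(\boldsymbol{x})$ already pins down the hyperplane: by Lemma~\ref{lem_closestPointNormal} the segment $\boldsymbol{x}\boldsymbol{w}$ is normal to the boundary, and for a linear model the closest counterfactual is the orthogonal projection of $\boldsymbol{x}$ onto the hyperplane, so $\boldsymbol{w}$ lies on the boundary and $\boldsymbol{x}-\boldsymbol{w}$ gives the normal direction $\boldsymbol{a}$; the offset $b$ is then fixed by requiring $\boldsymbol{a}^T\boldsymbol{w}+b = 0$, and the orientation (which side is favorable) is read off from $\threshold{m(\boldsymbol{x})}$. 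This reconstructs the decision boundary exactly, hence fidelity $1$ on any reference set, so $\sP[\text{Reconstruction}] = 1$ with a single query regardless of where it falls.

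The one subtlety to be careful about — and the only place this is more than bookkeeping — is the definition of ``successful reconstruction'' being used implicitly in Theorem~\ref{thm_cpwlComplexity}: reconstruction of a boundary piece $\sH_i$ counts as successful once at least one counterfactual query lands in its inverse region $\sG_{m,g_m}(\sH_i)$. I would make explicit that in the one-sided linear case the relevant event is ``at least one query in the unfavorable region,'' which has probability exactly $1-(1-v)^n$ under $n$ i.i.d.\ uniform draws, and note that with only one-sided access a query landing in the favorable region yields no counterfactual and hence no information about the boundary (beyond the label of that point), which is why the dependence on $v$ and $n$ cannot be removed — this is the formal sense in which one-sided reconstruction is strictly harder than two-sided, as advertised in the introduction. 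The rest is immediate substitution, so I do not anticipate a genuine obstacle; the main care is simply in stating the success event precisely and confirming the bound in Theorem~\ref{thm_cpwlComplexity} is tight (rather than strict) in this degenerate one-piece case.
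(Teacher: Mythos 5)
Your proposal is correct and follows essentially the same route as the paper, which treats this corollary as a direct specialization of Theorem~\ref{thm_cpwlComplexity}: the single affine boundary piece makes the inverse counterfactual region the entire unfavorable region (volume $v$) in the one-sided case, and the entire hypercube (volume $1$) in the two-sided case. Your added care in noting that one counterfactual pins down the hyperplane via Lemma~\ref{lem_closestPointNormal} (so the union bound collapses and the inequality becomes an equality) is exactly the reasoning the paper leaves implicit in its remark following the corollary.
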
%
This result mathematically demonstrates that allowing counterfactuals from both accepted and rejected regions (as in \citet{ulrichModelExtract,wangDualCF}) is easier for model reconstruction, when compared to the one-sided case. It effectively increases each $v_i(\epsilon)$ (volume of the inverse counterfactual region). As everything else remains unaffected, for a given $n$, $\sP[\text{Reconstruction}]$ is higher when counterfactuals from both regions are available. For a linear model, this translates to a guaranteed reconstruction with a single query since $v=1$.

However, we note that all of the aforementioned analysis relies on the closest counterfactual which can be challenging to generate. Practical counterfactual generating mechanisms usually provide counterfactuals that are reasonably close but may not be exactly the closest. This motivates us to now propose a more general strategy assuming local Lipschitz continuity of the models involved.

\subsection{Beyond closest counterfactuals}
\label{sec_lipschitzAttack}
Lipschitz continuity, a property that is often encountered in related works \citep{lipschitzExampleBartlett,goukLipschitzRegularization,lipschitzPauli,nnRobustCFs,robustness_journal,liuMonotonicity,monotonicityExampleSilva}, demands the model output does not change too fast. Usually, a smaller Lipschitz constant is indicative of a higher generalizability of a model \citep{goukLipschitzRegularization,lipschitzPauli}.
\begin{definition}[Local Lipschitz Continuity]
\label{def_lipschitz}
    A model $m$ is said to be locally Lipschitz continuous if for every $\boldsymbol{x}_1\in [0,1]^d$ there exists a neighborhood $\sB_{\boldsymbol{x}_1}\subseteq[0,1]^d$ around $\boldsymbol{x}_1$ such that for all $\boldsymbol{x}_2 \in \sB_{\boldsymbol{x}_1}$, $|m(\boldsymbol{x}_1)-m(\boldsymbol{x}_2)| \leq \gamma ||\boldsymbol{x}_1-\boldsymbol{x}_2||_2$ for some $\gamma\in\sR_0^+$.
\end{definition}

For analyzing model reconstruction under local-Lipschitz assumptions, we consider the difference of the model output probabilities (before thresholding) as a measure of similarity between the target and surrogate models because it would force the decision boundaries to overlap.  The proposed strategy is motivated from the following observation: the difference of two models' output probabilities corresponding to a given input instance $\boldsymbol{x}$ can be bounded by having another point with matching outputs in the affinity of the instance $\boldsymbol{x}$. This observation is formally stated in Theorem \ref{thm_reluLipschitzClamp}. See Appendix \ref{sec_proofReluLipschitzClamp} for a proof.
\begin{restatable}[]{theorem}{thmReluLipschitzClamp}
\label{thm_reluLipschitzClamp}
     Let the target $m$ and surrogate $\Tilde{m}$ be \acrshort{ReLU} classifiers such that $m(\boldsymbol{w})=\Tilde{m}(\boldsymbol{w})$ for every counterfactual $\boldsymbol{w}$. For any point $\boldsymbol{x}$ that lies in a decision boundary cell, $|\Tilde{m}(\boldsymbol{x})-m(\boldsymbol{x})| \leq \sqrt{d}(\gamma_{m}+\gamma_{\tilde{m}})\epsilon$ holds with probability $p \geq 1 - k(\epsilon)(1-v^*(\epsilon))^n$.
\end{restatable}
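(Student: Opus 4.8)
The plan is to condition on the ``successful reconstruction'' event from Theorem~\ref{thm_cpwlComplexity} and then run a short triangle-inequality argument inside each decision boundary cell.

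First, I would unpack what that event means. By construction, ``Reconstruction'' in Theorem~\ref{thm_cpwlComplexity} is exactly the event that for every index $i=1,\dots,k(\epsilon)$ at least one of the $n$ i.i.d.\ queries lands in the inverse counterfactual region $\sG_{m,g_m}(\sH_i)$, i.e.\ the counterfactual of that query lies on the affine piece $\sH_i$ of the decision boundary contained in the $i$-th decision boundary cell. Hence, on this event --- which by Theorem~\ref{thm_cpwlComplexity} has probability $p \geq 1 - k(\epsilon)(1-v^*(\epsilon))^n$ --- every decision boundary cell $C_i$ of the grid $\mathcal{N}_\epsilon$ contains a counterfactual $\boldsymbol{w}_i \in \sH_i \subseteq C_i$.

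Second, fix any $\boldsymbol{x}$ lying in a decision boundary cell $C_i$ and take $\boldsymbol{w}=\boldsymbol{w}_i$ to be the counterfactual guaranteed above, which lies in the same cell. Since $C_i$ is a hypercube of side length $\epsilon$, its diameter is $\sqrt{d}\,\epsilon$, so $\|\boldsymbol{x}-\boldsymbol{w}\|_2 \le \sqrt{d}\,\epsilon$. Then I would decompose, via the triangle inequality,
\[
|\Tilde{m}(\boldsymbol{x}) - m(\boldsymbol{x})| \le |\Tilde{m}(\boldsymbol{x}) - \Tilde{m}(\boldsymbol{w})| + |\Tilde{m}(\boldsymbol{w}) - m(\boldsymbol{w})| + |m(\boldsymbol{w}) - m(\boldsymbol{x})| .
\]
The middle term vanishes by the hypothesis $m(\boldsymbol{w}) = \Tilde{m}(\boldsymbol{w})$ for every counterfactual $\boldsymbol{w}$. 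The first term is at most $\gamma_{\tilde m}\|\boldsymbol{x}-\boldsymbol{w}\|_2 \le \gamma_{\tilde m}\sqrt{d}\,\epsilon$ and the third at most $\gamma_m\sqrt{d}\,\epsilon$ by local Lipschitz continuity (Definition~\ref{def_lipschitz}) of $\Tilde m$ and $m$. Summing yields $|\Tilde{m}(\boldsymbol{x}) - m(\boldsymbol{x})| \le \sqrt{d}(\gamma_m+\gamma_{\tilde m})\epsilon$ on the reconstruction event, which is the claim.

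The one place that needs care --- and which I expect to be the only (minor) obstacle --- is invoking local Lipschitz continuity: Definition~\ref{def_lipschitz} only supplies the bound inside some neighborhood $\sB_{\boldsymbol{x}_1}$ of each point, which need not contain all of $C_i$. I would handle this by covering the segment $[\boldsymbol{x},\boldsymbol{w}]$ (which stays in $C_i$ since cells are convex) by finitely many overlapping Lipschitz neighborhoods and summing the increments of $m$ (resp.\ $\Tilde m$) along the segment; since the total length is $\le \sqrt{d}\,\epsilon$ this reproduces the same constants. For \acrshort{ReLU} classifiers this subtlety is in fact vacuous: $m=\sigma(\ell)$ and $\Tilde m=\sigma(\tilde\ell)$ are \acrshort{CPWL} functions composed with the $1$-Lipschitz Sigmoid, hence globally Lipschitz on the compact domain $[0,1]^d$, so $\gamma_m,\gamma_{\tilde m}$ may be taken as genuine global constants and the neighborhood issue disappears.
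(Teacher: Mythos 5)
Your proposal is correct and follows essentially the same route as the paper: condition on the reconstruction event of Theorem~\ref{thm_cpwlComplexity} to get a counterfactual $\boldsymbol{w}$ in the same cell as $\boldsymbol{x}$, use the triangle inequality with the middle term killed by $m(\boldsymbol{w})=\Tilde{m}(\boldsymbol{w})$, and bound the remaining two terms by $(\gamma_m+\gamma_{\tilde m})\|\boldsymbol{x}-\boldsymbol{w}\|_2 \le (\gamma_m+\gamma_{\tilde m})\sqrt{d}\,\epsilon$. Your extra remark on patching local Lipschitz neighborhoods along the segment (and its vacuity for \acrshort{ReLU} classifiers, which are globally Lipschitz on $[0,1]^d$) is a point the paper glosses over, but it does not change the argument.
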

Note that within each decision boundary cell, models are affine and hence locally Lipschitz for some $\gamma_m, \gamma_{\Tilde{m}} \in \sR^+_0$. Local Lipschitz property assures that the approximation is quite close ($\gamma_m, \gamma_{\Tilde{m}}$ are small) except over a few small ill-behaved regions of the decision boundary. This result can be extended to any locally Lipschitz pair of models as stated in following corollary.

\begin{restatable}[]{corollary}{corLipschitzClamp}
\label{cor_lipschitzClamp}
    Suppose the target $m$ and surrogate $\tilde{m}$ are locally Lipschitz (not necessarily \acrshort{ReLU}) such that $m(\boldsymbol{w})=\Tilde{m}(\boldsymbol{w})$ for every counterfactual $\boldsymbol{w}$. Assume the counterfactuals are well-spaced out and forms a $\delta$-cover over the decision boundary. Then $|\Tilde{m}(\boldsymbol{x})-m(\boldsymbol{x})| \leq (\gamma_m+\gamma_{\tilde{m}})\delta,$ over the target decision boundary.
\end{restatable}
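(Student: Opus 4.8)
The plan is to reduce the statement to a single application of the triangle inequality together with the defining properties of a $\delta$-cover and of local Lipschitz continuity, bypassing the grid/sampling machinery of Theorem~\ref{thm_reluLipschitzClamp} entirely. Fix an arbitrary point $\boldsymbol{x}$ on the target decision boundary $\partial\sM$. Since the queried counterfactuals form a $\delta$-cover of $\partial\sM$, there is a counterfactual $\boldsymbol{w}$ with $||\boldsymbol{x}-\boldsymbol{w}||_2 \le \delta$. I would then write
\[
|\tilde{m}(\boldsymbol{x})-m(\boldsymbol{x})| \le |\tilde{m}(\boldsymbol{x})-\tilde{m}(\boldsymbol{w})| + |\tilde{m}(\boldsymbol{w})-m(\boldsymbol{w})| + |m(\boldsymbol{w})-m(\boldsymbol{x})|,
\]
where the middle term vanishes by the hypothesis $m(\boldsymbol{w})=\tilde{m}(\boldsymbol{w})$, leaving only the two outer terms to control.

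For those, I would invoke Definition~\ref{def_lipschitz}: provided $\boldsymbol{w}$ lies in the Lipschitz neighborhood $\sB_{\boldsymbol{x}}$ of $\boldsymbol{x}$, we get $|m(\boldsymbol{x})-m(\boldsymbol{w})| \le \gamma_m ||\boldsymbol{x}-\boldsymbol{w}||_2 \le \gamma_m \delta$, and symmetrically $|\tilde{m}(\boldsymbol{x})-\tilde{m}(\boldsymbol{w})| \le \gamma_{\tilde{m}}\delta$. Summing yields $|\tilde{m}(\boldsymbol{x})-m(\boldsymbol{x})| \le (\gamma_m + \gamma_{\tilde{m}})\delta$, and since $\boldsymbol{x}\in\partial\sM$ was arbitrary this bound holds over the whole target decision boundary, which is exactly the claim.

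The one point requiring care — and the main obstacle — is that Lipschitz continuity here is only \emph{local}: the bound $|m(\boldsymbol{x})-m(\boldsymbol{w})| \le \gamma_m||\boldsymbol{x}-\boldsymbol{w}||_2$ is only guaranteed when $\boldsymbol{w}\in\sB_{\boldsymbol{x}}$. I see two ways to handle this. The clean option is to fold into the phrase "well-spaced out'' the requirement that $\delta$ be small enough that every point of $\partial\sM$ has a covering counterfactual inside its own Lipschitz neighborhood, after which the argument above goes through verbatim. The more robust option is a chaining argument: the segment $[\boldsymbol{x},\boldsymbol{w}]\subseteq[0,1]^d$ is compact, hence covered by finitely many Lipschitz neighborhoods, so one subdivides it into finitely many subsegments each contained in a single neighborhood, applies the local bound on each, sums telescopically, and takes $\gamma_m$ to be the (finite, by compactness) supremum of the relevant local constants along the segment; this recovers $|m(\boldsymbol{x})-m(\boldsymbol{w})| \le \gamma_m\delta$ and likewise for $\tilde m$. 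Either route gives the result; I would state the corollary with the first convention and remark that the second removes the smallness restriction at the cost of interpreting $\gamma_m,\gamma_{\tilde m}$ as uniform local constants.
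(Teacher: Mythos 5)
Your argument is essentially the paper's own proof: the paper writes $|\tilde{m}(\boldsymbol{x})-m(\boldsymbol{x})| = |\tilde{m}(\boldsymbol{x})-\tilde{m}(\boldsymbol{w})-(m(\boldsymbol{x})-m(\boldsymbol{w}))|$ using $m(\boldsymbol{w})=\tilde{m}(\boldsymbol{w})$ and then applies the triangle inequality, the local Lipschitz bounds, and $||\boldsymbol{x}-\boldsymbol{w}||_2\le\delta$ from the $\delta$-cover --- exactly your decomposition with the vanishing middle term. Your closing remark about the gap between \emph{local} Lipschitz continuity and the bound over a distance $\delta$ is a legitimate point of care that the paper passes over silently; your first convention (fold the smallness of $\delta$ into ``well-spaced out'') matches the paper's implicit reading, and your chaining alternative is a sound way to remove it.
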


\begin{wrapfigure}[9]{r}{0.3\textwidth}
    \centering
    \includegraphics[width=0.3\textwidth]{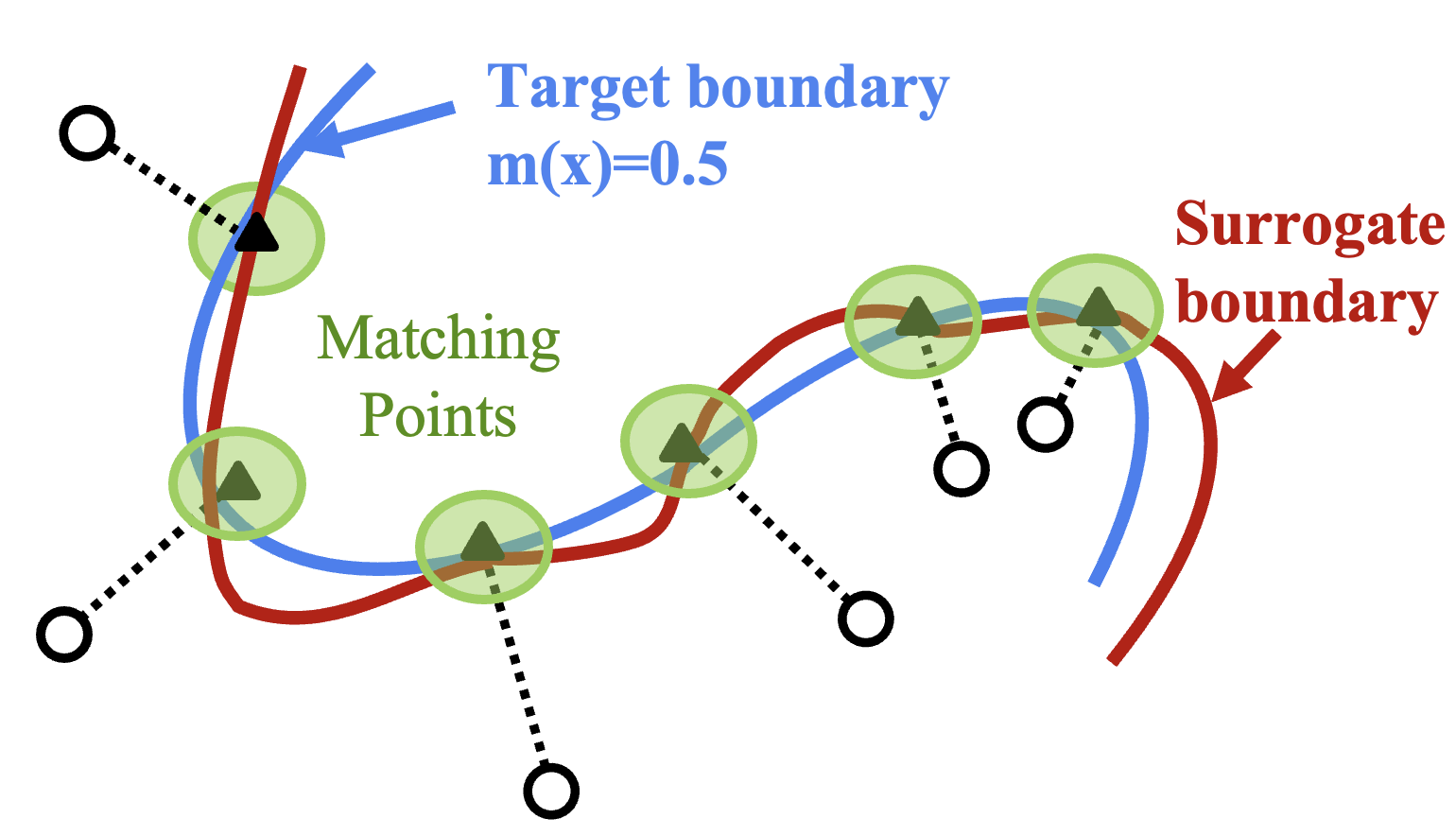}
    \caption{Rationale for Counterfactual Clamping Strategy.}
    \label{fig_lipschitz}
\end{wrapfigure}
Theorem \ref{thm_reluLipschitzClamp} provides the motivation for a novel model reconstruction strategy. Let $\boldsymbol{w}$ be a counterfactual. Recall that $\partial \sM$ denotes the decision boundary of $m$. As implied by the theorem, for any $\boldsymbol{x}\in\partial\sM$, the deviation of the surrogate model output from the target model output can be bounded above by $\sqrt{d}(\gamma_m+\gamma_{\Tilde{m}})\epsilon$  given that all the counterfactuals satisfy $m(\boldsymbol{w})=\Tilde{m}(\boldsymbol{w})$. Knowing that $m(\boldsymbol{w})=0.5$, we may design a loss function which \textbf{clamps} $\Tilde{m}(\boldsymbol{w})$ to be 0.5. \emph{Consequently, with a sufficient number of well-spaced counterfactuals to cover $\partial\sM$, we may achieve arbitrarily small $|\Tilde{m}(\boldsymbol{x}) - m(\boldsymbol{x})|$ at the decision boundary of $m$ (Fig. \ref{fig_lipschitz}).}

However, we note that simply forcing $\Tilde{m}(\boldsymbol{w})$ to be exactly equal to 0.5 is quite unstable since in practice the target model's output $m(\boldsymbol{w})$ is known to be close to 0.5, while being greater but may not be exactly equal. Thus, we propose a unique loss function for training the surrogate neural networks that does not penalize the counterfactuals that are already inside the favorable region of the surrogate model. For $0<k\leq 1$, we define the Counterfactual Clamping loss function as
\begin{align}
    \label{eq_lossFunction}
    \! L_k&(\Tilde{m}(\boldsymbol{x}),y_{\boldsymbol{x}}) 
    =  \mathds{1}\left[y_{\boldsymbol{x}}=0.5, \Tilde{m}(\boldsymbol{x})\leq k\right]
    \left\{L(\Tilde{m}(\boldsymbol{x}),k) - h(k) \right\} + \mathds{1}\left[y_{\boldsymbol{x}}\neq0.5\right] L(\Tilde{m}(\boldsymbol{x}), y_{\boldsymbol{x}}).
\end{align}

Here, $y_{\boldsymbol{x}}$ denotes the label assigned to the input instance $\boldsymbol{x}$ by the target model, received from the \acrshort{API}. $L(\hat{y},y)$ is the binary cross-entropy loss and $h(\cdot)$ denotes the binary entropy function. We assume that the counterfactuals are distinguishable from the ordinary instances, and assign them a label of $y_{\boldsymbol{x}}=0.5$. The first term accounts for the counterfactuals, where they are assigned a non-zero loss if the surrogate model's prediction is below $k$. This term ensures $\Tilde{m}(\boldsymbol{w}) \gtrsim k$ for the counterfactuals $\boldsymbol{w}$ while not penalizing the ones that lie farther inside the favorable region. The second term is the ordinary binary cross-entropy loss, which becomes non-zero only for ordinary query instances. Note that substituting $k=1$ in $L_k(\Tilde{m}(\boldsymbol{x}),y_{\boldsymbol{x}})$ yields the ordinary binary cross entropy loss. Algorithm \ref{alg_lipschitzAttack} summarizes the proposed strategy.
\begin{wrapfigure}[17]{r}{0.55\textwidth}
\begin{minipage}{0.55\textwidth}
\vspace{-0.2cm}
\begin{algorithm}[H]
\caption{Counterfactual Clamping}
\label{alg_lipschitzAttack}
\begin{algorithmic}[1]
\REQUIRE Attack dataset $\sD_\text{attack}$, $k$ ($k\in(0,1]$, usually 0.5), \acrshort{API} for querying
\ENSURE Trained surrogate model $\Tilde{m}$
\STATE Initialize $\sA = \{\}$
\FOR {$\boldsymbol{x}\in \sD_\text{attack}$}
    \STATE Query \acrshort{API} with $\boldsymbol{x}$ to get $y_{\boldsymbol{x}}$ \COMMENT{$y_{\boldsymbol{x}}\in\{0,1\}$}
    \STATE $\sA \gets \sA \cup \{(\boldsymbol{x},y_{\boldsymbol{x}}\}$
    \IF {$y_{\boldsymbol{x}}=0$}
        \STATE Query \acrshort{API} for counterfactual $\boldsymbol{w}$ of $\boldsymbol{x}$
        \STATE $\sA \gets \sA \cup \{(\boldsymbol{w}, 0.5)\}$ \COMMENT{Assign $\boldsymbol{w}$ a label of 0.5}
    \ENDIF
\ENDFOR
\STATE Train $\Tilde{m}$ on $\sA$ with $L_k(\Tilde{m}(\boldsymbol{x}),y_{\boldsymbol{x}})$ as the loss
\STATE \textbf{return} $\Tilde{m}$
\end{algorithmic}
\end{algorithm} 
\end{minipage}
\end{wrapfigure}

It is noteworthy that this approach is different from the broad area of soft-label learning \cite{nguyenSoftLabels1,nguyenSoftLabels2} in two major aspects: (i) the labels in our problem do not smoothly span the interval [0,1] -- instead they are either 0, 1 or 0.5; (ii) labels of counterfactuals do not indicate a class probability; even though a label of 0.5 is assigned, there can be counterfactuals that are well within the surrogate decision boundary which should not be penalized. Nonetheless, we also perform ablation studies where we compare the performance of our proposed loss function with another potential loss which simply forces $\Tilde{m}(\boldsymbol{w})$ to be exactly $0.5$, inspired from soft-label learning (see Appendix \ref{appdx_alternateLosses} for results).

Counterfactual Clamping overcomes two challenges beset in existing works; (i) the problem of decision boundary shift (particularly with one-sided counterfactuals) present in the method suggested by \citet{ulrichModelExtract} and (ii) the need for counterfactuals from both sides of the decision boundary in the methods of \citet{ulrichModelExtract} and \citet{wangDualCF}.

\section{Experiments}

\label{sec_experiments}

We carry out a number of experiments to study the performance of our proposed strategy Counterfactual Clamping. We include some results here and provide further details in Appendix \ref{appdx_experiments}.

All the classifiers are neural networks unless specified otherwise and their decision boundaries are \textit{not necessarily convex}. The performance of our strategy is compared with the existing attack presented in \cite{ulrichModelExtract} that we refer to as ``Baseline'', for the case of one-sided counterfactuals. As the initial counterfactual generating method, we use an implementation of the Minimum Cost Counterfactuals (MCCF) by \citet{watcherCounterfactuals}. 

\textit{\textbf{Performance metrics:}} Fidelity is used for evaluating the agreement between the target and surrogate models. It is evaluated over both uniformly generated instances (denoted by $\sD_{\text{uni}}$) and test data instances from the data manifold (denoted by  $\sD_{\text{test}}$) as the reference dataset $\sD_{\text{ref}}$.

A summary of the experiments is provided below with additional details in Appendix \ref{appdx_experiments}.


\textbf{(i) Visualizing the attack using synthetic data:} First, the effect of the proposed loss function in mitigating the decision boundary shift is observed over a 2-D synthetic dataset. Fig. \ref{fig_2dDemoBoundaries} presents the results. In the figure, it is clearly visible that the Baseline model is affected by a decision boundary shift. In contrast, the CCA model's decision boundary closely approximates the target decision boundary. See Appendix \ref{appdx_2dVisualization} for more details.
\begin{figure*}[htb]
    \centering
    \includegraphics[width=0.8\textwidth]{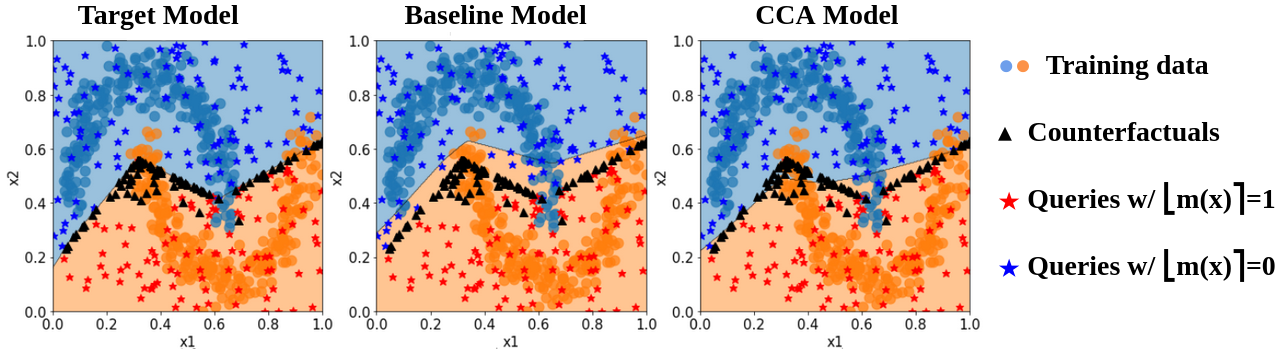}
    \caption{A 2-D demonstration of the proposed strategy. Orange and blue shades denote the favorable and unfavorable decision regions of each model. Circles denote the target model's training data.}
    \label{fig_2dDemoBoundaries}
\end{figure*}

\textbf{(ii) Comparing performance over four real-world dataset:} We use four publicly available real-world datasets namely, Adult Income, COMPAS, DCCC, and HELOC (see Appendix \ref{appdx_datasets}) for our experiments. 
Table \ref{tab_fidelityWith400Queries} provides some of the results over four real-world datasets. We refer to Appendix \ref{appdx_realWorldPerformance} (specifically Fig. \ref{fig_realMCCFFidelity}) for additional results. In all cases, we observe that CCA has either better or similar fidelity as compared to Baseline. 

\def\tabcolwidth{0.01\textwidth}
\def\cfmethodA{MCCF}
\begin{table*}[htb]
\centering
\caption{Average fidelity achieved with 400 queries on the real-world datasets over an ensemble of size 100. Target model has hidden layers with neurons (20,10). Model 0 is similar to the target model in architecture. Model 1 has hidden layers with neurons (20, 10, 5).}
\label{tab_fidelityWith400Queries}
\small
\begin{tabular}{|c|cccc|cccc|}
    \hline
    \multirow{2}{*}{} & \multicolumn{4}{c|}{Architecture known (model 0)} & \multicolumn{4}{c|}{Architecture unknown (model 1)}\\[0.1cm]
    Dataset & \multicolumn{2}{c}{$\sD_\text{test}$} &  \multicolumn{2}{c|}{$\sD_\text{uni}$} & \multicolumn{2}{c}{$\sD_\text{test}$} & \multicolumn{2}{c|}{$\sD_\text{uni}$}\\ \cline{2-3} \cline{4-5} \cline{6-7} \cline{8-9}
    & Base. & CCA & Base. & CCA & Base. & CCA & Base. & CCA\\
    \hline
    Adult In. & 91$\pm$3.2 & 94$\pm$3.2 & 84$\pm$3.2 & 91$\pm$3.2 & 91$\pm$4.5 & 94$\pm$3.2 & 84$\pm$3.2 & 90$\pm$3.2 \\
    \hline
    COMPAS & 92$\pm$3.2 & 96$\pm$2.0 & 94$\pm$1.7 & 96$\pm$2.0 & 91$\pm$8.9 & 96$\pm$3.2 & 94$\pm$2.0 & 94$\pm$8.9 \\
    \hline
    DCCC & 89$\pm$8.9 & 99$\pm$0.9 & 95$\pm$2.2 & 96$\pm$1.4 & 90$\pm$7.7 & 97$\pm$4.5 & 95$\pm$2.2 & 95$\pm$11.8 \\
    \hline
    HELOC & 91$\pm$4.7 & 96$\pm$2.2 & 92$\pm$2.8 & 94$\pm$2.4 & 90$\pm$7.4 & 95$\pm$5.5 & 91$\pm$3.3 & 93$\pm$3.2 \\
    \hline
\end{tabular}
\end{table*}

\textbf{(iii) Studying effects of Lipschitz constants:} We study the connection between the target model's Lipschitz constant and the CCA performance. Target model's Lipschitz constant is controlled by changing the $L_2-$regularization coefficient, while keeping the surrogate models fixed. Results are presented in Fig. \ref{fig_lipschitzDependence}. Target models with a smaller Lipschitz constant are easier to extract. More details are provided in Appendix \ref{appdx_lipschitzDependence}.

\textbf{(iv) Studying different model architectures:} We also consider different surrogate model architectures spanning models that are more complex than the target model to much simpler ones. Results show that when sufficiently close to the target model in complexity, the surrogate architecture plays a little role on the performance. See Appendix \ref{appdx_surrogateArchitecture} for details. Furthermore, two situations are considered where the target model is not a neural network in Fig. \ref{fig_effectOtherTargetTypes} and Appendix \ref{appdx_otherMLModels}. In both scenarios, CCA surpasses the baseline.

\textbf{(v) Studying other counterfactual generating methods:} Effects of counterfactuals being sparse, actionable, realistic, and robust are observed. Sparse counterfactuals are generated by using $L_1-$norm as the cost function. Actionable counterfactuals are generated using \acrshort{DiCE} \citep{DiCE} by defining a set of immutable features. Realistic counterfactuals (that lie on the data manifold) are generated by retrieving the 1-Nearest-Neighbor from the accepted side for a given query, as well as using the autoencoder-based method C-CHVAE~\citep{cchvae}. Additionally, we generate robust counterfactuals using ROAR~\citep{roar}. We evaluate the attack performance on the HELOC dataset (Table \ref{tab_cfMethodResults}). Moreover we observe the distribution of the counterfactuals generated using each method w.r.t. the target model's decision boundary using histograms (Fig. \ref{fig_cfTypeHistograms}). Additional details are given in Appendix \ref{appdx_CFMethod}.

\begin{table}[ht]
\small
\centering
\caption{Fidelity achieved with different counterfactual generating methods on HELOC dataset. Target model has hidden layers with neurons (20, 30, 10). Surrogate model architecture is (10, 20).}
\begin{tabular}{@{\extracolsep{3pt}} c cccc cccc}
    \toprule
    \multirow{2}{*}{} & \multicolumn{4}{c}{Fidelity over $\sD_\text{test}$} & \multicolumn{4}{c}{Fidelity over $\sD_\text{uni}$}\\[0.1cm]
    CF method & \multicolumn{2}{c}{n=100} &  \multicolumn{2}{c}{n=200} & \multicolumn{2}{c}{n=100} & \multicolumn{2}{c}{n=200}\\ \cmidrule{2-3} \cmidrule{4-5} \cmidrule{6-7} \cmidrule{8-9}
    & Base. & CCA & Base. & CCA & Base. & CCA & Base. & CCA\\
    \midrule
    MCCF L2-norm         &  91 &  95 &  93 &  96 &  91 &  93 &  93 &  95 \\
    MCCF L1-norm         &  93 &  95 &  94 &  96 &  89 &  92 &  91 &  95 \\
    DiCE Actionable      &  93 &  94 &  95 &  95 &   90 &  91 &  93 &  94 \\
    1-Nearest-Neightbor  &  93 &  95 &  94 &  96 &  93 &  93 &  94 &  95 \\
    ROAR \citep{roar}    &  91 &  92 &  93 &  95 &  87 &  85 &  92 &  92 \\
    C-CHVAE \citep{cchvae} & 77 & 80 & 78 & 82 & 90 & 89 & 85 & 78 \\
    \bottomrule
\end{tabular}
\label{tab_cfMethodResults}
\end{table}

\begin{figure}[ht]
    \centering
    \includegraphics[width=\textwidth]{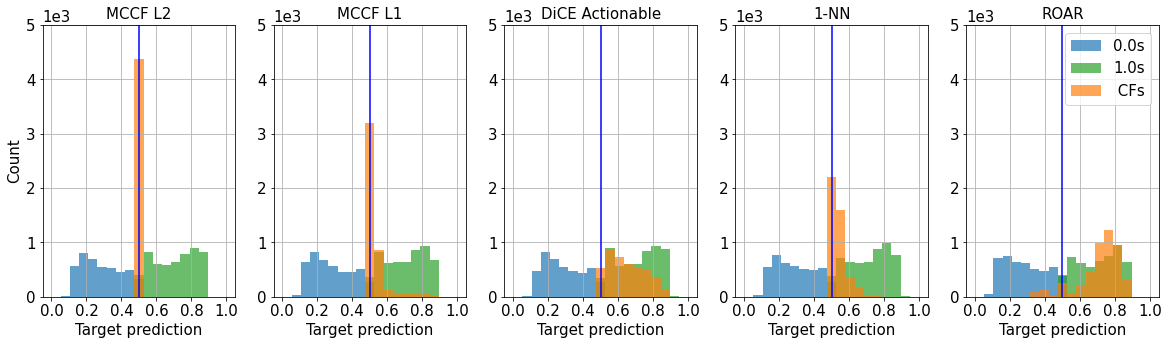}
    \caption{Histograms of the target model's predictions on different types of input instances. Counterfactual generating methods except MCCF with $L_2$ norm often generate counterfactuals that are farther inside the favorable region, hence having a target model prediction much greater than 0.5. We count all the query results across all the target models in the ensembles used to compute the average fidelities corresponding to each counterfactual generating method.}
    \label{fig_cfTypeHistograms}
\end{figure}

\textbf{(vi) Comparison with DualCFX:} DualCFX proposed by \citet{wangDualCF} is a strategy that utilizes the counterfactual of the counterfactuals to mitigate the decision boundary shift. We compare CCA with DualCFX in Table \ref{tab_dualCFXComparison}, Appendix \ref{appdx_dualCFXComparison}.

\textbf{(vii) Studying alternate loss functions:} We explore using binary cross-entropy loss function directly with labels 0, 1 and 0.5, in place of the proposed loss. However, experiments indicate that this scheme performs poorly when compared with the CCA loss (see Fig. \ref{fig_alternateLosses} and Appendix \ref{appdx_alternateLosses}). 

\textbf{(viii) Validating Theorem \ref{thm_randomPolytopeComplexity}:} Empirical verification of the theorem is done through synthetic experiments, where the model has a spherical decision boundary since they are known to be more difficult for polytope approximation~\citep{aryaPolytope}. Fig. \ref{fig_sphericalComplexity} presents a log-log plot comparing the theoretical and empirical query complexities for several dimensionality values $d$. The empirical approximation error decays faster than $n^{-2/(d-1)}$ as predicted by the theorem (see  Appendix \ref{appdx_thmPolytopeExp}).

\section{Conclusion}
Our work provides novel insights that bridge explainability and privacy through a set of theoretical guarantees on model reconstruction using counterfactuals. We also propose a practical model reconstruction strategy based on the analysis. Experiments demonstrate a significant improvement in fidelity compared to the baseline method proposed in \citet{ulrichModelExtract} for the case of one-sided counterfactuals. Results show that the attack performs well across different model types and counterfactual generating methods. Our findings also highlight an interesting connection between Lipschitz constant and vulnerability to model reconstruction. 

\textbf{Limitations and Future Work:}
Even though Theorem \ref{thm_cpwlComplexity} provides important insights about the role of query size in model reconstruction, it lacks an exact characterization of $k(\epsilon)$ and $v_i(\epsilon)$. Moreover, local Lipschitz continuity might not be satisfied in some machine learning models such as decision trees. Any improvements along these lines can be avenues for future work. Utilizing techniques in active learning in conjunction with counterfactuals is another problem of interest. Extending the results of this work for multi-class classification scenarios can also be explored. The relationship between Lipschitz constant and vulnerability to model reconstruction may have implications for future work on generalization, robustness, etc. Another direction of future work is to design defenses for model reconstruction attacks, leveraging and building on strategies for robust counterfactuals~\citep{treeRobustCFs, robustness_journal, roar, blackSNS, jiang2023formalising, cchvae}.

\textbf{Broader Impact:}
We demonstrate that one-sided counterfactuals can be used for perfecting model reconstruction. While this can be beneficial in some cases, it also exposes a potential vulnerability in \acrshort{MLaaS} platforms. Given the importance of counterfactuals in explaining model predictions, we hope our work will inspire countermeasures and defense strategies, paving the way toward secure and trustworthy machine learning systems.

\FloatBarrier
\newpage
\bibliographystyle{abbrvnat}
\bibliography{reference}

\appendix
\onecolumn

\section{Proof of Theoretical Results}

\label{appdx_proofs}

\subsection{Proof of Lemma \ref{lem_tangent} and Lemma \ref{lem_closestPointNormal}}
\label{sec_proofClosestCF}
\lemtangent*
\begin{proof}
    From Definition \ref{def_touching}, there exists a non-empty neighborhood $\mathcal{B}_{\boldsymbol{w}}$ around $\boldsymbol{w}$, such that $\forall \boldsymbol{x} \in \mathcal{B}_{\boldsymbol{w}}$ with $ \mathcal{S}(\boldsymbol{x})=0$ and $\boldsymbol{x}\neq\boldsymbol{w}$,  only one of $\mathcal{T}(\boldsymbol{x})>0$ or $\mathcal{T}(\boldsymbol{x})<0$ holds. Let $\boldsymbol{x}=(x_1,x_2,\dots,x_d)$ and $\boldsymbol{x}_{[p]}$ denote $\boldsymbol{x}$ without $x_p$ for $1\leq p\leq d$. Then, within the neighborhood  $\mathcal{B}_{\boldsymbol{w}}$, we may re-parameterize $\mathcal{S}(\boldsymbol{x})=0$ as $x_p = S(\boldsymbol{x}_{[p]})$. Note that a similar re-parameterization denoted by $x_p=T(\boldsymbol{x}_{[p]})$ can be applied to $\mathcal{T}(\boldsymbol{x})=0$ as well. Let $\mathcal{A}_{\boldsymbol{w}} = \left\{\boldsymbol{x}_{[p]}:\boldsymbol{x}\in \mathcal{B}_{\boldsymbol{w}}\setminus\{\boldsymbol{w}\}\right\}$. From Definition \ref{def_touching}, all $\boldsymbol{x}\in\mathcal{B}_{\boldsymbol{w}}\setminus\{\boldsymbol{w}\}$ satisfy only one of $\mathcal{T}(\boldsymbol{x})<0$ or $\mathcal{T}(\boldsymbol{x})>0$, and hence without loss of generality the re-parameterization of $\mathcal{T}(\boldsymbol{x})=0$ can be such that $S(\boldsymbol{x}_{[p]}) < T(\boldsymbol{x}_{[p]})$ holds for all $\boldsymbol{x}_{[p]}\in\mathcal{A}_{\boldsymbol{w}}$. Now, define $F(\boldsymbol{x}_{[p]})\equiv T(\boldsymbol{x}_{[p]})-S(\boldsymbol{x}_{[p]})$. Observe that $F(\boldsymbol{x}_{[p]})$ has a minimum at $\boldsymbol{w}$ and hence, $\nabla_{\boldsymbol{x}_{[p]}} F(\boldsymbol{w}_{[p]}) = 0$. Consequently, $\nabla_{\boldsymbol{x}_{[p]}} T(\boldsymbol{w}_{[p]}) = \nabla_{\boldsymbol{x}_{[p]}} S(\boldsymbol{w}_{[p]})$, which implies that the tangent hyperplanes to both hypersurfaces have the same gradient at $\boldsymbol{w}$. Proof concludes by observing that since both tangent hyperplanes go through $\boldsymbol{w}$, the two hypersurfaces should share a common tangent hyperplane at $\boldsymbol{w}$.
\end{proof}

\lemclosestPointNormal*
\begin{proof} 
The proof utilizes the following lemma.

\begin{lemma}
\label{lem_circle}
Consider the $d$-dimensional ball $\mathcal{C}_{\boldsymbol{x}}$ centered at $\boldsymbol{x}$, with $\boldsymbol{w}$ lying on its boundary (hence $\mathcal{C}_{\boldsymbol{x}}$ intersects $\mathcal{S}$ at $\boldsymbol{w}$). Then,  $\mathcal{S}$ lies completely outside $\mathcal{C}_{\boldsymbol{x}}$.
\end{lemma}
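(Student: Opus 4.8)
The plan is to exploit the defining optimality property of the closest counterfactual $\boldsymbol{w}=g_m(\boldsymbol{x})$: among all admissible counterfactuals it is the one nearest to $\boldsymbol{x}$ in Euclidean norm, and the radius of $\mathcal{C}_{\boldsymbol{x}}$ is precisely $r:=\|\boldsymbol{x}-\boldsymbol{w}\|_2$ since $\boldsymbol{w}$ lies on $\partial\mathcal{C}_{\boldsymbol{x}}$. Thus it suffices to show that no point of $\mathcal{S}$ lies strictly inside the ball of radius $r$ centred at $\boldsymbol{x}$, i.e. $\mathcal{S}\cap\mathrm{int}(\mathcal{C}_{\boldsymbol{x}})=\emptyset$ (the sphere $\partial\mathcal{C}_{\boldsymbol{x}}$ may of course be touched, in particular at $\boldsymbol{w}$ itself).

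First I would establish the bridge between ``closest counterfactual'' and ``closest point of $\mathcal{S}$''. Since $m$ is continuous, the region $\{\boldsymbol{x}':\threshold{m(\boldsymbol{x}')}\neq\threshold{m(\boldsymbol{x})}\}\cap[0,1]^d$ over which $g_m$ minimizes distance is (essentially) closed and its topological boundary is contained in $\mathcal{S}=\{m=0.5\}$, while $\boldsymbol{x}\notin\mathcal{S}$ does not lie in it. Hence the minimizer $\boldsymbol{w}$ must lie on $\mathcal{S}$; conversely every point $\boldsymbol{y}\in\mathcal{S}$ inside $[0,1]^d$ satisfies $\threshold{m(\boldsymbol{y})}\neq\threshold{m(\boldsymbol{x})}$ and so is itself an admissible counterfactual of $\boldsymbol{x}$. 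Therefore, by the minimality in Definition \ref{gm_definition}, $\|\boldsymbol{x}-\boldsymbol{y}\|_2\geq\|\boldsymbol{x}-\boldsymbol{w}\|_2=r$ for every $\boldsymbol{y}\in\mathcal{S}$.

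This inequality is exactly the assertion of the lemma: every point of $\mathcal{S}$ is at distance at least $r$ from the centre $\boldsymbol{x}$, hence lies outside the open ball $\mathcal{C}_{\boldsymbol{x}}$. Equivalently, one can phrase it by contradiction: a point $\boldsymbol{y}\in\mathcal{S}$ with $\|\boldsymbol{x}-\boldsymbol{y}\|_2<r$ would be an admissible counterfactual strictly closer to $\boldsymbol{x}$ than $\boldsymbol{w}$, contradicting $\boldsymbol{w}=g_m(\boldsymbol{x})$. This is also precisely the geometric input needed downstream: it says $\mathcal{S}$ and the sphere $\partial\mathcal{C}_{\boldsymbol{x}}$ touch at $\boldsymbol{w}$ in the sense of Definition \ref{def_touching}, so Lemma \ref{lem_tangent} applies and gives the perpendicularity claimed in Lemma \ref{lem_closestPointNormal}.

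The only delicate point -- and hence the main thing to pin down carefully, rather than a genuine obstacle -- is the identification used in the second paragraph: that a minimizer of distance-to-$\boldsymbol{x}$ over the opposite-thresholded-label region coincides with a minimizer over $\mathcal{S}$ itself. This relies on continuity of $m$ (so that the favorable region is closed with boundary in $\mathcal{S}$) and on $\boldsymbol{x}$ lying in the open unfavorable region, hence not already in that closed set. One must also be mindful of the hypercube constraint: the counterfactual is restricted to $[0,1]^d$, so $\mathcal{S}$ is understood throughout as the decision boundary inside $[0,1]^d$ (consistent with the rest of the paper). Everything else is immediate.
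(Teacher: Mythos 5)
Your argument is correct and is essentially the paper's proof: both reduce the claim to the minimality of $\|\boldsymbol{x}-\boldsymbol{w}\|_2$, concluding that any point of $\mathcal{S}$ strictly inside $\mathcal{C}_{\boldsymbol{x}}$ would contradict $\boldsymbol{w}$ being the closest admissible point. The only difference is that you explicitly justify the identification of ``closest counterfactual'' with ``closest point of $\mathcal{S}$'' (via continuity of $m$ and the one-sided thresholding), a step the paper's one-line contradiction leaves implicit.
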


The proof of Lemma \ref{lem_circle} follows from the following contradiction. Assume a part of $\mathcal{S}$ lies within $\mathcal{C}_{\boldsymbol{x}}$. Then, points on the intersection of $\mathcal{S}$ and the interior of $\mathcal{C}_{\boldsymbol{x}}$ are closer to $\boldsymbol{x}$ than $\boldsymbol{w}$. Hence, $\boldsymbol{w}$ can no longer be the closest point to $\boldsymbol{x}$, on $\mathcal{S}$.\\

From Lemma \ref{lem_circle}, $\mathcal{C}_{\boldsymbol{x}}$ is touching the curve $\mathcal{S}$ at $\boldsymbol{w}$, and hence, they share the same tangent hyperplane at $\boldsymbol{w}$ by Lemma \ref{lem_tangent}. Now, observing that the line joining $\boldsymbol{w}$ and $\boldsymbol{x}$, being a radius of $\mathcal{C}_{\boldsymbol{x}}$, is the normal to the ball at $\boldsymbol{w}$ concludes the proof (see Fig. \ref{fig_closestCF}).
\end{proof}
\begin{figure}[h]
    \centering
    \includegraphics[width=5cm]{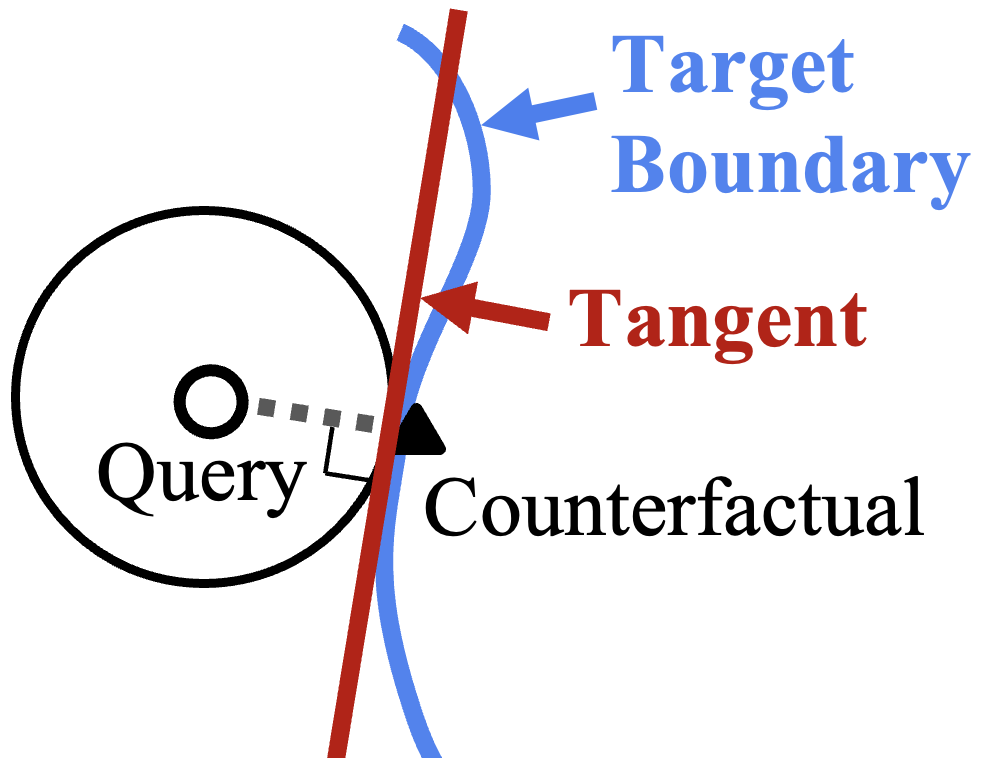}
    \caption{Line joining the query and its closest counterfactual is perpendicular to the decision boundary at the counterfactual. See Lemma \ref{lem_closestPointNormal} for details.}
    \label{fig_closestCF}
\end{figure}

We present the following corollary as an additional observation resulting from Lemma \ref{lem_circle}.
\begin{corollary}
    Following Lemma \ref{lem_circle}, it can be seen that all the points in the $d$-dimensional ball with $\boldsymbol{x}$ as the center and $\boldsymbol{w}$ on boundary lies on the same side of $\mathcal{S}$ as $\boldsymbol{x}$.
\end{corollary}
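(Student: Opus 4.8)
The plan is to read the corollary off directly from Lemma~\ref{lem_circle} together with an elementary connectedness argument. By Lemma~\ref{lem_circle}, the hypersurface $\mathcal{S}$ lies entirely outside the ball $\mathcal{C}_{\boldsymbol{x}}$, and since $\boldsymbol{w}\in\mathcal{S}$ lies on the bounding sphere of $\mathcal{C}_{\boldsymbol{x}}$ (and the closest counterfactual is the unique nearest point of $\mathcal{S}$ to $\boldsymbol{x}$), the only point of $\mathcal{C}_{\boldsymbol{x}}$ lying on $\mathcal{S}$ is $\boldsymbol{w}$. Recalling that $\mathcal{S}=\partial\sM=\{\boldsymbol{z}\in[0,1]^d: m(\boldsymbol{z})=0.5\}$, this says equivalently that the continuous function $\boldsymbol{z}\mapsto m(\boldsymbol{z})-\tfrac12$ is nonzero throughout $\mathcal{C}_{\boldsymbol{x}}\setminus\{\boldsymbol{w}\}$.

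Next I would invoke connectedness. Removing the single boundary point $\boldsymbol{w}$ from the closed ball $\mathcal{C}_{\boldsymbol{x}}$ leaves a path-connected set: for $d\ge 2$ any two points of $\mathcal{C}_{\boldsymbol{x}}\setminus\{\boldsymbol{w}\}$ can be joined by a path avoiding $\boldsymbol{w}$ (e.g.\ by first pushing both points radially inward toward the center), and for $d=1$ the ``ball'' is a closed interval with $\boldsymbol{w}$ as one endpoint, so the claim is immediate. A continuous real-valued function that vanishes nowhere on a connected set has constant sign on that set; applying this to $m(\cdot)-\tfrac12$ on $\mathcal{C}_{\boldsymbol{x}}\setminus\{\boldsymbol{w}\}$, and noting that the center $\boldsymbol{x}$ belongs to this set, we get $\operatorname{sign}\!\left(m(\boldsymbol{z})-\tfrac12\right)=\operatorname{sign}\!\left(m(\boldsymbol{x})-\tfrac12\right)$ for every $\boldsymbol{z}\in\mathcal{C}_{\boldsymbol{x}}\setminus\{\boldsymbol{w}\}$. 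That is exactly the assertion that every such point lies in the same decision region, i.e.\ on the same side of $\mathcal{S}$, as $\boldsymbol{x}$.

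The only place that needs care — and essentially the sole obstacle — is the contact point $\boldsymbol{w}$ itself: it lies on $\mathcal{S}$ and hence on neither side, so the statement must be read as applying to all points of $\mathcal{C}_{\boldsymbol{x}}$ \emph{other than} $\boldsymbol{w}$ (equivalently, the open ball together with its bounding sphere minus $\boldsymbol{w}$); if the nearest point of $\mathcal{S}$ to $\boldsymbol{x}$ is not unique one simply excludes the finitely many tie points, and the connectedness argument is unaffected. With that reading, no further work is required: the corollary follows from Lemma~\ref{lem_circle} and the connectedness observation above, and it records the geometric content that underlies the polytope picture — the half-space supporting $\sM$ at $\boldsymbol{w}$ contains not only $\boldsymbol{x}$ but the entire ball around $\boldsymbol{x}$ touching $\mathcal{S}$ at $\boldsymbol{w}$.
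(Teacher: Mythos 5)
Your proof is correct and matches the route the paper intends: the paper states this corollary without proof as an immediate observation from Lemma~\ref{lem_circle}, and your argument simply makes the implicit step explicit — since $\mathcal{S}=\{m=0.5\}$ misses the interior of $\mathcal{C}_{\boldsymbol{x}}$, the continuous function $m-\tfrac12$ has constant sign on the connected set $\mathcal{C}_{\boldsymbol{x}}$ minus its contact point(s) with $\mathcal{S}$, hence agrees in sign with its value at the center $\boldsymbol{x}$. Your caveats about excluding $\boldsymbol{w}$ (and any non-unique nearest points, all of which lie on the bounding sphere and so do not disconnect the set) are exactly the right reading of the statement.
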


\subsection{Proof of Theorem \ref{thm_randomPolytopeComplexity}}
\label{sec_proofRandomPolytopeComplexity}
\thmrandomPolytopeComplexity*
\begin{proof}
    We first have a look at \citet[Theorem 1 (restated as Theorem \ref{thm_boroczkyRandomPolytope} below)]{boroczkyRandomPolytope} from the polytope theory. Let $\sM$ be a compact convex set with a second-order differentiable boundary denoted by $\partial \sM$. Let $\boldsymbol{a}_1,\dots,\boldsymbol{a}_n$ be $n$ randomly chosen points on $\partial \sM$, distributed independently and identically according to a given density $d_{\partial \sM}$. Denote by $H_+(\boldsymbol{a}_i)$ the supporting hyperplane of $\partial \sM$ at $\boldsymbol{a}_i$. Assume $C$ to be a large enough hypercube which contains $\sM$ in its interior. 
    
    Now, define
    \begin{equation}
        \Tilde{\sM}_n = \bigcap_{i=1}^n H_+(\boldsymbol{a}_i) \cap C
    \end{equation}
    which is the polytope created by the intersection of all the supporting hyperplanes. The theorem characterizes the expected difference of the volumes of $\sM$ and $\Tilde{\sM}_n$.
    \begin{theorem}[Random Polytope Approximation, \citep{boroczkyRandomPolytope}]
        \label{thm_boroczkyRandomPolytope}
        For a convex compact set $\sM$ with second-order differentiable $\partial\sM$ and non-zero continuous density $d_{\partial\sM}$,
        \begin{equation}
            \ex{V(\Tilde{\sM}_n)-V(\sM)} = \tau\left(\partial \sM, d\right) n^{-\frac{2}{d-1}} + o\left(n^{-\frac{2}{d-1}}\right)
        \end{equation}
        as $n\rightarrow\infty$, where $V(\cdot)$ denotes the volume (i.e., the Lebesgue measure), and $\tau(\partial \sM, d)$ is a constant that depends only on the boundary $\partial\sM$ and the dimensionality $d$ of the space.
    \end{theorem}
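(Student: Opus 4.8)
The plan is to reduce the global quantity $\ex{V(\Tilde{\sM}_n)-V(\sM)}$ to an integral of purely local contributions over $\partial\sM$ and then extract the rate from a rescaling argument. Since each supporting halfspace $H_+(\boldsymbol{a}_i)$ contains $\sM$, we have $\sM\subseteq\Tilde{\sM}_n$, so the difference equals the volume of the thin outer shell $\Tilde{\sM}_n\setminus\sM$. I would represent this, to leading order, as a surface integral $\int_{\partial\sM} H_n(\boldsymbol{b})\,d\sigma(\boldsymbol{b})$, where $H_n(\boldsymbol{b})$ is the thickness, measured along the outward normal at $\boldsymbol{b}$, of the gap between $\partial\sM$ and the boundary of the circumscribed polytope (which is assembled from pieces of the tangent hyperplanes $H(\boldsymbol{a}_i)$), and $d\sigma$ is surface measure. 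The Jacobian relating the normal integration to $d\sigma$, together with the fact that the shell is not exactly a graph over $\partial\sM$, contributes only lower-order terms that I would fold into the final remainder.

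First I would localize and invoke the second-order smoothness. Fix $\boldsymbol{b}\in\partial\sM$ and choose tangent--normal coordinates so that the tangent hyperplane at $\boldsymbol{b}$ is $\{x_d=0\}$ and $\sM$ lies locally in $\{x_d\ge 0\}$. By the $C^2$ hypothesis, $\partial\sM$ is, to second order, the paraboloid $x_d=\tfrac12\,\boldsymbol{u}^{\top}A_{\boldsymbol{b}}\boldsymbol{u}+o(|\boldsymbol{u}|^2)$, where $\boldsymbol{u}\in\sR^{d-1}$ is the tangential coordinate, $A_{\boldsymbol{b}}$ is the second fundamental form with principal curvatures $\kappa_1,\dots,\kappa_{d-1}$, and $\kappa(\boldsymbol{b})=\det A_{\boldsymbol{b}}=\prod_j\kappa_j$ is the Gauss--Kronecker curvature. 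A short computation shows that the tangent hyperplane drawn at the boundary point over tangential offset $\boldsymbol{v}$ sits a height $\tfrac12\,\boldsymbol{v}^{\top}A_{\boldsymbol{b}}\boldsymbol{v}+o(|\boldsymbol{v}|^2)$ below $\boldsymbol{b}$ in the $x_d$ direction. Consequently the polytope boundary above $\boldsymbol{b}$ is the upper envelope of these hyperplanes, and the normal thickness is $H_n(\boldsymbol{b})=\min_i \tfrac12\,\boldsymbol{v}^{(i)\top}A_{\boldsymbol{b}}\boldsymbol{v}^{(i)}$ (plus the $o$-error), the minimum being effectively over the few nearest tangent points, since distant ones produce deep gaps and never realize the envelope.

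The decisive step is the rescaling. Locally the tangent points form a point process of intensity $\lambda_n(\boldsymbol{b})=n\,d_{\partial\sM}(\boldsymbol{b})$ per unit $(d-1)$-dimensional area, which, after blowing up tangential directions by $\lambda_n(\boldsymbol{b})^{-1/(d-1)}$, converges as $n\to\infty$ to a unit-intensity Poisson process. Applying the linear change $\boldsymbol{w}=A_{\boldsymbol{b}}^{1/2}\boldsymbol{v}$ (which rescales the intensity by $\kappa(\boldsymbol{b})^{-1/2}$) turns $\tfrac12\,\boldsymbol{v}^{\top}A_{\boldsymbol{b}}\boldsymbol{v}$ into $\tfrac12|\boldsymbol{w}|^2$, so that $\ex{H_n(\boldsymbol{b})}$ reduces to the expected squared nearest-neighbour distance of a homogeneous Poisson process. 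This expectation scales exactly as (intensity)$^{-2/(d-1)}$, yielding $\ex{H_n(\boldsymbol{b})}\sim g_0\,\kappa(\boldsymbol{b})^{\frac{1}{d-1}}\,\lambda_n(\boldsymbol{b})^{-\frac{2}{d-1}}$ for a universal dimensional constant $g_0$ (the rescaled Poisson functional). Substituting $\lambda_n(\boldsymbol{b})=n\,d_{\partial\sM}(\boldsymbol{b})$ and integrating over the boundary gives
\[
\ex{V(\Tilde{\sM}_n)-V(\sM)}=g_0\,n^{-\frac{2}{d-1}}\int_{\partial\sM}\kappa(\boldsymbol{b})^{\frac{1}{d-1}}\,d_{\partial\sM}(\boldsymbol{b})^{-\frac{2}{d-1}}\,d\sigma(\boldsymbol{b})+o\!\left(n^{-\frac{2}{d-1}}\right),
\]
so $\tau(\partial\sM,d)$ is precisely this curvature- and density-weighted surface integral, depending only on $\partial\sM$ and $d$.

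The hard part will be making the localization rigorous and justifying the interchange of the $n\to\infty$ limit with the surface integral. Two points dominate. First, one must prove that $H_n(\boldsymbol{b})$ is determined by nearby tangent points only; I would establish this with an economic cap-covering estimate in the spirit of B\'ar\'any--Larman, showing that configurations whose nearest tangent points are atypically far away carry only $o(n^{-2/(d-1)})$ of the expected excess volume. Second, pointwise convergence of the rescaled local functional must be upgraded to convergence of the integral; I would supply a uniform (in $n$ and $\boldsymbol{b}$) integrable domination via a stabilization bound on the rescaled thickness, using the positivity and continuity of $d_{\partial\sM}$ on the compact boundary to keep $\lambda_n(\boldsymbol{b})$ bounded away from zero. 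Finally, controlling the $o(|\boldsymbol{u}|^2)$ paraboloid error uniformly over $\partial\sM$, again from $C^2$-smoothness and compactness, together with the Jacobian correction from the first paragraph, collects all remaining terms into the stated $o(n^{-2/(d-1)})$ remainder.
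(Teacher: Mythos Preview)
The paper does not prove this statement at all: it is quoted verbatim as Theorem~1 of B\"or\"oczky and simply invoked as a black box inside the proof of Theorem~\ref{thm_randomPolytopeComplexity}. So there is no ``paper's own proof'' to compare against; your proposal is an attempt to re-derive a result the authors take as given.

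As a sketch of the B\"or\"oczky-type argument, your outline is broadly in line with how such results are actually established in stochastic geometry: localize near a boundary point, replace $\partial\sM$ by its osculating paraboloid via the second fundamental form, observe that the normal gap to the circumscribed polytope at $\boldsymbol{b}$ equals $\min_i \tfrac12\,\boldsymbol{v}^{(i)\top}A_{\boldsymbol{b}}\boldsymbol{v}^{(i)}$ over the nearby tangent points, Poissonize and rescale tangentially by $\lambda_n(\boldsymbol{b})^{1/(d-1)}$, and then integrate the resulting local limit over $\partial\sM$. The identification of $\tau(\partial\sM,d)$ with an integral of $\kappa(\boldsymbol{b})^{1/(d-1)} d_{\partial\sM}(\boldsymbol{b})^{-2/(d-1)}$ against surface measure (times a dimensional constant) is exactly the form that appears in B\"or\"oczky's work. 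Two comments: (i) calling the limiting local functional the ``expected squared nearest-neighbour distance'' is slightly loose---it is the expected squared distance from the \emph{origin} to the nearest point of a unit-intensity Poisson process in $\sR^{d-1}$, not a nearest-neighbour distance within the process; (ii) the hard analytic work you flag (stabilization/economic-cap bounds to kill long-range contributions, and a dominated-convergence argument uniform over $\partial\sM$) is genuinely the crux and would need the full machinery of the cited paper to carry out rigorously. But as a plan it is sound and matches the route taken in the source the authors cite.
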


    Let $\textbf{x}_i, i=1,\dots,n$ be $n$ i.i.d queries from the $\threshold{m(\textbf{x})}=0$ region of the target model. Then, their corresponding counterfactuals $g_m(\textbf{x}_i)$ are also i.i.d. Furthermore, they lie on the decision boundary of $m$. Hence, we may arrive at the following result.
    \begin{corollary}
        \label{cor_polytopeVolumeDiff}
        Let $\sM=\{\boldsymbol{x}\in[0,1]^d: \threshold{m(\boldsymbol{x})}=1\}$ and $\Tilde{\sM}_n=\{\boldsymbol{x}\in[0,1]^d: \threshold{\Tilde{M}_n(\boldsymbol{x})}=1\}$. Then, by Theorem \ref{thm_boroczkyRandomPolytope},
        \begin{equation}
            \ex{V(\Tilde{\sM}_n)-V(\sM)} \sim \mathcal{O}\left(n^{-\frac{2}{d-1}}\right)
        \end{equation}
        when $n\rightarrow\infty$. Note that $\sM \subseteq \Tilde{\sM}_n$ and hence, the left-hand side is always non-negative.
    \end{corollary}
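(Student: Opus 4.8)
The plan is to read the corollary off Theorem~\ref{thm_boroczkyRandomPolytope} by checking that the polytope $\Tilde{M}_n$ assembled from counterfactual queries is precisely the random polytope studied there, with $\sM=\{\boldsymbol{x}\in[0,1]^d:\threshold{m(\boldsymbol{x})}=1\}$ as the convex body (convex by hypothesis, with $C^2$ boundary $\partial\sM=\{m=0.5\}$) and $[0,1]^d$ as the enclosing cube $C$. Because $\sM$ is closed we have $\partial\sM\subseteq\sM$, so every point of $\partial\sM$ is a legitimate counterfactual target; and for a query $\boldsymbol{x}_i$ in the unfavorable region, its closest counterfactual $\boldsymbol{w}_i=g_m(\boldsymbol{x}_i)$ is the metric projection of $\boldsymbol{x}_i$ onto the convex set $\sM$, hence lies on $\partial\sM$. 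By Lemma~\ref{lem_closestPointNormal} the segment $\boldsymbol{x}_i\boldsymbol{w}_i$ is perpendicular to $\partial\sM$ at $\boldsymbol{w}_i$, so the tangent hyperplane of $\partial\sM$ at $\boldsymbol{w}_i$ is the hyperplane through $\boldsymbol{w}_i$ orthogonal to $\boldsymbol{x}_i-\boldsymbol{w}_i$; since $\sM$ is convex with differentiable boundary, this tangent hyperplane is a \emph{supporting} hyperplane of $\sM$, and the user can orient the corresponding closed half-space $H_+(\boldsymbol{w}_i)\supseteq\sM$ correctly because it is the side not containing $\boldsymbol{x}_i$. Hence $\sM\subseteq H_+(\boldsymbol{w}_i)$ for every $i$, and as $\sM\subseteq[0,1]^d$ we obtain $\sM\subseteq\bigcap_{i=1}^n H_+(\boldsymbol{w}_i)\cap[0,1]^d=\Tilde{\sM}_n$, which already gives the asserted containment and the nonnegativity of $V(\Tilde{\sM}_n)-V(\sM)$.

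Next I would verify the probabilistic hypotheses of Theorem~\ref{thm_boroczkyRandomPolytope}: the contact points $\boldsymbol{w}_1,\dots,\boldsymbol{w}_n$ must be i.i.d.\ on $\partial\sM$ with a continuous, nowhere-vanishing density $d_{\partial\sM}$. Independence and identical distribution are immediate, since each $\boldsymbol{w}_i$ is the image of the i.i.d.\ query $\boldsymbol{x}_i$ under the fixed deterministic map $g_m$. For the density one parametrizes $\partial\sM$ by its outward unit normal field $\boldsymbol{\nu}$ (no medial-axis degeneracies occur on the exterior of a convex body), observes that $g_m^{-1}$ of a surface patch is the union of outward normal segments over that patch truncated where they exit the query region, and pushes the query law through the change of variables $(\boldsymbol{p},t)\mapsto\boldsymbol{p}+t\,\boldsymbol{\nu}(\boldsymbol{p})$, whose Jacobian $\prod_j\bigl(1+t\,\kappa_j(\boldsymbol{p})\bigr)$ is smooth and positive by $C^2$-convexity; integrating in $t$ yields a continuous, strictly positive density on $\partial\sM$. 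I expect this step to be the main obstacle: it implicitly needs the query distribution to have full support near $\partial\sM$ (e.g.\ uniform on $[0,1]^d\setminus\sM$) and, strictly speaking, $\partial\sM$ to lie in the interior of the cube so the normal segments have positive length; the finitely many lower-dimensional places where $\partial\sM$ meets $\partial[0,1]^d$ contribute only lower-order terms and can be absorbed into the $o(\cdot)$.

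With the hypotheses in force, Theorem~\ref{thm_boroczkyRandomPolytope} applied with $C=[0,1]^d$ gives
\begin{equation}
    \ex{V(\Tilde{\sM}_n)-V(\sM)} = \tau(\partial\sM,d)\,n^{-\frac{2}{d-1}} + o\!\left(n^{-\frac{2}{d-1}}\right) \sim \mathcal{O}\!\left(n^{-\frac{2}{d-1}}\right)
\end{equation}
as $n\to\infty$; intersecting with $[0,1]^d$ in our construction only deletes volume lying outside the cube, which does not change $V(\Tilde{\sM}_n)-V(\sM)$ beyond the boundary terms already accounted for, since $\sM\subseteq[0,1]^d$. For completeness I would then record the payoff for Theorem~\ref{thm_randomPolytopeComplexity}: with $\sD_\text{ref}$ uniform on the unit-volume cube, $\ex{{\rm Fid}_{m,\sD_\text{ref}}(\Tilde{M}_n)}=1-\ex{V(\sM\,\triangle\,\Tilde{\sM}_n)}=1-\ex{V(\Tilde{\sM}_n)-V(\sM)}$ by the containment $\sM\subseteq\Tilde{\sM}_n$, so $\epsilon=\ex{V(\Tilde{\sM}_n)-V(\sM)}\sim\mathcal{O}(n^{-2/(d-1)})$ exactly as claimed.
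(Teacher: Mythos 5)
Your proposal is correct and follows essentially the same route as the paper: identify the closest counterfactuals of i.i.d.\ queries as i.i.d.\ points on $\partial\sM$ whose tangent (hence supporting) hyperplanes assemble into the circumscribing polytope of Theorem~\ref{thm_boroczkyRandomPolytope}, and read off the $n^{-2/(d-1)}$ rate. The one place you go beyond the paper is in actually verifying the continuous, non-vanishing density hypothesis on $\partial\sM$ via the normal-map change of variables (and flagging the degenerate case where $\partial\sM$ meets $\partial[0,1]^d$); the paper asserts only that the counterfactuals are i.i.d.\ and lie on the boundary, leaving that hypothesis implicit.
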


    From Definition \ref{def_fidelity}, we may write
    \begin{align}
        &\ex{{\rm Fid}_{m,\sD_\text{ref}}(\Tilde{M}_n)} \nonumber \\
        & = \ex{\frac{1}{\left|\sD_\text{ref}\right|} \sum_{\textbf{x}\in \sD_\text{ref}} \ex{ \mathds{1}\left[\threshold{m(\textbf{x})} = \threshold{\Tilde{M}_n(\textbf{x})} \right] \Big| \sD_\text{ref} }} \\
        &= \frac{1}{\left|\sD_\text{ref}\right|} \ex{\sum_{\textbf{x}\in \sD_\text{ref}} \sP\left[\threshold{m(\textbf{x})} = \threshold{\Tilde{M}_n(\textbf{x})} \Big| \textbf{x} \right]} \quad (\because \text{query size is fixed}) \\
        &= \sP\left[\threshold{m(\textbf{x})} = \threshold{\Tilde{M}_n(\textbf{x})} \right] \quad (\because \textbf{x}\text{'s are i.i.d.})\\
        &= \int_{\mathcal{M}_n} \sP\left[\threshold{m(\textbf{x})} = \threshold{\Tilde{M}_n(\textbf{x})} \Big| \Tilde{M}_n(\textbf{x}) = \Tilde{m}_n(\textbf{x}) \right] \sP\left[\Tilde{M}_n(\textbf{x}) = \Tilde{m}_n(\textbf{x})\right] \text{d}\Tilde{m}_n
    \end{align}
    where $\mathcal{M}_n$ is the set of all possible $\Tilde{m}_n$'s. 

    Now, by noting that
    \begin{equation}
        \sP\left[\threshold{m(\textbf{x})} = \threshold{\Tilde{M}_n(\textbf{x})} \Big| \Tilde{M}_n(\textbf{x}) = \Tilde{m}_n(\textbf{x}) \right] = 1 - \sP\left[\threshold{m(\textbf{x})} \neq \threshold{\Tilde{M}_n(\textbf{x})} \Big| \Tilde{M}_n(\textbf{x}) = \Tilde{m}_n(\textbf{x}) \right],
    \end{equation}
    we may obtain
    \begin{align}
        \ex{{\rm Fid}_{m,\sD_\text{ref}}(\Tilde{M}_n)} &= 1 - \int_{\mathcal{M}_n} \sP\left[\threshold{m(\textbf{x})} \neq \threshold{\Tilde{M}_n(\textbf{x})} \Big| \Tilde{M}_n(\textbf{x})=\Tilde{m}_n(\textbf{x}) \right] \nonumber\\
        & \hspace{6cm} \times\sP\left[\Tilde{M}_n(\textbf{x}) = \Tilde{m}_n(\textbf{x})\right] \text{d}\Tilde{m}_n \\
        &= 1 - \int_{\mathcal{M}_n} \frac{V(\Tilde{\sM}_n) - V(\sM)}{\underbrace{\text{Total volume}}_{=1 \text{ for unit hypercube}}} \sP\left[\Tilde{M}_n(\textbf{x}) = \Tilde{m}_n(\textbf{x})\right] \text{d}\Tilde{m}_n \nonumber\\ 
        &\hspace{5.6cm}(\because \textbf{x}\text{'s are uniformly distributed}) \\
        &= 1 - \ex{V(\Tilde{\sM}_n)-V(\sM)}.
    \end{align}
    The above result, in conjunction with Corollary \ref{cor_polytopeVolumeDiff}, concludes the proof.
\end{proof}

\subsection{Proof of Lemma \ref{lem_cpwlDecisionBoundary}}
\label{sec_proofCpwlDecisionBoundary}
\lemCpwlDecisionBoundary*
\begin{proof}
Consider the $i^\text{th}$ piece $m_i(\boldsymbol{x})$ of the classifier defined over $\sU_i$. A part of the decision boundary exists within $\sU_i$ only if $\exists \boldsymbol{x}\in \sU_i$ such that $m_i(\boldsymbol{x})=0.5$. When it is the case, at the decision boundary,
\begin{align}
    & m(\boldsymbol{x}) = 0.5 \\
    \iff & \frac{1}{1 + e^{-\ell_i(\boldsymbol{x})}} = 0.5 \\
    \iff & e^{-\ell_i(\boldsymbol{x})} = 1 \\
    \iff & \ell_i(\boldsymbol{x}) = 0\\
    \iff & \boldsymbol{a}_i^T \boldsymbol{x} + b_i = 0
\end{align}
which represents a hyperplane restricted to $\sU_i$. Moreover, the continuity of the $\ell(\boldsymbol{x})$ demands the decision boundary to be continuous across the boundaries of $\sU_i$'s. This fact can be proved as follows:

Note that within each region $\sU_i$, exactly one of the following three conditions holds:
\begin{itemize}[itemsep=0cm, topsep=0cm]
    \item[(a)] $\forall \boldsymbol{x}\in \sU_i, \ell_i(\boldsymbol{x}) > 0$ \quad$\rightarrow$\quad $\sU_i$ does not contain a part of the decision boundary
    \item[(b)] $\forall \boldsymbol{x}\in \sU_i, \ell_i(\boldsymbol{x}) < 0$ \quad$\rightarrow$\quad $\sU_i$ does not contain a part of the decision boundary
    \item[(c)] $\exists \boldsymbol{x}\in \sU_i, \ell_i(\boldsymbol{x}) = 0$ \quad$\rightarrow$\quad $\sU_i$ contains a part of the decision boundary
\end{itemize}
In case when (c) holds for some region $\sU_i$, the decision boundary within $\sU_i$ is affine and it extends from one point to another on the region boundary. Now let $\sU_s$ and $\sU_t, s,t\in\{1,\dots,q\}, s\neq t$ be two adjacent regions sharing a boundary. Assume that $\sU_s$ contains a portion of the decision boundary, which intersects with a part of the shared boundary between $\sU_s$ and $\sU_t$ (note that $\sU_i$'s are closed and hence they include their boundaries). Let $\boldsymbol{x}_0$ be a point in the intersection of the decision boundary within $\sU_s$ and the shared region boundary. Now, continuity of $\ell(\boldsymbol{x})$ at $\boldsymbol{x}_0$ requires $\ell_t(\boldsymbol{x}_0)=\ell_s(\boldsymbol{x}_0)=0$. Hence, condition (c) holds for $\sU_t$. Moreover, this holds for all the points in the said intersection. Therefore, if such a shared boundary exists between $\sU_s$ and $\sU_t$, then the decision boundary continues to $\sU_t$. Applying the argument to all $\sU_s-\sU_t$ pairs show that each segment of the decision boundary either closes upon itself or ends at a boundary of the unit hypercube. Hence, when taken along with the boundaries of the unit hypercube, the decision boundary is a collection of polytopes.
\end{proof}

\subsection{Proof of Theorem \ref{thm_cpwlComplexity}}
\label{sec_proofCpwlComplexity}
\thmCpwlComplexity*
\begin{proof}
    Note that
    \begin{align}
        \sP[\text{Reconstruction}] &= \sP[\text{There is a counterfactual in every decision boundary cell}] \\
        &= 1 - \sP[\text{At least one decision boundary cell does not have a counterfactual}] \\
        &= 1 - \sum_{i=1}^{k(\epsilon)} \sP[i^\text{th} \text{ decision boundary cell does not have a counterfactual}]
    \end{align}
    Let $\mathcal{M}_i$ denote the event ``$i^\text{th}$decision boundary cell does not have a counterfactual''. At the end of $n$ queries,
    \begin{align}
        \sP[\mathcal{M}_i] &= \prod_{j=1}^n \underbrace{\sP[j^\text{th} \text{query falling outside of }\sG_{m,g_m}(\sH_i)]}_{= 1-v_i(\epsilon) \text{ for uniform queries}} \\
        &= (1-v_i(\epsilon))^n.
    \end{align}

    Therefore, 
    \begin{align}
        \sP[\text{Reconstruction}] &= 1 - \sum_{i=1}^{k(\epsilon)} (1-v_i(\epsilon))^n \\
        & \geq 1 - k(\epsilon)(1-v^*(\epsilon))^n \quad \left(\because v_i(\epsilon) \geq v^*(\epsilon) =\min_j v_j(\epsilon)\right).
    \end{align}
\end{proof}

\subsection{Proof of Theorem \ref{thm_reluLipschitzClamp} and Corollary \ref{cor_lipschitzClamp}}
\label{sec_proofReluLipschitzClamp}
\thmReluLipschitzClamp*
\corLipschitzClamp*
\begin{proof}
Note that from Theorem \ref{thm_cpwlComplexity}, with probability $p \geq 1-k(\epsilon)(1-v^*(\epsilon))^n$ at least one counterfactual exists within each decision boundary cell. When this is the case, we have
\begin{align}
    |\Tilde{m}(\boldsymbol{x}) - m(\boldsymbol{x})| & = |\Tilde{m}(\boldsymbol{x})-\Tilde{m}(\boldsymbol{w}) - \left(m(\boldsymbol{x}) - \Tilde{m}(\boldsymbol{w})\right) | \\
    & = |\Tilde{m}(\boldsymbol{x})-\Tilde{m}(\boldsymbol{w}) - \left(m(\boldsymbol{x}) - m(\boldsymbol{w})\right) | \\
    & \leq \underbrace{|\Tilde{m}(\boldsymbol{x})-\Tilde{m}(\boldsymbol{w})|}_{\leq \gamma_{\Tilde{m}} ||\boldsymbol{x}-\boldsymbol{w}||_2} + \underbrace{|m(\boldsymbol{x}) - m(\boldsymbol{w})|}_{\leq \gamma_m ||\boldsymbol{x}-\boldsymbol{w}||_2} \\
    & \leq (\gamma_m + \gamma_{\Tilde{m}}) ||\boldsymbol{x}-\boldsymbol{w}||_2 \\
    & \leq \sqrt{d}(\gamma_m + \gamma_{\Tilde{m}})\epsilon
\end{align}
where the first inequality is a result of applying the triangle inequality and the second follows from the definition of local Lipschitz continuity (Definition \ref{def_lipschitz}). The final inequality is due to the availability of a counterfactual within each decision boundary cell, which ensures $||\boldsymbol{x}-\boldsymbol{w}||_2 \leq \sqrt{d}\epsilon$. Corollary \ref{cor_lipschitzClamp} follows directly from the second inequality, since the $\delta-$cover of $\boldsymbol{w}$'s ensure $||\boldsymbol{x}-\boldsymbol{w}||_2 \leq \delta$
\end{proof}

\section{Experimental Details and Additional Results}
\label{appdx_experiments}
All the experiments were carried-out on two computers, one with a NVIDIA RTX A4500 GPU and the other with a NVIDIA RTX 3050 Mobile.

\subsection{Details of Real-World Datasets}
\label{appdx_datasets}
We use four publicly available real-world tabular datasets (namely, Adult Income, COMPAS, DCCC, and HELOC) to evaluate the performance of the attack proposed in Section \ref{sec_lipschitzAttack}. The details of these datasets are as follows:

\begin{itemize}
\item \underline{Adult Income:}
The dataset is a 1994 census database with information such as educational level, marital status, age and annual income of individuals \citep{adultIncome}. The target is to predict ``income'', which indicates whether the annual income of a given person exceeds \$50000 or not (i.e., $y=\mathds{1}[\text{income}\geq 0.5]$). It contains 32561 instances in total (the training set), comprising of 24720 from $y=0$ and 7841 from $y=1$. To make the dataset class-wise balanced we randomly sample 7841 instances from class $y=0$, giving a total effective size of 15682 instances. Each instance has 6 numerical features and 8 categorical features. During pre-processing, categorical features are encoded as integers. All the features are then normalized to the range $[0,1]$.

\item \underline{Home Equity Line of Credit (HELOC):}
This dataset contains information about customers who have requested a credit line as a percentage of home equity \cite{heloc}. It contains 10459 instances with 23 numerical features each. Prediction target is ``is\textunderscore at\textunderscore risk'' which indicates whether a given customer would pay the loan in the future. Dataset is slightly unbalanced with class sizes of 5000 and 5459 for $y=0$ and $y=1$, respectively. Instead of using all 23 features, we use the following subset of 10 for our experiments; ``estimate\textunderscore of\textunderscore risk'', ``net\textunderscore fraction\textunderscore of\textunderscore revolving\textunderscore burden'', ``percentage\textunderscore of\textunderscore legal\textunderscore trades'',       ``months\textunderscore since\textunderscore last\textunderscore inquiry\textunderscore not\textunderscore recent'', ``months\textunderscore since\textunderscore last\textunderscore trade'', ``percentage\textunderscore trades\textunderscore with\textunderscore balance'', ``number\textunderscore of\textunderscore satisfactory\textunderscore trades'', ``average\textunderscore duration\textunderscore of\textunderscore resolution'', ``nr\textunderscore total\textunderscore trades'', ``nr\textunderscore banks\textunderscore with\textunderscore high\textunderscore ratio''. All the features are normalized to lie in the range $[0,1]$.

\item \underline{Correctional Offender Management Profiling for Alternative Sanctions (COMPAS):}
This dataset has been used for investigating racial biases in a commercial algorithm used for evaluating reoffending risks of criminal defendants \citep{compas}. It includes 6172 instances and 20 numerical features. The target variable is ``is\textunderscore recid''. Class-wise counts are 3182 and 2990 for $y=0$ and $y=1$, respectively. All the features are normalized to the interval $[0,1]$ during pre-processing.

\item \underline{Default of Credit Card Clients (DCCC):}
The dataset includes information about credit card clients in Taiwan \cite{defaultCredit}. The target is to predict whether a client will default on the credit or not, indicated by ``default.payment.next.month''. The dataset contains 30000 instances with 24 attributes each. Class-wise counts are 23364 from $y=0$ and 6636 from $y=1$. To alleviate the imbalance, we randomly select 6636 instances from $y=0$ class, instead of using all the instances. Dataset has 3 categorical attributes, which we encode into integer values. All the attributes are normalized to $[0,1]$ during pre-processing. 
\end{itemize}

\subsection{Experiments on the attack proposed in Section \ref{sec_lipschitzAttack}}
\label{appdx_lipschitzAttackExp}
In this section, we provide details about our experimental setup with additional results. For convenience, we present the neural network model architectures by specifying the number of neurons in each hidden layer as a tuple, where the leftmost element corresponds to the layer next to the input; e.g.: a model specified as (20,30,10) has the following architecture:
\begin{center}
    Input $\rightarrow$ Dense(20, ReLU) $\rightarrow$ Dense(30, ReLU) $\rightarrow$ Dense(10, ReLU) $\rightarrow$ Output(Sigmoid)
\end{center}
Other specifications of the models, as detailed below, are similar across most of the experiments. Changes are specified specifically. The hidden layer activations are ReLU and the layer weights are $L_2-$regularized. The regularization coefficient is 0.001. Each model is trained for 200 epochs, with a batch size of 32. 

Fidelity is evaluated over a uniformly sampled set of input instances (uniform data) as well as a held-out portion of the original data (test data). The experiments were carried out as follows:
\begin{enumerate}[nosep]
    \item Initialize the target model. Train using $\sD_\text{train}$.
    \item For $t=1,2,\dots,T:$
          \begin{enumerate}[nosep, leftmargin=*]
              \item Sample $N\times t$ data points from the dataset to create $\sD_\text{attack}$. 
              \item Carry-out the attack given in Algorithm \ref{alg_lipschitzAttack} with $\sD_\text{attack}$. Use $k=1$ for ``Baseline'' models and $k=0.5$ for ``Proposed'' models.
              \item Record the fidelity over $\sD_\text{ref}$ along with $t$.
          \end{enumerate}
    \item Repeat steps 1 and 2 for $S$ number of times and calculate average fidelities for each $t$, across repetitions.
\end{enumerate}

Based on the experiments of \citet{ulrichModelExtract} and \citet{wangDualCF}, we select $T=20, 50, 100$; $N=20, 8, 4$ and $S=100, 50$, in different experiments. We note that the exact numerical results are often variable due to the multiple random factors affecting the outcome such as the test-train-attack split, target and surrogate model initialization, and the randomness incorporated in the counterfactual generating methods. Nevertheless, the advantage of CCA over the baseline attack is observed across different realizations.

\subsubsection{Visualizing the attack using synthetic data} 
\label{appdx_2dVisualization}
This experiment is conducted on a synthetic dataset which consists of $1000$ samples generated using the \texttt{make\textunderscore moons} function from the \texttt{sklearn} package. Features are normalized to the range $[0,1]$ before feeding to the classifier. The target model has 4 hidden layers with the architecture (10, 20, 20, 10). The surrogate model is 3-layered with the architecture (10, 20, 20). Each model is trained for 100 epochs. Since the intention of this experiment is to demonstrate the functionality of the modified loss function given in \eqref{eq_lossFunction}, a large query of size 200 is used, instead of performing multiple small queries. Fig. \ref{fig_2dDemoBoundaries} shows how the original model reconstruction proposed by \cite{ulrichModelExtract} suffers from the boundary shift issue, while the model with the proposed loss function overcomes this problem. Fig. \ref{fig_2dDemoFidelity} illustrates the instances misclassified by the two surrogate models.
\begin{figure}[ht]
    \centering
    \includegraphics[width=0.7\textwidth]{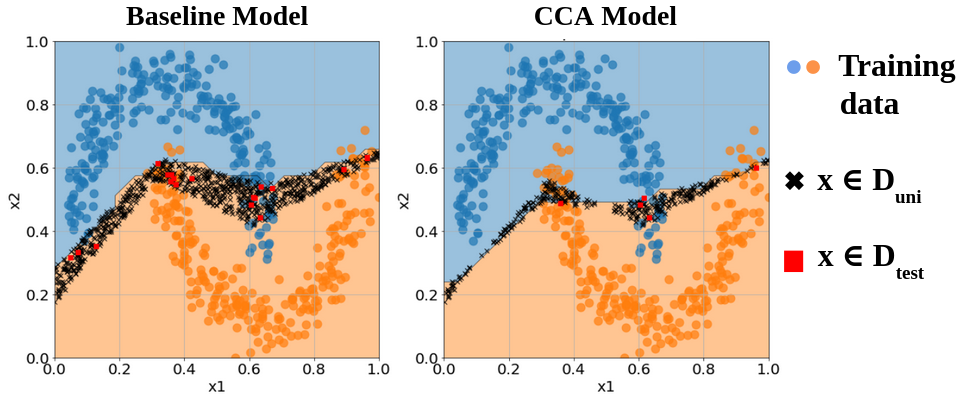}
    \caption{Misclassifications w.r.t. to the target model, over $\sD_\text{uni}$ and $\sD_\text{test}$ as the reference datasets for the 2-dimensional demonstration in Fig. \ref{fig_2dDemoBoundaries}.  ``Baseline'' model causes a large number of misclassifications w.r.t. the ``CCA'' model.}
    \label{fig_2dDemoFidelity}
\end{figure}

\subsubsection{Comparing performance over four real-world
dataset} 
\label{appdx_realWorldPerformance}
We use a target model having 2 hidden layers with the architecture (20,10). Two surrogate model architectures, one exactly similar to the target architecture (model 0 - known architecture) and the other slightly different (model 1 - unknown architecture), are tested. Model 1 has 3 hidden layers with the architecture (20,10,5).

Fig. \ref{fig_realMCCFFidelity} illustrates the fidelities achieved by the two model architectures described above. Fig. \ref{fig_realMCCFFidelityVar} shows the corresponding variances of the fidelity values over 100 realizations. It can be observed that the variances diminish as the query size grows, indicating more stable model reconstructions. Fig. \ref{fig_helocClassHist} demonstrates the effect of the proposed loss function in mitigating the decision boundary shift issue.

\begin{figure}[ht]
    \centering
    \includegraphics[width=\textwidth]{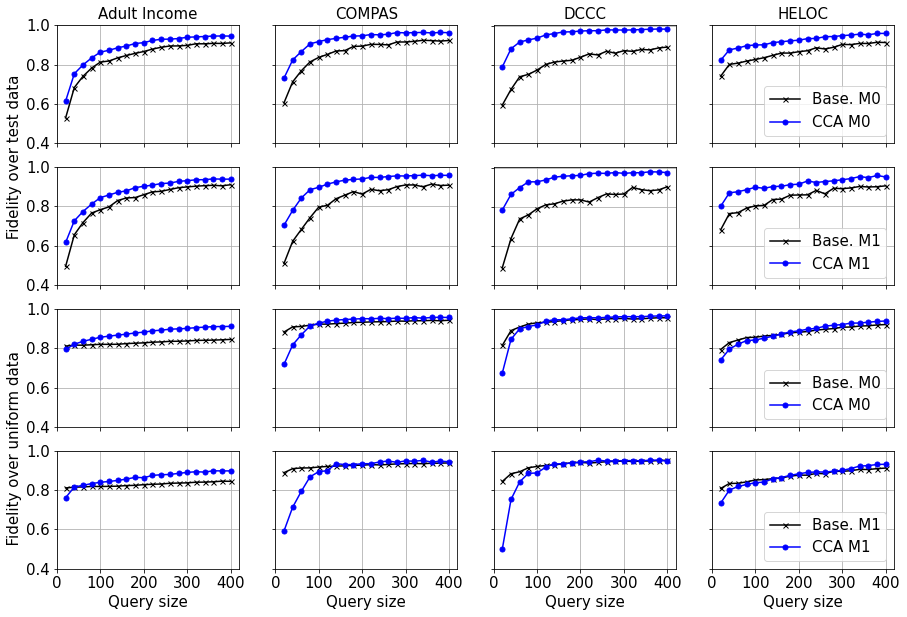}
    \caption{Fidelity for real-world datasets. Blue lines indicate ``CCA'' models. Black lines indicate ``Baseline'' models.}
    \label{fig_realMCCFFidelity}
\end{figure}
\begin{figure}[H]
    \centering
    \includegraphics[width=\textwidth]{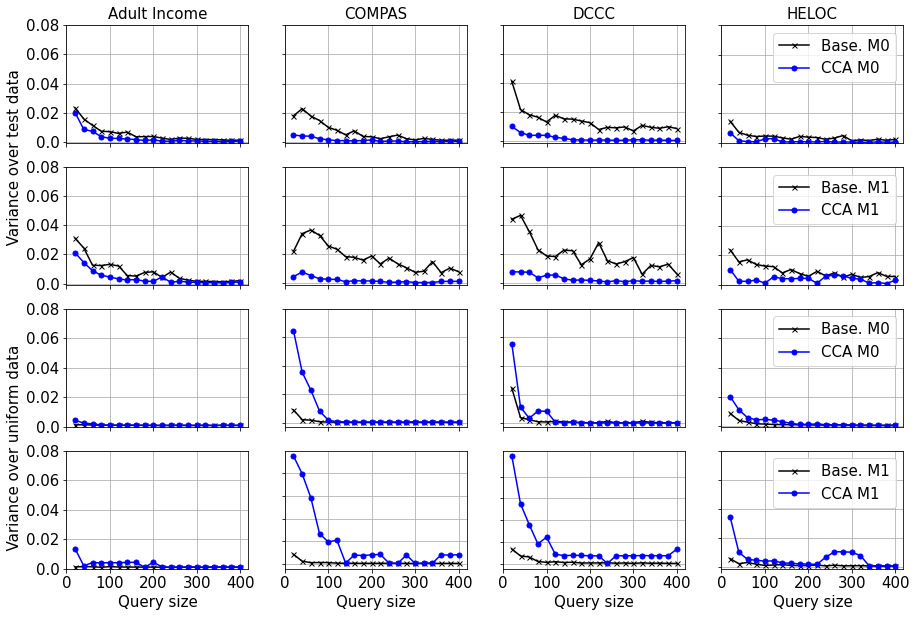}
    \caption{Variance of fidelity for real-world datasets. Blue lines indicate ``CCA'' models. Black lines indicate ``Baseline'' models.}
    \label{fig_realMCCFFidelityVar}
\end{figure}
\begin{figure}[ht]
    \centering
    \includegraphics[width=\textwidth]{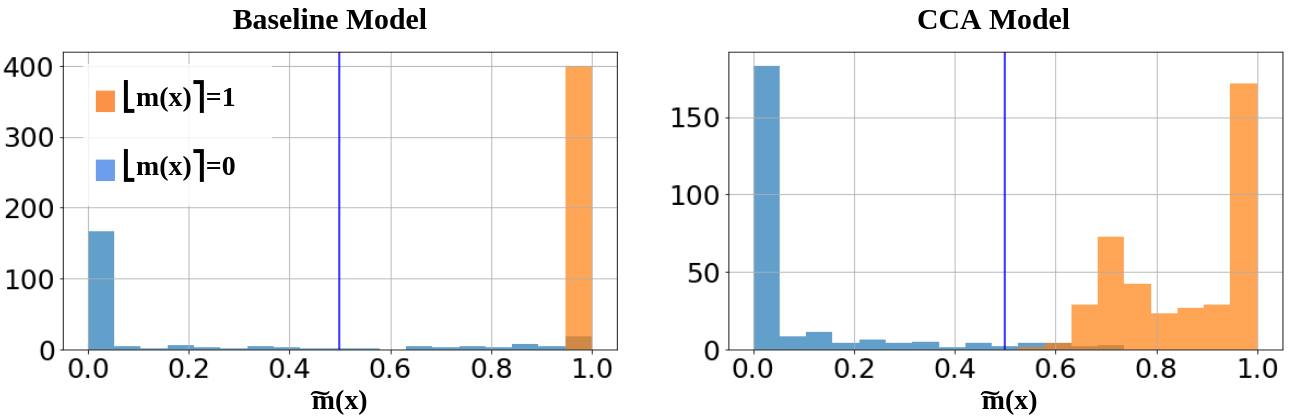}
    \caption{Histograms of probabilities predicted by ``Baseline'' and ``CCA'' models under the ``Unknown Architecture'' scenario (model 1) for the HELOC dataset. Note how the ``Baseline'' model provides predictions higher than 0.5 for a comparatively larger number of instances with $\threshold{m(\boldsymbol{x})}=0$ due to the boundary shift issue. The clamping effect of the novel loss function is evident in the ``CCA'' model's histogram, where the decision boundary being held closer to the counterfactuals is causing the two prominent modes in the favorable region. The mode closer to 0.5 is due to counterfactuals and the mode closer to 1.0 is due to instances with $\threshold{m(\boldsymbol{x})}=1$.}
    \label{fig_helocClassHist}
\end{figure}

\subsubsection{Empirical and theoretical rates of convergence} 
\label{appdx_empiricalTheoretical}
Fig. \ref{fig_analyticalEmpirical} compares the rate of convergence of the empirical approximation error i.e., $1 - \ex{{\rm Fid}_{m,\sD_\text{ref}}(\Tilde{M}_n)}$ for two of the above experiments with the rate predicted by Theorem \ref{thm_randomPolytopeComplexity}. Notice how the empirical error decays faster than $n^{-2/(d-1)}$.
\begin{figure}[ht]
    \centering
    \includegraphics[width=0.95\textwidth]{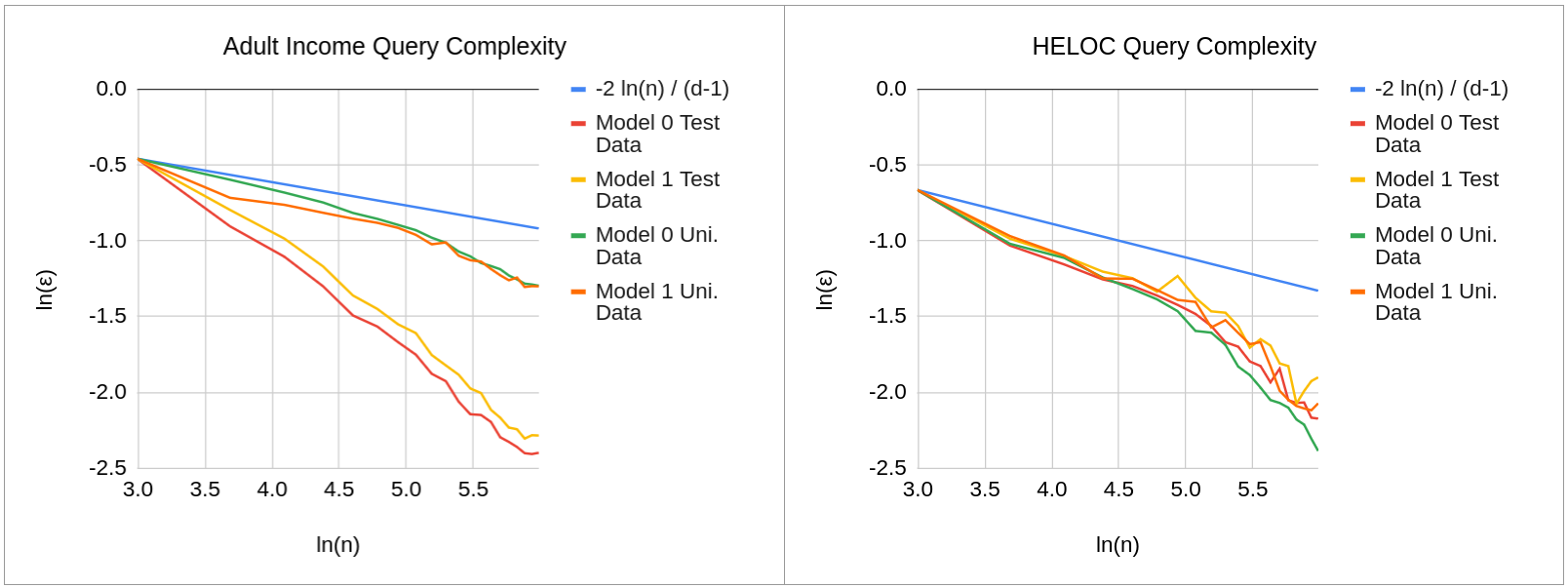}
    \caption{A comparison of the query complexity derived in Theorem \ref{thm_randomPolytopeComplexity} with the empirical query complexities obtained on the Adult Income and HELOC datasets. The graphs are on a log-log scale. We observe that the analytical query complexity is an upper bound for the empirical query complexities. All the graphs are recentered with an additive constant for presentational convenience. However, this does not affect the slope of the graph, which corresponds to the complexity.}
    \label{fig_analyticalEmpirical}
\end{figure}

\subsubsection{Studying effects of Lipschitz constants}
\label{appdx_lipschitzDependence}
For this experiment, we use a target model having 3 hidden layers with the architecture (20, 10, 5) and a surrogate model having 2 hidden layers with the architecture (20, 10). The surrogate model layers are $L_2$-regularized with a fixed regularization coefficient of 0.001. We achieve different Lipschitz constants for the target models by controlling their $L_2$-regularization coefficients during the target model training step. Following \cite{goukLipschitzRegularization}, we approximate the Lipschitz constant of target models by the product of the spectral norms of the weight matrices.  

Fig. \ref{fig_lipschitzDependence} illustrates the dependence of the attack performance on the Lipschitz constant of the target model. The results lead to the conclusion that target models with larger Lipschitz constants are more difficult to extract. This follows the insight provided by Theorem \ref{thm_reluLipschitzClamp}.

\begin{figure*}[tbh]
    \centering
    \includegraphics[width=\textwidth]{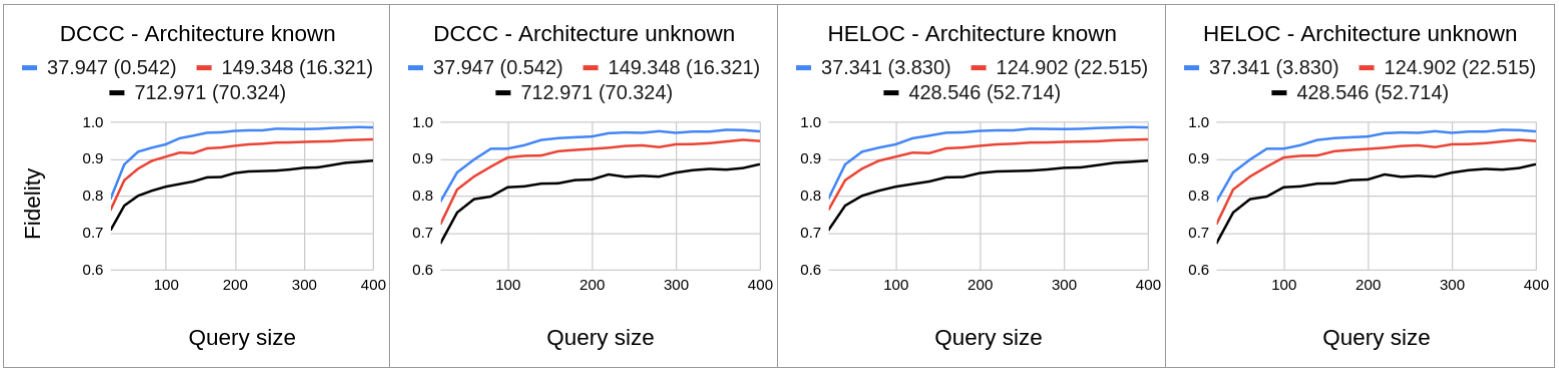}
    \caption{Dependence of fidelity on the target model's Lipschitz constant. The approximations of the Lipschitz constants are shown in the legend with standard deviations within brackets. Lipschitz constants are approximated as the product of the spectral norm of weight matrices in each model. With a higher Lipschitz constant, the fidelity achieved by a given number of queries tend to degrade.}
    \label{fig_lipschitzDependence}
\end{figure*}

\newpage
\subsubsection{Studying different model architectures} 
\label{appdx_surrogateArchitecture}
We observe the effect of the model architectures on the attack performance over Adult Income, COMPAS and HELOC datasets. Tables \ref{tab_modelComplexityAdult}, \ref{tab_modelComplexityCompas}, and \ref{tab_modelComplexityHeloc}, respectively, present the results.


\def\colw{0.5cm}
\def\fcolw{1.8cm}

\def\rowq{&  \multicolumn{2}{c}{n=100} &  \multicolumn{2}{c|}{n=200} & \multicolumn{2}{c}{n=100} & \multicolumn{2}{c|}{n=200} & \multicolumn{2}{c}{n=100} & \multicolumn{2}{c|}{n=200}}

\def\rowarchi{\multirow{2}{*}{Target archi. $\rightarrow$} & \multicolumn{4}{c|}{(20,10)} & \multicolumn{4}{c|}{(20,10,5)} & \multicolumn{4}{c|}{(20,20,10,5)}}

\newcolumntype{C}[1]{>{\centering\arraybackslash}p{#1}}
\newcolumntype{D}{>{\centering\arraybackslash}p{\colw}}
\def\cfmethodA{MCCF}

\begin{table}[H]
\centering
\caption{Fidelity over $\sD_\text{test}$ and $\sD_\text{uni}$ for Adult Income dataset}
\label{tab_modelComplexityAdult}
\begin{tabular}{@{\extracolsep{1pt}} C{\fcolw} DDDD DDDD DDDD }
    \toprule
    \multirow{1}{*}{Target $\rightarrow$} & \multicolumn{4}{c}{(20,10)} & \multicolumn{4}{c}{(20,10,5)} & \multicolumn{4}{c}{(20,20,10,5)}\\[0.1cm]
     $\sD_\text{test}$ & \multicolumn{2}{c}{n=100} &  \multicolumn{2}{c}{n=200} & \multicolumn{2}{c}{n=100} & \multicolumn{2}{c}{n=200} & \multicolumn{2}{c}{n=100} & \multicolumn{2}{c}{n=200}\\ \cmidrule{2-3} \cmidrule{4-5} \cmidrule{6-7} \cmidrule{8-9} \cmidrule{10-11} \cmidrule{12-13}
    Surrogate $\downarrow$ & Base. & CCA & Base. & CCA & Base. & CCA & Base. & CCA & Base. & CCA & Base. & CCA\\
    \midrule
    (20,10)  & 0.88 & 0.89 & 0.92 & 0.93  & 0.82 & 0.84 & 0.91 & 0.93  & 0.94 & 0.95 & 0.95 & 0.96 \\ 
     
    (20,10,5)  & 0.87 & 0.88 & 0.91 & 0.93  & 0.79 & 0.82 & 0.90 & 0.92  & 0.93 & 0.94 & 0.95 & 0.96 \\ 
     
    (20,20,10,5)  & 0.85 & 0.86 & 0.91 & 0.91  & 0.79 & 0.81 & 0.89 & 0.92  & 0.93 & 0.92 & 0.95 & 0.95 \\ 
    \midrule

    \multirow{1}{*}{Target $\rightarrow$} & \multicolumn{4}{c}{(20,10)} & \multicolumn{4}{c}{(20,10,5)} & \multicolumn{4}{c}{(20,20,10,5)}\\[0.1cm]
     $\sD_\text{uni}$ & \multicolumn{2}{c}{n=100} &  \multicolumn{2}{c}{n=200} & \multicolumn{2}{c}{n=100} & \multicolumn{2}{c}{n=200} & \multicolumn{2}{c}{n=100} & \multicolumn{2}{c}{n=200}\\ \cmidrule{2-3} \cmidrule{4-5} \cmidrule{6-7} \cmidrule{8-9} \cmidrule{10-11} \cmidrule{12-13}
    Surrogate $\downarrow$ & Base. & CCA & Base. & CCA & Base. & CCA & Base. & CCA & Base. & CCA & Base. & CCA\\
    \midrule
    (20,10)  & 0.71 & 0.81 & 0.75 & 0.87  & 0.78 & 0.84 & 0.79 & 0.87  & 0.84 & 0.88 & 0.85 & 0.91 \\ 
     
    (20,10,5)  & 0.71 & 0.78 & 0.74 & 0.83  & 0.77 & 0.82 & 0.78 & 0.85  & 0.82 & 0.88 & 0.84 & 0.90 \\ 
     
    (20,20,10,5)  & 0.71 & 0.75 & 0.74 & 0.81  & 0.77 & 0.81 & 0.78 & 0.84  & 0.82 & 0.86 & 0.84 & 0.90 \\ 
    \bottomrule  
\end{tabular}
\end{table}

\begin{table}[H]
\centering
\caption{Fidelity over $\sD_\text{test}$ and $\sD_\text{uni}$ for COMPAS dataset}
\label{tab_modelComplexityCompas}
\begin{tabular}{@{\extracolsep{1pt}} C{\fcolw} DDDD DDDD DDDD }
    \toprule
    \multirow{1}{*}{Target $\rightarrow$} & \multicolumn{4}{c}{(20,10)} & \multicolumn{4}{c}{(20,10,5)} & \multicolumn{4}{c}{(20,20,10,5)}\\[0.1cm]
     $\sD_\text{test}$ & \multicolumn{2}{c}{n=100} &  \multicolumn{2}{c}{n=200} & \multicolumn{2}{c}{n=100} & \multicolumn{2}{c}{n=200} & \multicolumn{2}{c}{n=100} & \multicolumn{2}{c}{n=200}\\ \cmidrule{2-3} \cmidrule{4-5} \cmidrule{6-7} \cmidrule{8-9} \cmidrule{10-11} \cmidrule{12-13}
    Surrogate $\downarrow$ & Base. & CCA & Base. & CCA & Base. & CCA & Base. & CCA & Base. & CCA & Base. & CCA\\
    \midrule
    (20,10)  & 0.93 & 0.96 & 0.94 & 0.97  & 0.92 & 0.94 & 0.94 & 0.96  & 0.94 & 0.96 & 0.95 & 0.97 \\ 

    (20,10,5)  & 0.92 & 0.95 & 0.94 & 0.97  & 0.92 & 0.93 & 0.95 & 0.95  & 0.94 & 0.96 & 0.95 & 0.97 \\ 
 
    (20,20,10,5)  & 0.92 & 0.95 & 0.92 & 0.97  & 0.84 & 0.91 & 0.89 & 0.94  & 0.92 & 0.94 & 0.94 & 0.96 \\ 
    \midrule

    \multirow{1}{*}{Target $\rightarrow$} & \multicolumn{4}{c}{(20,10)} & \multicolumn{4}{c}{(20,10,5)} & \multicolumn{4}{c}{(20,20,10,5)}\\[0.1cm]
     $\sD_\text{uni}$ & \multicolumn{2}{c}{n=100} &  \multicolumn{2}{c}{n=200} & \multicolumn{2}{c}{n=100} & \multicolumn{2}{c}{n=200} & \multicolumn{2}{c}{n=100} & \multicolumn{2}{c}{n=200}\\ \cmidrule{2-3} \cmidrule{4-5} \cmidrule{6-7} \cmidrule{8-9} \cmidrule{10-11} \cmidrule{12-13}
    Surrogate $\downarrow$ & Base. & CCA & Base. & CCA & Base. & CCA & Base. & CCA & Base. & CCA & Base. & CCA\\
    \midrule
    (20,10)  & 0.94 & 0.95 & 0.94 & 0.96  & 0.95 & 0.95 & 0.95 & 0.96  & 0.96 & 0.95 & 0.96 & 0.96 \\ 

    (20,10,5)  & 0.93 & 0.95 & 0.94 & 0.95  & 0.94 & 0.92 & 0.95 & 0.92  & 0.95 & 0.96 & 0.96 & 0.96 \\ 
 
    (20,20,10,5)  & 0.93 & 0.94 & 0.94 & 0.95  & 0.94 & 0.85 & 0.94 & 0.90  & 0.95 & 0.92 & 0.95 & 0.94 \\ 
    \bottomrule 
\end{tabular}
\end{table}

\begin{table}[H]
\centering
\caption{Fidelity over $\sD_\text{test}$ and $\sD_\text{uni}$ for HELOC dataset}
\label{tab_modelComplexityHeloc}
\begin{tabular}{@{\extracolsep{1pt}} C{\fcolw} DDDD DDDD DDDD }
    \toprule
    \multirow{1}{*}{Target $\rightarrow$} & \multicolumn{4}{c}{(20,10)} & \multicolumn{4}{c}{(20,10,5)} & \multicolumn{4}{c}{(20,20,10,5)}\\[0.1cm]
     $\sD_\text{test}$ & \multicolumn{2}{c}{n=100} &  \multicolumn{2}{c}{n=200} & \multicolumn{2}{c}{n=100} & \multicolumn{2}{c}{n=200} & \multicolumn{2}{c}{n=100} & \multicolumn{2}{c}{n=200}\\ \cmidrule{2-3} \cmidrule{4-5} \cmidrule{6-7} \cmidrule{8-9} \cmidrule{10-11} \cmidrule{12-13}
    Surrogate $\downarrow$ & Base. & CCA & Base. & CCA & Base. & CCA & Base. & CCA & Base. & CCA & Base. & CCA\\
    \midrule
    (20,10)  & 0.90 & 0.94 & 0.91 & 0.95  & 0.90 & 0.94 & 0.92 & 0.95  & 0.98 & 0.99 & 0.98 & 0.99 \\ 
     
    (20,10,5)  & 0.88 & 0.92 & 0.92 & 0.95  & 0.89 & 0.92 & 0.92 & 0.95  & 0.98 & 0.98 & 0.98 & 0.99 \\ 
     
    (20,20,10,5)  & 0.87 & 0.93 & 0.91 & 0.93  & 0.87 & 0.89 & 0.91 & 0.94  & 0.98 & 0.98 & 0.98 & 0.98 \\ 
    \midrule

    \multirow{1}{*}{Target $\rightarrow$} & \multicolumn{4}{c}{(20,10)} & \multicolumn{4}{c}{(20,10,5)} & \multicolumn{4}{c}{(20,20,10,5)}\\[0.1cm]
     $\sD_\text{uni}$ & \multicolumn{2}{c}{n=100} &  \multicolumn{2}{c}{n=200} & \multicolumn{2}{c}{n=100} & \multicolumn{2}{c}{n=200} & \multicolumn{2}{c}{n=100} & \multicolumn{2}{c}{n=200}\\ \cmidrule{2-3} \cmidrule{4-5} \cmidrule{6-7} \cmidrule{8-9} \cmidrule{10-11} \cmidrule{12-13}
    Surrogate $\downarrow$ & Base. & CCA & Base. & CCA & Base. & CCA & Base. & CCA & Base. & CCA & Base. & CCA\\
    \midrule
    (20,10)  & 0.92 & 0.92 & 0.94 & 0.95  & 0.91 & 0.91 & 0.94 & 0.95  & 0.98 & 0.98 & 0.99 & 0.99 \\ 
     
    (20,10,5)  & 0.91 & 0.90 & 0.94 & 0.93  & 0.91 & 0.89 & 0.93 & 0.94  & 0.97 & 0.97 & 0.98 & 0.99 \\ 
     
    (20,20,10,5)  & 0.91 & 0.91 & 0.93 & 0.94  & 0.91 & 0.87 & 0.93 & 0.92  & 0.97 & 0.97 & 0.98 & 0.98 \\ 
    \bottomrule 
\end{tabular}
\end{table}

\subsubsection{Studying alternate counterfactual generating method} 
\label{appdx_CFMethod}
Counterfactuals can be generated such that they satisfy additional desirable properties such as actionability, sparsity and closeness to the data manifold, other than the proximity to the original instance. In this experiment, we observe how counterfactuals with above properties affect the attack performance. HELOC is used as the dataset. Target model has the architecture (20, 30, 10) and the architecture of the surrogate model is (10, 20).

To generate actionable counterfactuals, we use \acrfull{DiCE} by \citet{DiCE} with the first four features, i.e., ``estimate\textunderscore of\textunderscore risk'', ``months\textunderscore since\textunderscore last\textunderscore trade'', ``average\textunderscore duration\textunderscore of\textunderscore resolution'', and ``number\textunderscore of\textunderscore satisfactory\textunderscore trades'' kept unchanged. The diversity factor of \acrshort{DiCE} generator is set to 1 in order to obtain only a single counterfactual for each query. Sparse counterfactuals are obtained by the same MCCF generator used in other experiments, but now with $L_1$ norm as the cost function $c(\boldsymbol{x}, \boldsymbol{w})$. Counterfactuals from the data manifold (i.e., realistic counterfactuals, denoted by 1-NN) are generated using a 1-Nearest-Neighbor algorithm. We use ROAR \citep{roar} and C-CHVAE \citep{cchvae} to generate robust counterfactuals. Table \ref{tab_cfMethodResults} summarizes the performance of the attack. Fig. \ref{fig_cfTypeHistograms} shows the distribution of the counterfactuals generated using each method w.r.t. the decision boundary of the target model. We observe that the sparse, realistic, and robust counterfactuals have a tendency to lie farther away from the decision boundary, within the favorable region, when compared to the closest counterfactuals under $L_2$ norm.

\subsubsection{Comparison with DualCFX \cite{wangDualCF}}
\label{appdx_dualCFXComparison}
\citet{wangDualCF} is one of the few pioneering works studying the effects of counterfactuals on model extraction, which proposes the interesting idea of using counterfactuals of counterfactuals to mitigate the decision boundary shift. This requires the \acrshort{API} to provide counterfactuals for queries originating from both sides of the decision boundary. However, the primary focus of our work is on the one-sided scenario where an institution might be giving counterfactuals only to the rejected applicants to help them get accepted, but not to the accepted ones. Hence, a fair comparison cannot be achieved between CCA and the strategy proposed in \citet{wangDualCF} in the scenario where only one-sided counterfactuals are available.

Therefore, in the two-sided scenario, we compare the performance of CCA with the DualCFX strategy proposed in \citet{wangDualCF} under two settings: 
\begin{enumerate}
    \item only one sided counterfactuals are available for CCA (named CCA1)
    \item CCA has all the data that DualCFX has (named CCA2)
\end{enumerate}
We also include another baseline (following \citet{ulrichModelExtract}) for the two-sided scenario where the models are trained only on query instances and counterfactuals, but not the counterfactuals of the counterfactuals. Results are presented in Table \ref{tab_dualCFXComparison}. Note that even for the same number of initial query instances, the total number of actual training instances change with the strategy being used (CCA1 < Baseline < DualCFX = CCA2 – e.g.: queries+CFs for the baseline but queries+CFs+CCFs for DualCFX).

\def\colw{0.05\textwidth}
\begin{table}[ht]
\centering
\caption{Comparison with DualCFX. Legend: Base.=Baseline model based on \citep{ulrichModelExtract}, Dual=DualCFX, CCA1=CCA with one-sided counterfactuals, CCA2=CCA with counterfactuals from both sides.}
\label{tab_dualCFXComparison}
\small
\begin{tabular}{@{\extracolsep{5pt}}cc cccccccc}
    \toprule
    \multicolumn{2}{c}{} & \multicolumn{8}{c}{Architecture known (model 0)} \\[0.1cm]
    \multicolumn{1}{c}{Dataset} & Query size & \multicolumn{4}{c}{$\sD_\text{test}$} &  \multicolumn{4}{c}{$\sD_\text{uni}$}\\ 
    \cmidrule{3-6} \cmidrule{7-10}
    \multicolumn{2}{c}{} & Base. & Dual. & CCA1 & CCA2 & Base. & Dual. & CCA1 & CCA2\\
    \hline
    \multirow{2}{*}{DCCC} & n=100 & 0.95 & 0.99 & 0.94 & 0.99 & 0.90 & 0.95 & 0.92 & 0.97  \\
    & n=200  & 0.96 & 0.99 & 0.98 & 0.99 & 0.90 & 0.96 & 0.95 & 0.98 \\
    \hline
    \multirow{2}{*}{HELOC} & n=100 & 0.94 & 0.97 & 0.90 & 0.98 & 0.91 & 0.98 & 0.84 & 0.98 \\
    & n=200  & 0.96 & 0.98 & 0.92 & 0.98 & 0.93 & 0.98 & 0.89 & 0.99 \\
    \midrule
    \multicolumn{2}{c}{} & \multicolumn{8}{c}{Architecture unknown (model 1)} \\[0.1cm]
    \multicolumn{1}{c}{Dataset} & Query size & \multicolumn{4}{c}{$\sD_\text{test}$} &  \multicolumn{4}{c}{$\sD_\text{uni}$}\\ 
    \cmidrule{3-6} \cmidrule{7-10}
    \multicolumn{2}{c}{} & Base. & Dual. & CCA1 & CCA2 & Base. & Dual. & CCA1 & CCA2\\
    \hline
    \multirow{2}{*}{DCCC} & n=100 & 0.92 & 0.98 & 0.93 & 0.98 & 0.88 & 0.92 & 0.89 & 0.93 \\
    & n=200  & 0.96 & 0.99 & 0.96 & 0.99 & 0.89 & 0.94 & 0.94 & 0.96 \\
    \hline
    \multirow{2}{*}{HELOC} & n=100 & 0.92 & 0.91 & 0.90 & 0.96 & 0.88 & 0.92 & 0.84 & 0.96 \\
    & n=200  & 0.95 & 0.92 & 0.91 & 0.97 & 0.93 & 0.94 & 0.88 & 0.97 \\
    \bottomrule
\end{tabular}
\end{table}

\subsubsection{Studying other machine learning models}
\label{appdx_otherMLModels}
We explore the effectiveness of the proposed attack when the target model is no longer a neural network classifier. The surrogate models are still neural networks with the architectures (20, 10) for model 0 and (20, 10, 5) for model 1. A random forest classifier with 100 estimators and a linear regression classifier, trained on Adult Income dataset are used as the targets. Ensemble size $S$ used is 20. Results are shown in Fig. \ref{fig_effectOtherTargetTypes}, where the proposed attack performs better or similar to the baseline attack.
\begin{figure}[ht]
    \centering
    \includegraphics[width=0.9\textwidth]{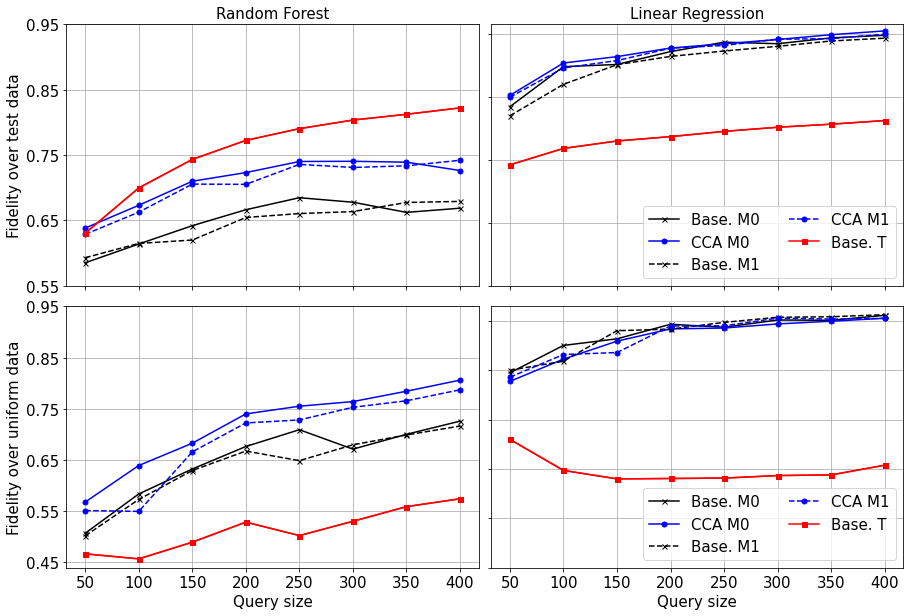}
    \caption{Performance of the attack when the target model is not a neural network. Surrogates M0 and M1 are neural networks with the architectures (20,10) and (20,10,5) respectively. Baseline T is a surrogate model from the same class as the target model.}
    \label{fig_effectOtherTargetTypes}
\end{figure}

\newpage
\subsubsection{Studying effect of unbalanced $\sD_\text{attack}$} 
\label{appdx_unbalancedAttackSet}
In all the other experiments, the attack dataset $\sD_\text{attack}$ used by the adversary is sampled from a class-wise balanced dataset. In this experiment we explore the effect of querying using an unbalanced $\sD_\text{attack}$. Model architectures used are (20, 10) for the target model and surrogate model 0, and (20, 10, 5) for surrogate model 1. While the training set of the teacher and the test set of both the teacher and the surrogates were kept constant, the proportion of the samples in the attack set $\mathbb{D}_{\text{attack}}$ was changed. In the first case, examples from class $y=1$ were dominant (80\%) and in the second case, the majority of the examples were from class $y=0$ (80\%). The results are shown in Fig. \ref{fig_unbalancedHeloc}.
\begin{figure}[ht]
    \centering
    \includegraphics[width=\textwidth]{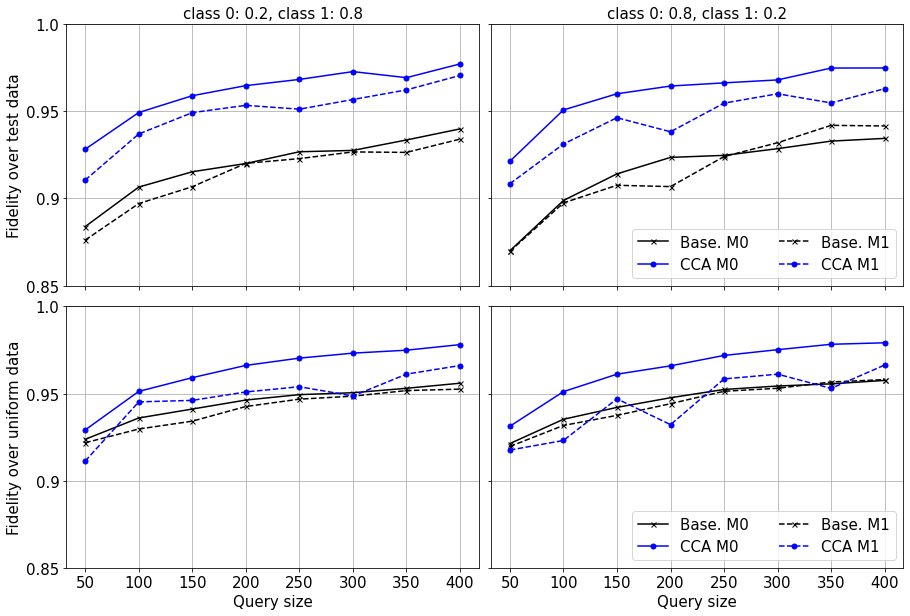}
    \caption{Results corresponding to the HELOC dataset with queries sampled from biased versions of the dataset (i.e., a biased $\sD_{\text{attack}})$. The version on the left uses a $\sD_{\text{attack}}$ with 20\% and 80\% examples from classes $y=0$ and $y=1$, respectively. The version on the right was obtained with a $\sD_{\text{attack}}$ comprising of 80\% and 20\% examples from classes $y=0$ and $y=1$, respectively.}
    \label{fig_unbalancedHeloc}
\end{figure}

\subsubsection{Studying alternate loss functions}
\label{appdx_alternateLosses}
We explore using binary cross-entropy loss function directly with labels 0, 1 and 0.5 in place of the proposed loss. Precisely, the surrogate loss is now defined as 
\begin{equation}
    L(\Tilde{m}, y) = -y(\boldsymbol{x})\log\left(\Tilde{m}(\boldsymbol{x})\right) - (1-y(\boldsymbol{x})) \log\left(1-\Tilde{m}(\boldsymbol{x})\right)
\end{equation}
which is symmetric around 0.5 for $y(\boldsymbol{x})=0.5$. Two surrogate models are observed, with architectures (20, 10) for model 0 and (20, 10, 5) for model 1. The target model's architecture is similar to that of model 0. The ensemble size is $S=20$.

The results (in Fig. \ref{fig_alternateLosses}) indicate that the binary cross-entropy loss performs worse than the proposed loss. The reason might be the following: As the binary cross-entropy loss is symmetric around 0.5 for counterfactuals, it penalizes the counterfactuals that are farther inside the favorable region. This in turn pulls the surrogate decision boundary towards the favorable region more than necessary, causing a decision boundary shift.  

\begin{figure}[ht]
    \centering
    \includegraphics[width=\textwidth]{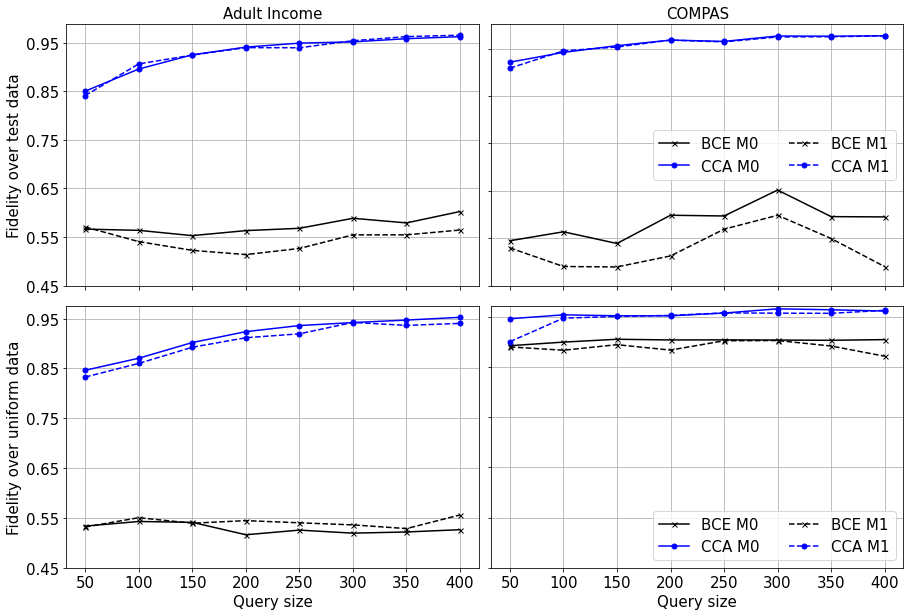}
    \caption{Performance of binary cross-entropy loss with labels 0, 0.5 and 1. Black lines corresponding to binary cross entropy (BCE) loss and blue lines depict the performance of the CCA loss.}
    \label{fig_alternateLosses}
\end{figure}

\newpage
\subsection{Experiments for verifying Theorem \ref{thm_randomPolytopeComplexity}} 
\label{appdx_thmPolytopeExp}
This experiment includes approximating a spherical decision boundary in the first quadrant of a $d-$dimensional space. The decision boundary is a portion of a sphere with radius 1 and the origin at $(1,1,\dots,1)$. The input space is assumed to be normalized, and hence, restricted to the unit hypercube. See Section \ref{sec_closestCFAttack} for a description of the attack strategy. Fig. \ref{fig_sphericalComplexity2D} presents a visualization of the experiment in the case where the dimensionality $d=2$. Fig. \ref{fig_sphericalComplexity} presents a comparison of theoretical and empirical query complexities for higher dimensions. Experiments agree with the theoretical upper-bound.
\begin{figure}[ht]
    \centering
    \includegraphics[width=0.4\textwidth]{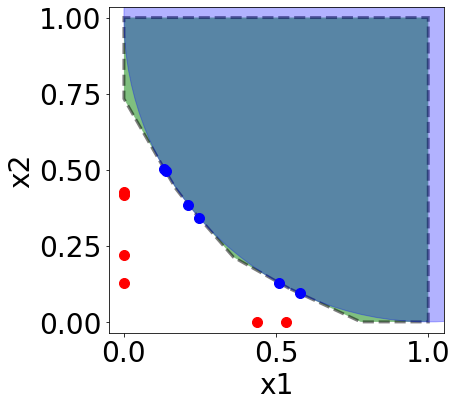}
    \caption{Synthetic attack for verifying Theorem \ref{thm_randomPolytopeComplexity} in the 2-dimensional case. Red dots represent queries and blue dots are the corresponding closest counterfactuals. Dashed lines indicate the boundary of the polytope approximation.}
    \label{fig_sphericalComplexity2D}
\end{figure}
\begin{figure}[ht]
    \centering
    \includegraphics[width=0.5\textwidth]{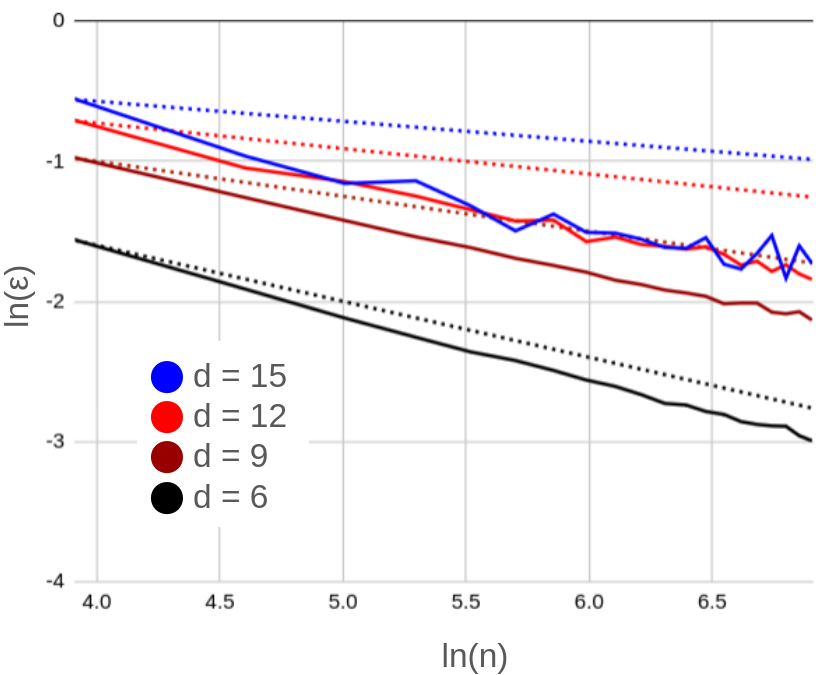}
    \caption{Verifying Theorem \ref{thm_randomPolytopeComplexity}: Dotted and solid lines indicate the theoretical and empirical rates of convergence. }
    \label{fig_sphericalComplexity}
\end{figure}

\end{document}